\documentclass{article}

\usepackage{microtype}
\usepackage{graphicx}
\usepackage{subcaption}
\usepackage{booktabs} % for professional tables
\usepackage{wrapfig}

\usepackage{hyperref}

\usepackage[accepted]{icml2026}

\usepackage{amsmath}
\usepackage{amssymb}
\usepackage{mathtools}
\usepackage{amsthm}
\usepackage{bbm}
\usepackage{enumitem}
\usepackage{thm-restate}
\usepackage{titletoc} %new

\usepackage[capitalize,noabbrev]{cleveref}

\theoremstyle{plain}
\newtheorem{theorem}{Theorem}[section]

\newtheorem{lemma}[theorem]{Lemma}
\newtheorem{observation}[theorem]{Observation}
\newtheorem{corollary}[theorem]{Corollary}
\theoremstyle{definition}

\theoremstyle{remark}

\usepackage[textsize=tiny]{todonotes}

\DeclareMathOperator*{\argmin}{argmin}
\usepackage{bm}

\icmltitlerunning{An Odd Estimator for Shapley Values}

\newcommand{\even}{\text{even}}

\usepackage{siunitx}
\usepackage[table]{xcolor}        % for \cellcolor in tables
\definecolor{gold}{rgb}{1.0, 0.84, 0.0}
\definecolor{silver}{rgb}{0.75, 0.75, 0.75}
\definecolor{bronze}{rgb}{0.8, 0.5, 0.2}

\begin{document}

\twocolumn[
  \icmltitle{An Odd Estimator for Shapley Values}

  % It is OKAY to include author information, even for blind submissions: the
  % style file will automatically remove it for you unless you've provided
  % the [accepted] option to the icml2026 package.

  % List of affiliations: The first argument should be a (short) identifier you
  % will use later to specify author affiliations Academic affiliations
  % should list Department, University, City, Region, Country Industry
  % affiliations should list Company, City, Region, Country

  % You can specify symbols, otherwise they are numbered in order. Ideally, you
  % should not use this facility. Affiliations will be numbered in order of
  % appearance and this is the preferred way.
  \icmlsetsymbol{equal}{*}

  \begin{icmlauthorlist}
    \icmlauthor{Fabian Fumagalli}{lmu,mcml}
    \icmlauthor{Landon Butler}{berkeley}
    \icmlauthor{Justin Singh Kang}{berkeley}
    \icmlauthor{Kannan Ramchandran}{berkeley}
    \icmlauthor{R. Teal Witter}{cmc}
  \end{icmlauthorlist}

  \icmlaffiliation{lmu}{Department of Statistics, LMU Munich}
  \icmlaffiliation{mcml}{MCML}
  \icmlaffiliation{berkeley}{Department of Electrical Engineering and Computer Science, UC Berkeley}
  \icmlaffiliation{cmc}{Mathematical Sciences Department, Claremont McKenna College}

  \icmlcorrespondingauthor{Fabian Fumagalli}{f.fumagalli@lmu.de}
  \icmlcorrespondingauthor{R. Teal Witter}{rtealwitter@cmc.edu}

  % You may provide any keywords that you find helpful for describing your
  % paper; these are used to populate the "keywords" metadata in the PDF but
  % will not be shown in the document
  \icmlkeywords{Explainable AI, Shapley values}

  \vskip 0.3in
]

% this must go after the closing bracket ] following \twocolumn[ ...

% This command actually creates the footnote in the first column listing the
% affiliations and the copyright notice. The command takes one argument, which
% is text to display at the start of the footnote. The \icmlEqualContribution
% command is standard text for equal contribution. Remove it (just {}) if you
% do not need this facility.

% Use ONE of the following lines. DO NOT remove the command.
% If you have no special notice, KEEP empty braces:
\printAffiliationsAndNotice{}  % no special notice (required even if empty)
% Or, if applicable, use the standard equal contribution text:
% \printAffiliationsAndNotice{\icmlEqualContribution}

\begin{abstract}
The Shapley value is a ubiquitous framework for attribution in machine learning, encompassing feature importance, data valuation, and causal inference. However, its exact computation is generally intractable, necessitating efficient approximation methods. While the most effective and popular estimators leverage the \textit{paired sampling} heuristic to reduce estimation error, the theoretical mechanism driving this improvement has remained opaque. In this work, we provide an elegant and fundamental justification for paired sampling: we prove that the Shapley value depends \textit{exclusively} on the odd component of the set function, and that paired sampling orthogonalizes the regression objective to filter out the irrelevant even component. Leveraging this insight, we propose OddSHAP, a novel consistent estimator that performs polynomial regression solely on the odd subspace. By utilizing the Fourier basis to isolate this subspace and employing a proxy model to identify high-impact interactions, OddSHAP overcomes the combinatorial explosion of higher-order approximations. Through an extensive benchmark, we find that OddSHAP achieves state-of-the-art estimation accuracy at larger sampling budgets.
\end{abstract}

\begin{figure*}[ht]
    \centering
    \includegraphics[width=.8\linewidth]{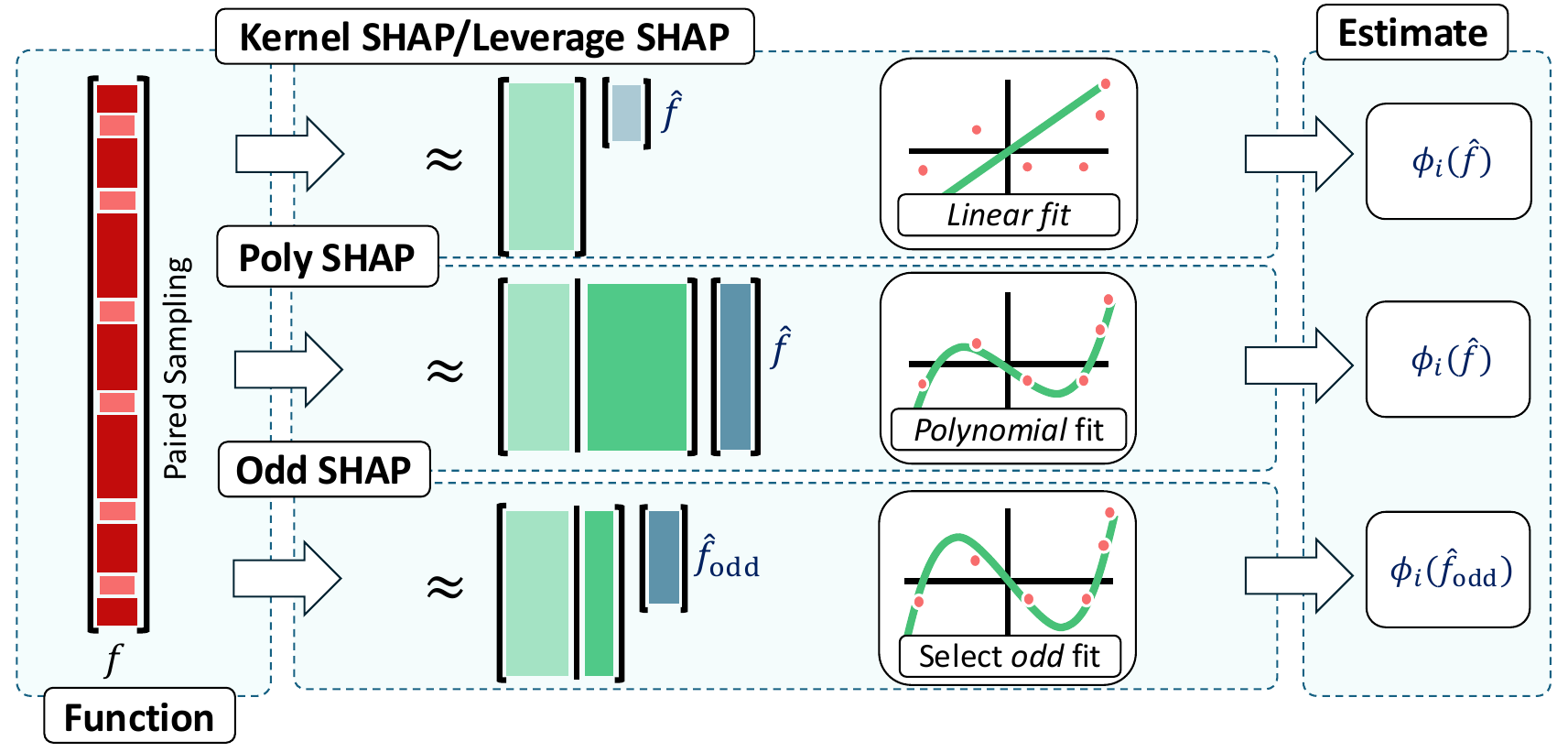}
%    \vspace{-20pt}
    \vspace{-2pt}
    \caption{OddSHAP balances expressive power with efficiency by only fitting the \textit{odd} component of the value function. }
    \vspace{-5pt}
    \label{fig:oddshap}
\end{figure*}

\section{Introduction}

As machine learning models are increasingly deployed in high-stakes domains, from healthcare and finance to criminal justice, the need to understand their decision-making processes has become paramount \citep{DoshiVelez.2017}.
To address the opacity of these complex systems, the Shapley value has emerged as the \textit{de facto} standard framework for explanation.
Its rigorous game-theoretic foundation has made it ubiquitous across distinct tasks, including attributing model predictions to input features \citep{Strumbelj.2010, Lundberg.2017}, quantifying the contribution of training points to model performance \citep{Ghorbani.2019}, and estimating the causal effect of variables in structural causal models \citep{Janzing.2020,Heskes.2020}.

Let $d$ denote the number of players, indexed by $[d] = \{1, \dots, d\}$. We define a value function $f: 2^{[d]} \to \mathbb{R}$ that assigns a scalar payoff $f(S)$ to every coalition $S \subseteq [d]$. This abstraction unifies various attribution tasks; depending on the context, $f(S)$ may represent:
The prediction of a model masked to only include features in $S$.
The loss of a model trained exclusively on the data points in $S$.
The expected outcome of a structural causal model under an intervention on the variables in $S$.

In all these settings and more, the value function $f$ implicitly encodes complex higher-order interactions across the $2^d$ possible subsets.
To disentangle these dynamics and attribute the total value to individual players, we turn to the Shapley value.
It offers a principled attribution framework, uniquely characterized as the only method satisfying the axioms of \textit{efficiency}, \textit{symmetry}, \textit{dummy}, and \textit{linearity} \citep{Shapley.1953}.
For any player $i \in [d]$, the Shapley value of $f$ is defined as the weighted average of marginal contributions:
\begin{align}\label{eq_def_shapley}
    \phi_i(f) = \sum_{S \subseteq [d] \setminus \{i\} } p_{|S|}[f(S \cup \{i \} ) - f(S)],
\end{align}
where $p_{\ell} = \frac1{d} \binom{d-1}{\ell}^{-1}$.
Computing \cref{eq_def_shapley} directly involves summing exponentially many terms, rendering exact evaluation intractable for general functions $f$.

This work specifically targets the general, model-agnostic paradigm, operating under the assumption that the target function $f$ possesses no known or accessible architecture. In this model-agnostic paradigm, the Shapley value must be estimated for medium to large values of $d$. In contrast, structural information about $f$ can be leveraged to make precise calculations mathematically tractable for all $d$ \citep{Rozemberczki.2022}. When such structural blueprints are available, researchers have successfully bypassed approximation entirely by designing highly tailored, exact algorithms. Examples of these bespoke solutions span multiple architectures, including K-nearest neighbor approaches for data valuation \citep{jia2019towards,wang2023privacy,wang2024efficient}, graph neural networks \citep{muschalik2025exact}, Gaussian processes \citep{mohammadi2025exact}, and linear models \citep{Strumbelj.2014}. The decision tree domain has similarly benefited from specialized exact methods, most notably TreeSHAP \citep{Lundberg.2020,yu2022linear} and its subsequent adaptations for mapping feature interactions \citep{zern2023interventional,Muschalik.2024}.

\subsection{Surrogate Estimators}

The most effective estimators typically rely on function class approximations: they fit a \textit{structured} surrogate model $\hat{f} \approx f$ and return its exact Shapley values, $\phi_i(\hat{f})$, as the estimate.
Crucially, the function class must be structured enough that exactly computing the Shapley values of $\hat{f}$ is efficient.

KernelSHAP \citep{Lundberg.2017} and LeverageSHAP \citep{Musco.2025} can be viewed through this lens; they exploit a specific weighted linear regression problem whose coefficients recover the exact Shapley values \citep{charnes1988extremal}.
Consequently, these estimators satisfy a \textit{consistency} property: as the sample budget approaches the full coalition space ($m \to 2^d$), they converge to the true Shapley values. 
Although practical budgets $m$ are often orders of magnitude smaller than $2^d$, this theoretical guarantee appears to be a crucial driver of estimation performance.

Recent advancements have sought to extend this regression framework to more expressive function classes.
Methods such as FourierSHAP \citep{gorji2025shap}, ProxySPEX \citep{butler.2025}, and RegressionMSR \citep{witter2025regressionadjusted} replace the linear basis with sparse Fourier representations or gradient boosted trees (GBTs). If the model class is correctly specified—i.e., if $f$ is truly an exact sparse Fourier function or a tree ensemble—the best approximation $\hat{f}$ coincides with $f$, ensuring that $\phi_i(\hat{f})$ is consistent.

However, in the general setting, $f$ rarely falls within these restricted classes. 
Consequently, simple proxy-based estimates are subject to model misspecification bias; even with an infinite query budget, $\phi_i(\hat{f})$ may not converge to $\phi_i(f)$. 
RegressionMSR addresses this limitation by introducing a residual adjustment term that makes its final estimate consistent, achieving significantly higher accuracy in the large-$m$ regime compared to unadjusted proxy methods.

Recent work has extended the linear regression strategy of KernelSHAP and LeverageSHAP to polynomial regression \cite{fumagalli2026polyshap}.
Like both linear methods, the resulting PolySHAP estimator is consistent: \citet{fumagalli2026polyshap} show that the Shapley values of the best \textit{polynomial} approximation to $f$, under a specific weighting and constraint, are the Shapley values of $f$ itself.
While PolySHAP demonstrates superior performance by capturing higher-order interactions, it faces a scalability bottleneck; the computational cost of the regression scales quadratically with the size of the polynomial basis.

\subsection{Our Work}

\begin{table*}[t!]
    \centering
    \caption{Average MSE for Shapley value estimators with $m \approx 100d$. OddSHAP achieves the lowest average rank.}
    \resizebox{\linewidth}{!}{%
\begin{tabular}{lccccccccc}
\hline
 & \textbf{DistilBERT ($d=14$)} & \textbf{Estate ($d=15$)} & \textbf{ViT16 ($d=16$)} & \textbf{Cancer ($d=30$)} & \textbf{IL60 ($d=60$)} & \textbf{CG60 ($d=60$)} & \textbf{NHANES ($d=79$)} & \textbf{Crime ($d=101$)} & \textbf{Avg. Rank} \\
$m$ & $1440$ & $1722$ & $1705$ & $2281$ & $5521$ & $5521$ & $5864$ & $11126$ & -- \\
\hline
\textbf{MSR} & $7.5\times 10^{-4}$ & $2.8\times 10^{-2}$ & $1.2\times 10^{-4}$ & $2.2\times 10^{-2}$ & $3.5\times 10^{-3}$ & $2.1\times 10^{-3}$ & $4.7\times 10^{-2}$ & $3.6\times 10^{1}$ & 7.75 \\
\textbf{SVARM} & $3.7\times 10^{-4}$ & $3.3\times 10^{-2}$ & $5.7\times 10^{-5}$ & $3.5\times 10^{-3}$ & $2.2\times 10^{-3}$ & $9.7\times 10^{-4}$ & $4.0\times 10^{-2}$ & $1.5\times 10^{1}$ & 6.75 \\
\textbf{PermutationSampling} & $6.2\times 10^{-4}$ & $5.3\times 10^{-3}$ & $1.2\times 10^{-4}$ & $1.1\times 10^{-4}$ & $1.3\times 10^{-4}$ & $1.4\times 10^{-4}$ & $3.5\times 10^{-3}$ & $2.7\times 10^{0}$ & 5.62 \\
\textbf{LeverageSHAP} & \cellcolor{bronze!60}$7.7\times 10^{-5}$ & $3.4\times 10^{-4}$ & \cellcolor{bronze!60}$3.5\times 10^{-5}$ & \cellcolor{bronze!60}$3.2\times 10^{-5}$ & \cellcolor{bronze!60}$2.6\times 10^{-5}$ & \cellcolor{silver!60}$2.5\times 10^{-5}$ & \cellcolor{bronze!60}$6.2\times 10^{-4}$ & \cellcolor{bronze!60}$7.5\times 10^{-1}$ & 3.25 \\
\textbf{PolySHAP-3} & $8.0\times 10^{-5}$ & \cellcolor{gold!60}$3.2\times 10^{-7}$ & $3.8\times 10^{-5}$ & $4.0\times 10^{-5}$ &  &  &  &  & 3.25 \\
\textbf{RegressionMSR} & \cellcolor{gold!60}$3.1\times 10^{-5}$ & $1.3\times 10^{-5}$ & \cellcolor{gold!60}$1.0\times 10^{-5}$ & $3.0\times 10^{-4}$ & \cellcolor{silver!60}$9.7\times 10^{-6}$ & \cellcolor{bronze!60}$7.6\times 10^{-5}$ & \cellcolor{silver!60}$2.4\times 10^{-4}$ & \cellcolor{silver!60}$5.6\times 10^{-1}$ & 2.62 \\
\textbf{Proxy} & $3.6\times 10^{-4}$ & $3.3\times 10^{-5}$ & $4.6\times 10^{-5}$ & $4.2\times 10^{-4}$ & $4.9\times 10^{-5}$ & $2.9\times 10^{-4}$ & $7.0\times 10^{-4}$ & $5.3\times 10^{0}$ & 5.00 \\
\textbf{FFD-RD} & $1.6\times 10^{-3}$ & \cellcolor{bronze!60}$1.8\times 10^{-6}$ & $5.3\times 10^{-4}$ & \cellcolor{gold!60}$6.4\times 10^{-7}$ &  &  &  &  & 5.75 \\
\textbf{FFD-RD-Corrected} & $1.8\times 10^{-3}$ &  & $6.8\times 10^{-5}$ &  &  &  &  &  & 8.50 \\
\textbf{FourierSHAP} & $1.9\times 10^{-2}$ & $3.0\times 10^{-2}$ & $4.9\times 10^{-3}$ & $7.7\times 10^{-2}$ & $4.1\times 10^{-3}$ & $8.5\times 10^{-3}$ & $1.7\times 10^{-1}$ & $1.9\times 10^{2}$ & 9.12 \\
\textbf{OddSHAP} & \cellcolor{silver!60}$4.6\times 10^{-5}$ & \cellcolor{silver!60}$5.1\times 10^{-7}$ & \cellcolor{silver!60}$1.3\times 10^{-5}$ & \cellcolor{silver!60}$4.2\times 10^{-6}$ & \cellcolor{gold!60}$1.6\times 10^{-6}$ & \cellcolor{gold!60}$6.2\times 10^{-6}$ & \cellcolor{gold!60}$5.8\times 10^{-5}$ & \cellcolor{gold!60}$1.3\times 10^{-1}$ & 1.50 \\
\hline
\end{tabular}}
    \label{table:small_table}
\end{table*}

Our work aims to circumvent this complexity barrier while retaining the expressive power and consistency of higher-order approximations. To achieve this, we exploit a fundamental structural property: the decomposition of the value function into odd and even components.

A set function $f$ is \textit{odd} if $f(S) = -f(S^c)$ and \textit{even} if $f(S) = f(S^c)$ for all $S \subseteq [d]$, where $S^c := [d] \setminus S$ is the complement.
Every $f$ admits a unique decomposition into odd and even components, $f = f_{\text{odd}} + f_{\text{even}}$, with
\begin{align*}
f_{\text{odd}}(S) = \frac{f(S) - f(S^c)}{2};
\quad 
f_{\text{even}}(S) = \frac{f(S) + f(S^c)}{2}.
\end{align*}
Crucially, the Shapley value depends \textit{exclusively} on the odd component of the value function:
\begin{equation*}
\phi_i(f) =\phi_i(f_{\text{odd}}) \quad \forall i\in [d].
\end{equation*}
This insight suggests a targeted estimation strategy: rather than approximating the full function $f$, which entails learning potentially complex but irrelevant even structures, we can restrict our approximation to the odd component, $\hat{f}_{\text{odd}}~\approx~f_{\text{odd}}$, and directly compute $\phi_i(\hat{f}_{\text{odd}})$.

Remarkably, this insight clarifies the theoretical mechanism behind \textit{paired sampling}, a popular heuristic in which every sampled coalition $S$ is paired with its complement $S^c$.
While empirical studies show paired sampling greatly improves estimation performance \citep{Covert.2021, mitchell2022sampling, 10.1007/978-3-032-08324-1_9}, the underlying structural mechanism driving this has remained opaque. We show that paired sampling implicitly performs an even-odd decomposition, isolating the relevant signal.
\citet{Mayer.2025} proved that KernelSHAP with paired sampling exactly recovers Shapley values for value functions with interactions of at most degree 2, but left the analysis of higher-order interactions open.
Subsequently, \citet{fumagalli2026polyshap} demonstrated that a first-order polynomial approximation with paired sampling is equivalent to a second-order approximation, conjecturing a general pattern: that for odd $k$, an order-$k$ fit with paired sampling is equivalent to order-$(k+1)$.
We answer this conjecture in the affirmative and provide a fundamental explanation: paired sampling orthogonalizes the regression problem, separately fitting the odd and even components of $f$.
Since the Shapley value of the even component vanishes, paired sampling effectively isolates the signal relevant to the Shapley value.

Beyond elucidating the mechanics of paired sampling, we leverage this decomposition to design a more precise Shapley value estimator.
Our objective is to reduce the computational complexity of polynomial regression by restricting the model solely to odd terms.
However, the standard \textit{unanimity basis}—employed by KernelSHAP, LeverageSHAP, and PolySHAP—is ill-suited for this task, as its individual basis functions do not cleanly decouple into odd and even components.
To overcome this, we reformulate the regression problem in the \textit{Fourier basis}.
This basis offers a decisive structural advantage: a basis function is odd if and only if its interaction order is odd.
Specifically, Fourier terms of odd cardinality are odd functions, while those of even cardinality are even. 
Consequently, we can efficiently isolate the relevant signal by restricting the polynomial regression exclusively to odd-order Fourier terms.

Even within the restricted odd subspace, the number of candidate terms remains prohibitive; for example, there are $\binom{d}{3}$ distinct third-order interactions.
Efficiently identifying the significant few is a challenge addressed by recent interaction detection methods such as SPEX \cite{kang.2025}, FourierSHAP \cite{gorji2025shap}, and ProxySPEX \cite{butler.2025}, which learn sparse approximations in the Fourier domain.
Leveraging the efficiency of fitting decision trees, we incorporate ProxySPEX as a selection subroutine.

Our proposed algorithm, \textbf{OddSHAP}, proceeds in two stages: first, it fits a GBT to the sampled coalitions to identify the dominant odd-order Fourier coefficients; second, it solves the polynomial regression problem restricted to this selected basis. 
Unlike pure proxy methods, OddSHAP produces a consistent estimator that retains the expressive power of higher-order polynomials.
As demonstrated in Figure \ref{fig_oddshap_mse} and Table \ref{table:small_table}, this approach achieves state-of-the-art estimation accuracy on standard benchmarks given sufficient budgets.

Our contributions can be summarized as follows: 
\begin{enumerate} 
\item \textbf{Theoretical Unification of Paired Sampling:} Leveraging the insight that Shapley values depends strictly on the odd component of the value function, we provide the first general theoretical justification for the widely used paired sampling heuristic, proving that it implicitly decouples the regression problem into independent odd and even objectives.
This result generalizes prior findings limited to low-order interactions, revealing the structural mechanism that makes this heuristic so successful in practice.
\item \textbf{The {OddSHAP} Estimator:} We propose {OddSHAP}, a consistent estimator that performs polynomial regression exclusively on a sparse selection of odd-order Fourier basis functions.
By filtering out irrelevant even components and identifying high-impact interactions via a proxy model, {OddSHAP} retains the expressive power of higher-order polynomials without incurring their prohibitive combinatorial cost.
\item \textbf{State-of-the-Art Performance:} Through extensive experiments on standard benchmarks, we demonstrate that {OddSHAP} achieves state-of-the-art estimation accuracy under larger budgets.
It significantly outperforms higher-order {PolySHAP} in terms of computational efficiency and surpasses the prior state-of-the-art method, {RegressionMSR}, in estimation accuracy.
\end{enumerate}

\section{Prior Work on Regression Formulations}

In this section, we review the notation and results needed to present our work.
The foundational mechanism underpinning KernelSHAP, and its improved variant LeverageSHAP, is a \textit{weighted least squares} regression.
Both strategies are grounded by a rigorous theoretical guarantee: \citet{charnes1988extremal} proved that the coefficients of the optimal constrained weighted linear approximation $\hat{f}$ exactly recover the Shapley values of the original function $f$, i.e., $\phi_i(f) = \phi_i(\hat{f})$.

These frameworks operate using the \textit{unanimity basis}, often referred to as the Möbius basis \citep{Rota.1964,Grabisch.2016} in combinatorial contexts, where each basis function corresponds to a specific coalition $T \subseteq [d]$.
Formally, the unanimity basis functions
$u_T : 2^{[d]} \to \mathbb{R}$ are defined as
\begin{align*}   
    u_T(S) = \mathbbm{1}[T \subseteq S].
\end{align*}
Because a full basis expansion involves $2^d$ terms, fitting the complete set is computationally intractable. Therefore, we restrict our focus to a sparse subset of interactions, denoted by the collection $\mathcal{T} \subseteq 2^{[d]}$.

We define $\mathcal{F}_M(f, \mathcal{T})$ as the class of functions spanned by the unanimity basis restricted to $\mathcal{T}$, subject to boundary constraints. Specifically, every approximation $g \in \mathcal{F}_M(f, \mathcal{T})$ is constrained to match the ground truth $f$ on the empty and full coalitions, ensuring the efficiency property.
Formally, 
\begin{align*}
    \mathcal{F}_M(f, \mathcal{T}) \!=\!
    \left\{
    g =\!\! \sum_{T \in \mathcal{T}} u_T \alpha_{T} \!:  
    g(S) \!=\! f(S), S\!\in\! \{\emptyset, [d]\}
    \right\}\!.
\end{align*}

Consider the collection of empty and singleton-coalitions $\mathcal{T}_{\leq 1} = \{T \subseteq [d] : |T| \leq 1\}$.
Restricting the function to this class allows exact recovery of the Shapley values via a specific weighted and constrained linear regression problem.

\begin{theorem}[Linear Regression \cite{charnes1988extremal}]
\label{thm:linearregression}
    Let $\hat{f}$ be the best approximation in $\mathcal{F}_M(f, \mathcal{T}_{\leq 1})$ to $f$ i.e.,
    \begin{align}
        \label{eq:linearproblem}
        \hat{f} = \argmin_{g \in \mathcal{F}_M(f, \mathcal{T}_{\leq 1})}
        \sum_{S \subseteq [d]: 0 < \vert S \vert < d} w_{|S|}
        \left(
        f(S)-g(S)
        \right)^2,
    \end{align}
    where $w_{\ell} = \frac1{\ell(d-\ell)\binom{d}{\ell}}$.
    Then $\phi_i(f) = \phi_i(\hat{f})$ for all $i \in [d]$.
\end{theorem}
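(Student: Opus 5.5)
We reduce the constrained weighted least-squares problem \eqref{eq:linearproblem} to its first-order optimality conditions and verify that the choice $\alpha_i = \phi_i(f)$ satisfies them. Writing $g = \alpha_\emptyset + \sum_{i} \alpha_i u_{\{i\}}$, the constraint $g(\emptyset)=f(\emptyset)$ fixes $\alpha_\emptyset = f(\emptyset)$. The constraint $g([d]) = f([d])$ then becomes the single linear condition $\sum_i \alpha_i = f([d]) - f(\emptyset)$. Since $\phi_i(g)=\alpha_i$ for an additive $g$ (marginal contributions of $g$ are constant), it suffices to show that the minimizer has $\alpha_i = \phi_i(f)$.

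The objective is a strictly convex quadratic in $\alpha = (\alpha_1,\dots,\alpha_d)$ on an affine hyperplane, because the matrix $A = \sum_{0<|S|<d} w_{|S|} \mathbbm{1}_S \mathbbm{1}_S^\top$ is positive definite. So the minimizer is unique and characterized by the Lagrange conditions
\[
\sum_{0<|S|<d,\, S\ni i} w_{|S|}\bigl(f(S)-f(\emptyset)-\textstyle\sum_{j\in S}\alpha_j\bigr) = \lambda \quad \text{for all } i\in[d].
\]
By symmetry of the weights, $A = a I + b \mathbf{1}\mathbf{1}^\top$ with $a = \sum_{\ell} w_\ell \binom{d-2}{\ell-1}$. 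On the constraint set $\mathbf{1}^\top\alpha$ is fixed, so the $b$-part is absorbed into $\lambda$. The conditions therefore reduce to
\[
\alpha_i = \frac{1}{a}\Bigl(\sum_{S\ni i} w_{|S|}\,(f(S)-f(\emptyset)) - \lambda'\Bigr)
\]
with a common constant $\lambda'$ chosen to meet the efficiency constraint.

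The main step is to show that $\frac1a \sum_{S\ni i} w_{|S|} f(S)$ equals $\phi_i(f)$ up to an additive constant independent of $i$. Expand $\phi_i(f) = \sum_{S\ni i} p_{|S|-1} f(S) - \sum_{S\not\ni i} p_{|S|} f(S)$. Use $\sum_{S\not\ni i} c_{|S|} f(S) = \sum_S c_{|S|} f(S) - \sum_{S\ni i} c_{|S|} f(S)$, where the first sum is independent of $i$. This gives
\[
\phi_i(f) = \sum_{S\ni i}\bigl(p_{|S|-1}+p_{|S|}\bigr) f(S) + C,
\]
with $C$ independent of $i$. The boundary terms $S=[d]$ and $S=\emptyset$ also contribute only $i$-independent constants.

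It remains to verify the identity $p_{\ell-1}+p_\ell = \kappa\, w_\ell$ for $0<\ell<d$, with $\kappa$ independent of $\ell$. Using $p_\ell = \frac{\ell!(d-1-\ell)!}{d!}$, we get
\[
p_{\ell-1}+p_\ell = \frac{(\ell-1)!(d-\ell-1)!}{d!}\,(d-\ell+\ell) = \frac{(\ell-1)!(d-\ell-1)!}{(d-1)!}.
\]
On the other side,
\[
w_\ell = \frac{\ell!(d-\ell)!}{\ell(d-\ell)\,d!} = \frac{(\ell-1)!(d-\ell-1)!}{d!}.
\]
So $\kappa = d$. This is the crux computation, and I expect it to be the only delicate part, together with tracking that all the $i$-independent constants are absorbed consistently.

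Consequently $\alpha_i = \phi_i(f) + c$ for some common $c$. Summing over $i$ and using efficiency of the Shapley value, $\sum_i\phi_i(f) = f([d])-f(\emptyset)$, together with the constraint gives $c=0$. Hence $\phi_i(\hat f) = \alpha_i = \phi_i(f)$.
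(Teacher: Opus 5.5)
Your route is the standard normal-equations argument (the paper gives no proof of its own and defers to Musco and Witter). You fix $\alpha_\emptyset$, write $A = aI + b\mathbf{1}\mathbf{1}^\top$, absorb the $b$-part into the multiplier, and match $\sum_{S\ni i} w_{|S|} f(S)$ to $\phi_i(f)$ up to an $i$-independent shift. Your identity $p_{\ell-1}+p_\ell = d\,w_\ell$ is correct. The gap is the step ``Consequently $\alpha_i = \phi_i(f)+c$.'' What your computation actually yields is
\[
\alpha_i = \frac{1}{a}\sum_{S\ni i} w_{|S|} f(S) + c_1 = \frac{1}{a\kappa}\,\phi_i(f) + c,
\]
and you never compute $a$. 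You treat $\kappa$ as if any value independent of $\ell$ would do. But efficiency pins down only the additive constant, not the multiplicative one. With $r = 1/(a\kappa)$, summing over $i$ gives $c = (1-r)\bigl(f([d])-f(\emptyset)\bigr)/d$. Hence $\alpha_i = r\,\phi_i(f) + (1-r)\bigl(f([d])-f(\emptyset)\bigr)/d$, which equals $\phi_i(f)$ for every $f$ only when $r = 1$.

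So you must show $a = 1/\kappa = 1/d$. This is true: for $1\le\ell\le d-1$,
\[
w_\ell\binom{d-2}{\ell-1} = \frac{(\ell-1)!\,(d-\ell-1)!}{d!}\cdot\frac{(d-2)!}{(\ell-1)!\,(d-\ell-1)!} = \frac{1}{d(d-1)},
\]
and summing over the $d-1$ values of $\ell$ gives $a = 1/d$. Alternatively, note that $r$ does not depend on $f$. For an additive $f$ with unequal coefficients, the unique minimizer is $f$ itself, so $\alpha_i = \phi_i(f)$, which forces $r = 1$. With this line added, the proof is complete.

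One small cleanup: $p_d$ is undefined, so split off the $S = [d]$ term before applying the complement trick. That term contributes $p_{d-1} f([d])$, which is independent of $i$, as your remark about boundary terms intends.
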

See \citet{Musco.2025} for a modern proof of this result.

Because exactly solving the regression requires evaluating $f$ on exponentially many coalitions, KernelSHAP and LeverageSHAP solve an approximate regression problem on $m$ sampled coalitions.
Because of Theorem \ref{thm:linearregression}, both estimators are \textit{consistent}, i.e., they return the Shapley values as the budget $m$ approaches the total number of coalitions $2^d$.

Instead of fitting a \textit{linear} approximation, recent Shapley value estimators use richer function classes
to more accurately approximate the value function.
FourierSHAP \cite{gorji2025shap} uses a sparse Fourier representation, while ProxySPEX \cite{butler.2025} and RegressionMSR \cite{witter2025regressionadjusted} use GBTs.
PolySHAP \cite{fumagalli2026polyshap} fits a \textit{polynomial} regression problem: like the linear regression problem, solving the weighted regression problem with higher-order basis functions also recovers the Shapley values.
This has a straightforward geometric interpretation: Shapley values are a \emph{projection} of the value function $f$ onto an affine subspace (the space of linear function satisfying the efficiency constraint), and the weighted polynomial regression is a projection onto a larger subspace.
Since intermediate projections onto the larger subspace do not change the output, we have Theorem \ref{thm:polynomialregression}.
\begin{theorem}[Polynomial Regression \cite{fumagalli2026polyshap}]
\label{thm:polynomialregression}
    Consider any collection of coalitions $\mathcal{T} \supseteq \mathcal{T}_{\leq 1}$.
    Let $\hat{f} \in \mathcal{F}_M(f, \mathcal{T})$ be the best approximation to $f$ i.e., 
    \begin{align}
        \label{eq:polynomialproblem}
        \hat{f} = \argmin_{g \in \mathcal{F}_M(f, \mathcal{T})}
        \sum_{S \subseteq [d]: 0 < \vert S \vert < d} w_{|S|}
        \left(
        f(S)-g(S)
        \right)^2.
    \end{align}
    Then $\phi_i(f) = \phi_i(\hat{f})$ for all $i \in [d]$.
\end{theorem}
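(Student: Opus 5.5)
The plan is to use the projection picture sketched just before the statement, and reduce everything to Theorem~\ref{thm:linearregression} applied twice.

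Write $\langle h_1,h_2\rangle_w=\sum_{0<|S|<d} w_{|S|}h_1(S)h_2(S)$ for the weighted inner product on functions restricted to the non-trivial coalitions. The constraint in $\mathcal{F}_M(f,\mathcal{T})$ fixes $g(\emptyset)=f(\emptyset)$ and $g([d])=f([d])$. This set is an affine subspace $A_{\mathcal{T}}=g_0+V_{\mathcal{T}}$, where $V_{\mathcal{T}}$ is the linear space of functions in $\mathrm{span}\{u_T:T\in\mathcal{T}\}$ that vanish at $\emptyset$ and $[d]$. Since $\mathcal{T}\supseteq\mathcal{T}_{\leq 1}$, we have $A_{\mathcal{T}_{\leq1}}\subseteq A_{\mathcal{T}}$, and the two spaces can share the same base point $g_0$ (for instance $g_0=f(\emptyset)u_\emptyset+\tfrac{f([d])-f(\emptyset)}{d}\sum_i u_{\{i\}}$). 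Hence $V_{\mathcal{T}_{\leq1}}\subseteq V_{\mathcal{T}}$.

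\textbf{Step 1.} Restricted to $0<|S|<d$, the form $\langle\cdot,\cdot\rangle_w$ is positive definite because $w_\ell>0$. So $\hat f$ is the orthogonal projection of $f$ onto $A_{\mathcal{T}}$, characterized by $\hat f\in A_{\mathcal{T}}$ and $f-\hat f\perp V_{\mathcal{T}}$. If the parametrization by $\alpha$ is non-unique, I will interpret $\hat f$ as the unique minimizing function, not as a coefficient vector.

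\textbf{Step 2 (tower property).} Let $\tilde f$ be the projection of $\hat f$ onto $A_{\mathcal{T}_{\leq 1}}$, and let $\bar f$ be the projection of $f$ onto $A_{\mathcal{T}_{\leq 1}}$. For any $v\in V_{\mathcal{T}_{\leq1}}$,
\[
\langle f-\tilde f,v\rangle_w=\langle f-\hat f,v\rangle_w+\langle \hat f-\tilde f,v\rangle_w=0 .
\]
The first term vanishes because $v\in V_{\mathcal{T}}$, and the second by definition of $\tilde f$. Uniqueness of projection then gives $\tilde f=\bar f$.

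\textbf{Step 3.} Apply Theorem~\ref{thm:linearregression} twice. Applied to $f$, it gives $\phi_i(f)=\phi_i(\bar f)$. Applied to $\hat f$, it gives $\phi_i(\hat f)=\phi_i(\tilde f)$. One subtlety: Theorem~\ref{thm:linearregression} for $\hat f$ uses the constraint class $\mathcal{F}_M(\hat f,\mathcal{T}_{\leq 1})$, which equals $\mathcal{F}_M(f,\mathcal{T}_{\leq1})$ because $\hat f$ agrees with $f$ on $\emptyset$ and $[d]$. Combining with Step 2 yields $\phi_i(\hat f)=\phi_i(\tilde f)=\phi_i(\bar f)=\phi_i(f)$.

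The main obstacle is the bookkeeping in Step 2. The boundary coalitions are excluded from the weighted sum, yet they are pinned by the constraints. One must check that every element of the affine classes is determined by its values on $0<|S|<d$ together with the two boundary values, so that "projection" is well defined there. It must also be verified that both affine spaces share the same boundary data, so that their direction spaces are nested. I would handle this by treating all functions as full set functions and noting that the differences $V_{\mathcal{T}}$ vanish at the boundary. On such differences the quadratic form is a genuine inner product, which is all the argument needs.
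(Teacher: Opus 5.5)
Your proposal is correct and follows essentially the same route as the paper: the paper supports this theorem with the nested-projection (tower property) argument sketched just before the statement, which it carries out explicitly in its proof of the Fourier Regression theorem by applying Theorem~\ref{thm:linearregression} to both $f$ and $\hat f$. Your explicit orthogonality check of the tower property for affine spaces, and your observation that $\mathcal{F}_M(\hat f,\mathcal{T}_{\leq 1})=\mathcal{F}_M(f,\mathcal{T}_{\leq 1})$ because $\hat f$ matches $f$ on $\emptyset$ and $[d]$, fill in details that the paper leaves implicit.
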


PolySHAP naturally leverages this result:
Use $m$ sampled coalitions to solve an approximate version of \cref{eq:polynomialproblem}, then return the Shapley values of the approximation, which can be computed quickly via \cite{Harsanyi.1963}:
\begin{equation*}
    \phi_i(\hat{f}) = \sum_{T \in \mathcal{T}: i \in T}\frac{\alpha_T}{|T|}.
\end{equation*}

While PolySHAP can outperform RegressionMSR for low dimensional settings with low budgets, the algorithm faces a computational bottleneck.
As the number of basis vectors grows, finding the best polynomial approximation becomes more computationally intensive:
The polynomial regression problem has $|\mathcal{T}|$ columns and $m$ rows, resulting in $O(|\mathcal{T}|^2 m)$ time just to solve the approximate regression problem.
For example, if we select the collection up to order-$k$
\begin{align*}
    \mathcal{T}_{\leq k} = \{ T \subseteq [d]: |T| \leq k \},
\end{align*}
the time complexity is $O(d^{2k} m)$.
For $k\geq 2$, the resulting algorithm is prohibitively slow.

\section{Our Odd Insights}

The goal of our work is to retain the expressive power of the polynomial regression while reducing the complexity of fitting the approximation.
The key is to focus on just the \textit{odd} component of the function.

We begin with the observation that only the odd component of a value function is needed to compute its Shapley value.
We can then fit an odd approximation $\hat{f}_\text{odd}$, retaining the expressive power of the approximation and ignoring the (potentially) expensive process of fitting the even component.

Recent work by \citet{wendler2023machine, zhou2025fast, kang.2025, gorji2025shap} has found that the computation of Shapley values from Fourier coefficients depends exclusively on terms of odd cardinality. 
As we will detail later, by properties of the Fourier basis, the following observation becomes immediate:

\begin{observation}[Shapley Values of Odd and Even Functions]
\label{observation:evenshapley}
Let $f: 2^{[d]} \to \mathbb{R}$ be a function. Then, for all $i \in [d]$,
\begin{align*}
  \phi_i(f) =\phi_i(f_\textnormal{odd}).
\end{align*}
\end{observation}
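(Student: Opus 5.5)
By linearity of the Shapley value and the decomposition $f = f_{\text{odd}} + f_{\text{even}}$, we have $\phi_i(f) = \phi_i(f_{\text{odd}}) + \phi_i(f_{\text{even}})$. The statement is therefore equivalent to showing that $\phi_i(g) = 0$ for every even set function $g$, i.e.\ every $g$ with $g(S) = g(S^c)$ for all $S$. I will prove this directly from the definition \cref{eq_def_shapley}, by pairing terms through complementation. The Fourier-basis argument alluded to in the text is not needed.

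Fix $i$ and an even $g$. Consider the map $S \mapsto S' := [d]\setminus(S\cup\{i\})$ on subsets $S \subseteq [d]\setminus\{i\}$. This map is an involution of that collection and sends $|S| = \ell$ to $|S'| = d-1-\ell$. The weights are invariant under it, since $\binom{d-1}{\ell} = \binom{d-1}{d-1-\ell}$ gives $p_\ell = p_{d-1-\ell}$. Moreover, $S' \cup \{i\} = S^c$ and $S' = (S\cup\{i\})^c$. Evenness then gives
\begin{align*}
g(S'\cup\{i\}) - g(S') = g(S^c) - g((S\cup\{i\})^c) = g(S) - g(S\cup\{i\}).
\end{align*}
So the marginal contribution at $S'$ is the negative of the marginal contribution at $S$, while the two carry the same weight.

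Reindexing the sum in \cref{eq_def_shapley} by the bijection $S\mapsto S'$ therefore shows that $\phi_i(g) = -\phi_i(g)$, hence $\phi_i(g)=0$. Reindexing avoids any separate treatment of fixed points; in fact there are none, since $S = S'$ would force $S$ to be half of $[d]\setminus\{i\}$ and disjoint from it simultaneously, which is impossible for $d \ge 2$. Applying this to $g = f_{\text{even}}$, which is even by construction because $f_{\text{even}}(S^c) = \tfrac{f(S^c)+f(S)}{2} = f_{\text{even}}(S)$, completes the proof.

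There is no real obstacle here. The only point needing care is the bookkeeping of the complement identities $S'\cup\{i\} = S^c$ and $S' = (S\cup\{i\})^c$, together with the symmetry $p_\ell = p_{d-1-\ell}$. The degenerate case $d=1$ is also immediate: there $f_{\text{even}}(\emptyset)=f_{\text{even}}(\{1\})$, so the single marginal contribution vanishes.
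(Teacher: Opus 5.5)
Your proof is correct and is essentially the paper's argument. Both reduce by linearity to $\phi_i(f_{\text{even}})=0$ and cancel terms by pairing each coalition with its complement, using the weight symmetry $p_\ell=p_{d-1-\ell}$; the paper first rewrites $\phi_i$ as a per-coalition weighted sum and pairs $S\ni i$ with $S^c$, whereas you pair marginal contributions via $S\mapsto[d]\setminus(S\cup\{i\})$, which is the same underlying pairing.
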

We provide an alternate proof of this observation, alongside all subsequent proofs, in \cref{app:proofs}.

\subsection{Paired Sampling}

While interesting on its own, we can leverage Observation \ref{observation:evenshapley} to design more efficient Shapley value estimators.
Instead of finding an approximation $\hat{f} \approx f$ and returning $\phi_i(\hat{f}) = \phi_i(\hat{f}_\text{odd})$, we will fit just the odd component $\hat{f}_\text{odd} \approx f_\text{odd}$ and directly return $\phi_i(\hat{f}_\text{odd})$.
Intriguingly, we show that paired sampling is doing just this.

%Let $\mathcal{F}(f)$ be the class of functions satisfying the efficiency constraint i.e., $\hat{f}(\emptyset) = f(\emptyset)$ and $\hat{f}([d]) = f([d])$ for all $\hat{f} \in \mathcal{F}(f)$.
Let $\mathcal{V}$ be an unconstrained vector space of set functions, such as the span of a chosen basis. We say $\mathcal{V}$ is even-odd decomposable if any $g \in \mathcal{V}$ can be uniquely written as $g = g_{\text{odd}} + g_{\text{even}}$, where $g_{\text{odd}}, g_{\text{even}} \in \mathcal{V}$ are strictly odd and even functions, respectively.

Let $\mathcal{F}(h) \subset \mathcal{V}$ denote the affine subspace of functions constrained to match a target function $h$ on the boundaries:
\begin{align*}
\mathcal{F}(h) = \{ g \in \mathcal{V} \mid g(\emptyset) = h(\emptyset), g([d]) = h([d]) \}.
\end{align*}
Linearity of the boundary evaluations ensures affine constraints decompose alongside the functions. If $g \in \mathcal{F}(f)$, its odd component satisfies $g_{\text{odd}}(\emptyset) = \frac{1}{2}(g(\emptyset) - g([d])) = \frac{1}{2}(f(\emptyset) - f([d])) = f_{\text{odd}}(\emptyset)$. Thus, $g \in \mathcal{F}(f)$~decomposes uniquely into $g_{\text{odd}} \in \mathcal{F}_{\text{odd}}(f_{\text{odd}})$ and $g_{\text{even}} \in \mathcal{F}_{\text{even}}(f_{\text{even}})$, where $\mathcal{F}_{\text{odd}}$ and $\mathcal{F}_{\text{even}}$ are the sets of odd and even functions satisfying their respective boundary constraints.

The Fourier basis discussed in the next section is clearly even-odd decomposable and, by Lemma \ref{lemma:equivalence}, so is the unanimity basis when $\mathcal{T}=\mathcal{T}_{\leq k}$ for some $k$.
We now consider a set $\mathcal{S}$ of $m$ paired subsets, i.e., if $S \in \mathcal{S}$ then $S^c \in \mathcal S$.

\begin{theorem}[Even-Odd Separation via Paired Sampling]
\label{thm:separation}
Suppose the vector space $\mathcal{V}$ is even-odd decomposable, and let $w_{|S|}$ be a symmetric weight with $w_{|S|} = w_{d-|S|}$.
Under paired sampling, the weighted least squares projection onto the affine space $\mathcal{F}(f)$ completely decouples,\footnote{When the regression solution is not unique, $+$ denotes Minkowski sum of sets, i.e. 
$A + B = \{ g + h \mid g \in A, h \in B \}$.}
\begin{align} 
\argmin_{g \in \mathcal{F}(f)} & \sum_{S \in \mathcal{S}} w_{|S|} \left( f(S) - g(S)\right)^2 \label{eq:evenOddSep} \\ 
= \quad & \argmin_{g_\textnormal{odd} \in \mathcal{F}_{\textnormal{odd}}(f_\textnormal{odd})} \sum_{S \in \mathcal{S}}w_{|S|}\left( f_{\textnormal{odd}}(S) - g_\textnormal{odd}(S)\right)^2 \nonumber \\
+ \quad & \argmin_{g_\textnormal{even} \in \mathcal{F}_{\textnormal{even}}(f_\textnormal{even})} \sum_{S \in \mathcal{S}}w_{|S|}\left( f_{\textnormal{even}}(S) - g_\textnormal{even}(S)\right)^2. \nonumber
\end{align}
\end{theorem}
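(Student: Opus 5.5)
The plan is to show that, under paired sampling with symmetric weights, the weighted objective splits additively into an odd part and an even part, with no cross term. Since the constraint set $\mathcal{F}(f)$ is (as noted before the statement) in bijection with $\mathcal{F}_{\text{odd}}(f_{\text{odd}}) \times \mathcal{F}_{\text{even}}(f_{\text{even}})$ via $g \mapsto (g_{\text{odd}}, g_{\text{even}})$, minimizing a separable objective over a product set is the same as minimizing each piece independently. The set of minimizers is then exactly the Minkowski sum of the two argmin sets.

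\textbf{Step 1 (residual decomposition).} For $g \in \mathcal{F}(f)$, write the residual $r = f - g$. By uniqueness of the even-odd decomposition of arbitrary set functions, $r_{\text{odd}} = f_{\text{odd}} - g_{\text{odd}}$ and $r_{\text{even}} = f_{\text{even}} - g_{\text{even}}$. Here $g_{\text{odd}}, g_{\text{even}} \in \mathcal{V}$ because $\mathcal{V}$ is even-odd decomposable.

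\textbf{Step 2 (vanishing cross term).} Expand
\[
\sum_{S \in \mathcal{S}} w_{|S|} r(S)^2 = \sum_{S} w_{|S|} r_{\text{odd}}(S)^2 + \sum_S w_{|S|} r_{\text{even}}(S)^2 + 2\sum_S w_{|S|} r_{\text{odd}}(S) r_{\text{even}}(S).
\]
Pair each $S$ with $S^c$. Since $\mathcal{S}$ is closed under complement and $w_{|S^c|} = w_{d-|S|} = w_{|S|}$, the cross term equals itself with $S$ replaced by $S^c$. Under that replacement $r_{\text{odd}}(S^c) r_{\text{even}}(S^c) = -r_{\text{odd}}(S) r_{\text{even}}(S)$, so the cross term equals its own negative and is therefore zero. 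To be careful with multiplicities, I would treat $\mathcal{S}$ as a multiset in which $S$ and $S^c$ occur equally often, or simply sum over the involution $S \mapsto S^c$ directly. This step is the crux; it is the only place where pairing and symmetric weights are used, but it is a one-line computation.

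\textbf{Step 3 (product-set minimization).} The objective is now $J_{\text{odd}}(g_{\text{odd}}) + J_{\text{even}}(g_{\text{even}})$ over pairs ranging independently in $\mathcal{F}_{\text{odd}}(f_{\text{odd}}) \times \mathcal{F}_{\text{even}}(f_{\text{even}})$. Two facts are needed for independence:
\begin{itemize}
\item \emph{Forward direction:} any odd $a \in \mathcal{F}_{\text{odd}}(f_{\text{odd}})$ and even $b \in \mathcal{F}_{\text{even}}(f_{\text{even}})$ give $a+b \in \mathcal{V}$ with $(a+b)(\emptyset) = f_{\text{odd}}(\emptyset) + f_{\text{even}}(\emptyset) = f(\emptyset)$, and similarly at $[d]$, so $a + b \in \mathcal{F}(f)$.
\item \emph{Reverse direction:} every $g \in \mathcal{F}(f)$ decomposes into such a pair, which the text before the statement already establishes.
\end{itemize}
A pair minimizes a sum of functions of independent variables if and only if each component minimizes its own term, provided the minima exist. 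The minima exist because these are least-squares problems over affine sets, so the quadratic infima are attained, or the solution sets are empty simultaneously. Hence $g$ is a minimizer if and only if $g_{\text{odd}}$ and $g_{\text{even}}$ are minimizers of their respective problems. This gives the equality of sets, with $+$ read as the Minkowski sum when the solutions are non-unique.
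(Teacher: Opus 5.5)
Your argument is correct and matches the paper's proof: the paper expands $(f(S)-g(S))^2+(f(S^c)-g(S^c))^2$ for each complementary pair, then uses $f_{\text{odd}}(S^c)=-f_{\text{odd}}(S)$ and $f_{\text{even}}(S^c)=f_{\text{even}}(S)$ (and likewise for $g$) to cancel the cross term, which is your Step 2 done pair by pair rather than through the involution $S\mapsto S^c$. The paper then just says the minimization separates, whereas your Step 3 spells this out through the bijection $\mathcal{F}(f)\leftrightarrow\mathcal{F}_{\text{odd}}(f_{\text{odd}})\times\mathcal{F}_{\text{even}}(f_{\text{even}})$, including the forward inclusion $a+b\in\mathcal{F}(f)$ that the paper leaves implicit.
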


\begin{corollary}(Frontier Invariance under Unanimity)
\label{cor:odd_unanimity}
Under paired sampling, if $k$ is odd and $\hat{f}_k$ is the solution to \cref{eq:evenOddSep} under class $\mathcal{F}_M(f, \mathcal{T}_{\leq k})$, then
\begin{align*}
    \phi_i(\hat{f}_k) = \phi_i(\hat{f}_{k+1}) \quad \forall i \in [d].
\end{align*}
\end{corollary}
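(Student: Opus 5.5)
The plan is to apply Theorem~\ref{thm:separation} to the two spaces $\mathcal{V}_k = \mathrm{span}\{u_T : T\in\mathcal{T}_{\leq k}\}$ and $\mathcal{V}_{k+1}$. Both are even-odd decomposable by Lemma~\ref{lemma:equivalence}, and the weights $w_{|S|}$ are symmetric. By Theorem~\ref{thm:separation}, each solution set splits as $\hat f_k = \hat g^{(k)}_{\mathrm{odd}} + \hat g^{(k)}_{\mathrm{even}}$. Here $\hat g^{(k)}_{\mathrm{odd}}$ minimizes the paired-sample loss against $f_{\mathrm{odd}}$ over $\mathcal{F}_{\mathrm{odd}}(f_{\mathrm{odd}})\cap\mathcal{V}_k$, and similarly for the even part and for $k+1$. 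Observation~\ref{observation:evenshapley} applied to $\hat f_k$ gives $\phi_i(\hat f_k)=\phi_i(\hat g^{(k)}_{\mathrm{odd}})$, since the odd component of $\hat f_k$ is exactly $\hat g^{(k)}_{\mathrm{odd}}$ by uniqueness of the decomposition. The same holds for $k+1$. So it suffices to show that the odd subproblems for $k$ and $k+1$ coincide. The cleanest route is to show that the odd subspaces are equal: $\mathcal{V}_k^{\mathrm{odd}} = \mathcal{V}_{k+1}^{\mathrm{odd}}$ whenever $k$ is odd. Then the two constrained least-squares problems are literally identical and have the same solution sets. If the solution is non-unique, every element of the odd solution set has the same values on $\mathcal{S}$ but possibly different Shapley values; I would either note that the statement is to be read for any common choice of solution, or observe that the sets coincide, so $\hat f_k$ can be chosen with $\hat f_{k+1}$ sharing its odd part.

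The key step is the subspace identity. I would pass to the Fourier (parity) basis $\chi_T(S)=(-1)^{|S\cap T|}$. Expanding $u_T(S)=\prod_{j\in T}\tfrac{1-\chi_{\{j\}}(S)}{2}$ shows that $u_T$ is a combination of the $\chi_U$ with $U\subseteq T$. Conversely, each $\chi_U$ is a combination of the $u_{T'}$ with $T'\subseteq U$. Hence $\mathcal{V}_k=\mathrm{span}\{\chi_U:|U|\le k\}$. This is presumably the content of Lemma~\ref{lemma:equivalence}, which I may assume. Next, $\chi_U(S^c)=(-1)^{|U|}\chi_U(S)$, so $\chi_U$ is odd iff $|U|$ is odd and even iff $|U|$ is even. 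Since the $\chi_U$ form a basis, the odd part of $\mathcal{V}_k$ is $\mathrm{span}\{\chi_U : |U|\le k,\ |U|\text{ odd}\}$. Going from $k$ to $k+1$ with $k$ odd adds only the functions $\chi_U$ with $|U|=k+1$, which is even. Therefore $\mathcal{V}_k^{\mathrm{odd}}=\mathcal{V}_{k+1}^{\mathrm{odd}}$.

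It remains to check that the constraint sets agree. $\mathcal{F}_{\mathrm{odd}}(f_{\mathrm{odd}})$ within $\mathcal{V}_k$ imposes $g(\emptyset)=f_{\mathrm{odd}}(\emptyset)$ and $g([d])=f_{\mathrm{odd}}([d])$. These conditions do not depend on $k$, so the feasible sets coincide once the odd subspaces coincide. The objective is also the same. The main obstacle I expect is bookkeeping rather than substance: making sure that the decomposition in Theorem~\ref{thm:separation}, applied inside $\mathcal{V}_k$, really produces odd parts in $\mathcal{V}_k^{\mathrm{odd}}$ as characterized above. This is where even-odd decomposability of $\mathcal{V}_k$, that is, Lemma~\ref{lemma:equivalence}, is essential. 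The other point to handle is the non-uniqueness caveat, which I would address via the Minkowski-sum form of the theorem.
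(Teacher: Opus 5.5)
Your proposal is correct and follows essentially the same route as the paper. You pass to the Fourier basis via Lemma~\ref{lemma:equivalence}, note that going from odd $k$ to $k+1$ adds only even-cardinality (hence even) Fourier terms, so the odd subspace and its constrained subproblem are unchanged, and then conclude with Theorem~\ref{thm:separation} and Observation~\ref{observation:evenshapley}. Your explicit handling of non-unique minimizers, by choosing $\hat f_k$ and $\hat f_{k+1}$ with a common odd part, is a welcome precision that the paper's proof glosses over.
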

This generalizes the findings of \citet{fumagalli2026polyshap}, who established this invariance when $k=1$. The key observation is that under the unanimity basis, expanding from odd $k$ to $k+1$ introduces terms impacting only $\mathcal{F}_\even$.

\subsection{Fourier Basis}

Our goal is to fit only \textit{odd} basis functions, but the unanimity basis used in KernelSHAP, LeverageSHAP, and PolySHAP consists of terms that are neither strictly odd nor even.
To address this, we will fit functions in the \textit{Fourier} basis.

The Fourier basis consists of functions $\chi_T: 2^{[d]} \to \mathbb{R}$ where
\begin{align*}   
    \chi_T(S) = (-1)^{|S \cap T|}.
\end{align*}
It is easy to check that $\chi_T(S)$ is odd if $|T|$ is odd, and even if $|T|$ is even.
Analogous to the unanimity basis, we define a constrained class of functions for fitting the function $f$:
\begin{align*}
    \mathcal{F}_F(f, \mathcal{T}) \!=\!
    \left\{
    g =\!\! \sum_{T \in \mathcal{T}} \chi_{T} \beta_{T} \!:  
    g(S) \!=\! f(S), S\!\in\! \{\emptyset, [d]\}
    \right\}\!.
\end{align*}

While the unanimity and Fourier bases are distinct, it has been established that their spans on coalitions up to a given order $k$ are strictly equivalent \citep[Theorem 5]{zhou2025fast}. Therefore, the restricted function classes coincide exactly:

\begin{lemma}[Unanimity and Fourier Equivalence \citep{Grabisch.2016,zhou2025fast}]
    \label{lemma:equivalence}
    For any $f$, the span of the unanimity and Fourier restricted classes on $\mathcal{T}_{\leq k}$ are the same:
    \begin{align*}
        \mathcal{F}_M(f, \mathcal{T}_{\leq k}) 
        = \mathcal{F}_F(f, \mathcal{T}_{\leq k}).
    \end{align*}
\end{lemma}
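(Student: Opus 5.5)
Since both classes impose the same boundary constraints $g(\emptyset)=f(\emptyset)$ and $g([d])=f([d])$, it suffices to show that the unconstrained spans coincide:
\[
\mathrm{span}\{u_T : |T|\le k\} = \mathrm{span}\{\chi_T : |T|\le k\}.
\]
I will show that each basis element of one family is a linear combination of basis elements of the other family with index sets of no larger cardinality. This gives both inclusions, and the lemma then follows by intersecting each span with the same affine constraint set.

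\textbf{Fourier into unanimity.} Fix $T$ with $|T|\le k$. Write
\[
\chi_T(S) = (-1)^{|S\cap T|} = \prod_{j\in T}\bigl(1-2\,\mathbbm{1}[j\in S]\bigr).
\]
Expanding the product gives
\[
\chi_T(S) = \sum_{U\subseteq T}(-2)^{|U|}\prod_{j\in U}\mathbbm{1}[j\in S] = \sum_{U\subseteq T}(-2)^{|U|}\,u_U(S),
\]
using $\prod_{j\in U}\mathbbm{1}[j\in S]=\mathbbm{1}[U\subseteq S]=u_U(S)$. Every $U$ in the sum satisfies $|U|\le|T|\le k$, so $\chi_T$ lies in the unanimity span on $\mathcal{T}_{\le k}$.

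\textbf{Unanimity into Fourier.} Symmetrically, use $\mathbbm{1}[j\in S]=\tfrac12(1-\chi_{\{j\}}(S))$ and $\chi_{\{j\}}\chi_{\{j'\}}=\chi_{\{j,j'\}}$ for $j\neq j'$, which holds because $\chi_A\chi_B=\chi_{A\triangle B}$. Then
\[
u_T(S) = \prod_{j\in T}\tfrac12\bigl(1-\chi_{\{j\}}(S)\bigr) = 2^{-|T|}\sum_{U\subseteq T}(-1)^{|U|}\chi_U(S).
\]
Again every index satisfies $|U|\le k$.

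Both expansions are elementary, and the only point needing care is confirming that the index sets stay within $\mathcal{T}_{\le k}$, which holds because $\mathcal{T}_{\le k}$ is downward closed. This downward closure is exactly why the statement is restricted to $\mathcal{T}_{\le k}$ rather than arbitrary collections $\mathcal{T}$. A cleaner alternative is a dimension count: both families contain $\sum_{j\le k}\binom dj$ linearly independent functions, so a single inclusion already gives equality of the spans. Finally, the boundary constraints are identical in both definitions, so $\mathcal{F}_M(f,\mathcal{T}_{\le k})=\mathcal{F}_F(f,\mathcal{T}_{\le k})$.
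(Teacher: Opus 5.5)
Your proof is correct and takes essentially the same route as the paper. Both reduce to the unconstrained spans because the boundary constraints do not depend on the basis, then expand $\chi_T=\prod_{j\in T}(1-2\mathbbm{1}[j\in S])$ and $u_T=\prod_{j\in T}\tfrac12(1-\chi_{\{j\}})$ to get both inclusions, with downward closure of $\mathcal{T}_{\le k}$ keeping every index at order at most $k$; the paper phrases these expansions as coefficient-conversion formulas such as $\alpha_S=(-2)^{|S|}\sum_{T\supseteq S}\beta_T$, and your dimension-count shortcut is also valid since both families are linearly independent.
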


We now show that solving the constrained, weighted regression in the Fourier basis recovers the Shapley values.

\begin{theorem}[Fourier Regression]
\label{thm:fourier_regression}
    Consider any collection of coalitions $\mathcal{T} \supseteq \mathcal{T}_{\leq 1}$.
    Consider the best polynomial approximation to $f$ in the Fourier basis on $\mathcal{T}$:
    \begin{align}
        \label{eq:fourierproblem}
        \hat{f} = \argmin_{g \in \mathcal{F}_F(f, \mathcal{T})}
        \sum_{S \subseteq [d]: 0 < \vert S \vert < d} w_{|S|}
        \left(
        f(S)-g(S)
        \right)^2
    \end{align}
    Then $\phi_i(f) = \phi_i(\hat{f})$ for all $i \in [d]$.
\end{theorem}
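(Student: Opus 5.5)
The plan is to reduce the statement to Theorem~\ref{thm:polynomialregression} by showing that, for any $\mathcal{T} \supseteq \mathcal{T}_{\leq 1}$, the Fourier-restricted class $\mathcal{F}_F(f,\mathcal{T})$ contains the linear class $\mathcal{F}_M(f,\mathcal{T}_{\leq 1})$. Then we reuse the projection argument that underlies Theorem~\ref{thm:polynomialregression}. Concretely, I would rerun the geometric proof with an arbitrary linear subspace $\mathcal{V}$ of set functions containing the span of $\mathcal{T}_{\leq 1}$ (constants and singletons), rather than only unanimity-spanned spaces.

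\textbf{Step 1: the span contains the linear functions.} Since $\chi_\emptyset \equiv 1 = u_\emptyset$ and $\chi_{\{i\}}(S) = 1 - 2u_{\{i\}}(S)$, we have $\mathrm{span}\{\chi_T : T \in \mathcal{T}_{\leq 1}\} = \mathrm{span}\{u_T : T\in\mathcal{T}_{\leq 1}\}$; this is the $k=1$ case of Lemma~\ref{lemma:equivalence}. Let $\mathcal{V} = \mathrm{span}\{\chi_T : T\in\mathcal{T}\}$. Then $\mathcal{V} \supseteq \mathcal{V}_1 := \mathrm{span}\{u_T: |T|\le 1\}$, and the constrained classes satisfy $\mathcal{F}_M(f,\mathcal{T}_{\leq 1}) \subseteq \mathcal{F}_F(f,\mathcal{T})$.

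\textbf{Step 2: nested constrained projections.} Write $\langle g,h\rangle_w = \sum_{0<|S|<d} w_{|S|} g(S)h(S)$. On functions sharing the boundary values $f(\emptyset), f([d])$, problem \eqref{eq:fourierproblem} is an orthogonal projection of $f$ onto the affine set $\mathcal{F}_F(f,\mathcal{T})$ in this (semi-)inner product. Let $\bar f$ be the solution of \eqref{eq:linearproblem}. I would show that $\bar f$ is also the solution of the linear problem with $f$ replaced by $\hat f$. By Pythagoras, $f-\hat f$ is $w$-orthogonal to the direction space $\mathcal{D} = \{g\in\mathcal{V}: g(\emptyset)=g([d])=0\}$. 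The linear direction space $\mathcal{D}_1 = \mathcal{D}\cap\mathcal{V}_1$ is contained in $\mathcal{D}$. Hence for every $g\in\mathcal{F}_M(f,\mathcal{T}_{\le1})$ we have $\|f-g\|_w^2 = \|f-\hat f\|_w^2 + \|\hat f - g\|_w^2$. This works because $\hat f - g \in \mathcal{D}$, as both agree with $f$ on the boundary. So minimizing over $g$ gives the same minimizer whether the target is $f$ or $\hat f$. Applying Theorem~\ref{thm:linearregression} twice then yields $\phi_i(f)=\phi_i(\bar f)=\phi_i(\hat f)$. Note that Theorem~\ref{thm:linearregression} applies to $\hat f$ because $\hat f(\emptyset)=f(\emptyset)$ and $\hat f([d])=f([d])$, so the constraint sets coincide.

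\textbf{Main obstacle.} The delicate point is uniqueness and degeneracy. The weights exclude $\emptyset$ and $[d]$, and $\mathcal{T}$ may make the Fourier columns dependent on the interior coalitions, so the argmin in \eqref{eq:fourierproblem} might be a set. I would handle this in two ways. First, any two minimizers differ by an element of $\mathcal{D}$ vanishing on all interior $S$ and on the boundary, hence they are the same set function and $\phi_i(\hat f)$ is well defined. Second, the constraint set $\mathcal{F}_F(f,\mathcal{T})$ is nonempty because it contains $\mathcal{F}_M(f,\mathcal{T}_{\le1})$, which is nonempty since constants and the additive term $\sum_i u_{\{i\}}$ let us match both boundary values. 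The Pythagorean identity only needs the first-order optimality condition $\langle f-\hat f, h\rangle_w=0$ for $h\in\mathcal{D}$, which holds for any minimizer.
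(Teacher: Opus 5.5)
Your proof is correct and follows essentially the same route as the paper: both use the $k=1$ case of Lemma~\ref{lemma:equivalence} to nest the constrained linear class inside $\mathcal{F}_F(f,\mathcal{T})$, observe that the best linear approximation to $f$ is also the best linear approximation to $\hat f$ (nested projections), and then apply Theorem~\ref{thm:linearregression} once to $f$ and once to $\hat f$. The differences are minor. The paper turns the weighted form into a genuine inner product by adding arbitrary positive boundary weights and cites the tower property of projections. You instead work with the interior semi-inner product through the first-order conditions, which also lets you show explicitly that a non-unique Fourier minimizer is still a single set function, so $\phi_i(\hat f)$ is well defined.
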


This result establishes that Fourier regression is a consistent estimator, with a computational advantage through Observation \ref{observation:evenshapley}. 
Since the Fourier basis functions $\chi_T$ are strictly odd or even, unlike the unanimity basis functions $u_T$, we can leverage that $\phi_i(f_{\text{even}}) = 0$ to discard even basis terms.
We implement this strategy in our algorithm \textit{OddSHAP}.

\section{OddSHAP}

\begin{algorithm}[tb]
  \caption{OddSHAP}
  \label{alg:oddshap}
\begin{algorithmic}
  \STATE {\bfseries Require:} value function $f$, sampling budget $m$, regression variable factor $\eta$
  \STATE $\mathcal{S} \leftarrow $ \textsc{PairedSample}$(f,m)$
  \STATE $\hat{f}_{\text{GBT}} \leftarrow$ Fit gradient boosted tree model on $\mathcal{S}$
  \STATE \textbf{if } $m<d\cdot\eta$ \textnormal{ (underdetermined regression)}
  \STATE \hspace{10pt}\textbf{return} $\phi_1(\hat{f}_{\text{GBT}}), \ldots, \phi_d(\hat{f}_{\text{GBT}})$ \textnormal{ via TreeSHAP}
\STATE $\mathcal{T}_{\text{odd}} \leftarrow $ \textsc{OddInteractionExtract}$(\hat{f}_{\text{GBT}}, \lceil \frac{m}{\eta} \rceil- d)$
  \STATE $$\hat{f}_{\text{odd}} \leftarrow \!\argmin_{g \in \mathcal{F}(f, \mathcal{T}_{\leq 1} \cup \mathcal{T}_\text{odd})} \sum_{S \in \mathcal{S}: 0 < |S| < d} \!w_{|S|} \left(f(S) - g(S)\right)^2$$
    \STATE \textbf{return} $\phi_1(\hat{f}_{\text{odd}}), \ldots, \phi_d(\hat{f}_{\text{odd}})$  
\end{algorithmic}
\end{algorithm}

Algorithm \ref{alg:oddshap} presents OddSHAP in three steps:
\begin{enumerate}
    \item \textbf{Paired Sampling:} We generate a set $\mathcal S$ of $m$ paired subsets to facilitate even-odd separation.
    \item \textbf{Interaction Screening:} We fit a GBT to the samples and extract a candidate set $\mathcal{T}_{\text{odd}}$ of odd coefficients using ProxySPEX \cite{butler.2025}, selecting those with the largest absolute magnitudes.
    \item \textbf{Odd Regression:} We solve the weighted least squares problem restricted to $\mathcal{T}_{\leq 1} \cup \mathcal{T}_{\text{odd}}$ and return the exact Shapley values of the resulting approximation.
\end{enumerate}
We compute the Shapley values of $\hat{f}_\text{odd}$ via
\begin{align*}
    \phi_i(\hat{f}_\text{odd}) = -2\sum_{\substack{T \in \mathcal{T}: i \in T \\ |T| \textnormal{ odd}}}\frac{\beta_T}{|T|}.
\end{align*}
For details of the sampling and regression, see \cref{app:oddshapdetails}.

\paragraph{Controlling Higher-Order Terms}
The set of odd candidate interactions, $\mathcal{T}_{\textnormal{odd}}$, serves as the support for our subsequent regression.
Recent works have demonstrated that machine-learned value functions can often be well-approximated by a small number of Fourier interactions \cite{kang.2025, butler.2025, gorji2025shap}. As we later illustrate in Figure~\ref{fig:intSparsity}, this sparsity can be exploited to significantly improve Shapley value estimation. To construct $\mathcal{T}_{\textnormal{odd}}$, following ProxySPEX \cite{butler.2025}, we fit a GBT to our samples and extract the odd-sized Fourier interactions with the highest magnitudes. 
To ensure a stable approximation, we require a sample-to-variable ratio of $\eta$. Because the final regression problem contains $d$ linear terms and $|\mathcal{T}_{\textnormal{odd}}|$ higher-order terms, we set the size of the candidate set to $|\mathcal{T}_{\textnormal{odd}}| = \lceil m/\eta \rceil - d$. This ensures the total number of regression variables strictly scales with the available sampling budget $m$.
In regimes where the sampling budget $m$ is insufficient to fit even the linear terms (i.e. $m< d\cdot\eta$), we revert to the GBT estimates directly using TreeSHAP \cite{Lundberg.2020}, relying on the ensemble's inherent regularization to provide a stable approximation.

\paragraph{Odd Regression Details}
To efficiently compute $\hat{f}_{\text{odd}}$ in \cref{alg:oddshap}, we utilize two key algorithmic optimizations that maximize numerical stability and data efficiency. First, by leveraging paired sampling, we directly pre-compute the odd targets $f_{\text{odd}}(S) = \frac{1}{2}(f(S) - f(S^c))$ for each representative of pairs in $S \in \mathcal{S}$. This filters out the even function components prior to fitting, allowing us to drop complementary pairs and solve the regression using only half of the sampled rows ($m/2$), yielding a distinct practical speedup. Second, rather than enforcing the constraints via the numerically unstable pseudo-infinite weights common in KernelSHAP implementations \citep{Lundberg.2017}, we solve a strictly constrained regression following \citet{Musco.2025}. 
To obtain the solution of $\hat{f}_{\text{odd}}$, we isolate the Fourier intercept $\beta_\emptyset$ and project the design matrix onto the subspace orthogonal to the boundary constraints, guaranteeing that the estimated Shapley values sum to the exact total payoff ($f([d]) - f(\emptyset)$) regardless of the sample budget. The complete mathematical derivations of these exact Fourier boundary constraints and our projection-based optimization framework are provided in \cref{app:oddshapdetails}.

\paragraph{Computational Complexity}
The computational cost of OddSHAP is primarily driven by the evaluation time $T_m$ of $f$, which typically dominates when querying complex black-box functions. The subsequent estimation process involves fitting a GBT proxy and solving a sparse regression. Fitting the GBT takes $O(N_{\text{trees}} \cdot m \cdot d)$, while extracting the Fourier representation is efficient at $O(N_{\text{trees}} \cdot 4^L)$ for maximum tree depth $L$. 
Crucially, paired sampling allows us to directly compute the odd targets and halve the effective number of rows in our design matrix. 
Once the support $\mathcal{T}$ is identified, solving the regression requires $O(|\mathcal{T}|^2 \frac{m}{2})$ time, followed by an $O(d \cdot |\mathcal{T}|)$ step to compute the Shapley values. 
By adaptively restricting the support size to follow the sample-to-variable ratio $|\mathcal{T}| = \lceil m/\eta \rceil$, OddSHAP ensures the estimation overhead scales polynomially with the budget $m$, rather than the combinatorial $O(d^k)$ scaling of the fixed-order method PolySHAP.
With standard hyperparameters ($\eta=10, L=10$), the total runtime is approximately
\begin{align*}
  T_m + O(N_{\text{trees}} \cdot m \cdot d + (m/\eta)^2 \cdot \frac{m}{2}),
\end{align*}
keeping the method tractable even for high-dimensional inputs.
See Appendix \ref{appx_sec_further_results} for empirical runtime analysis.

\section{Experiments}

\begin{figure*}[t]
    \centering
    %legend 
    \includegraphics[width=.75\linewidth]{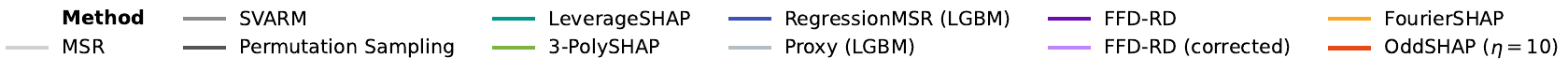}
    \\
    \begin{minipage}{0.245\linewidth}
        \includegraphics[width=\linewidth]{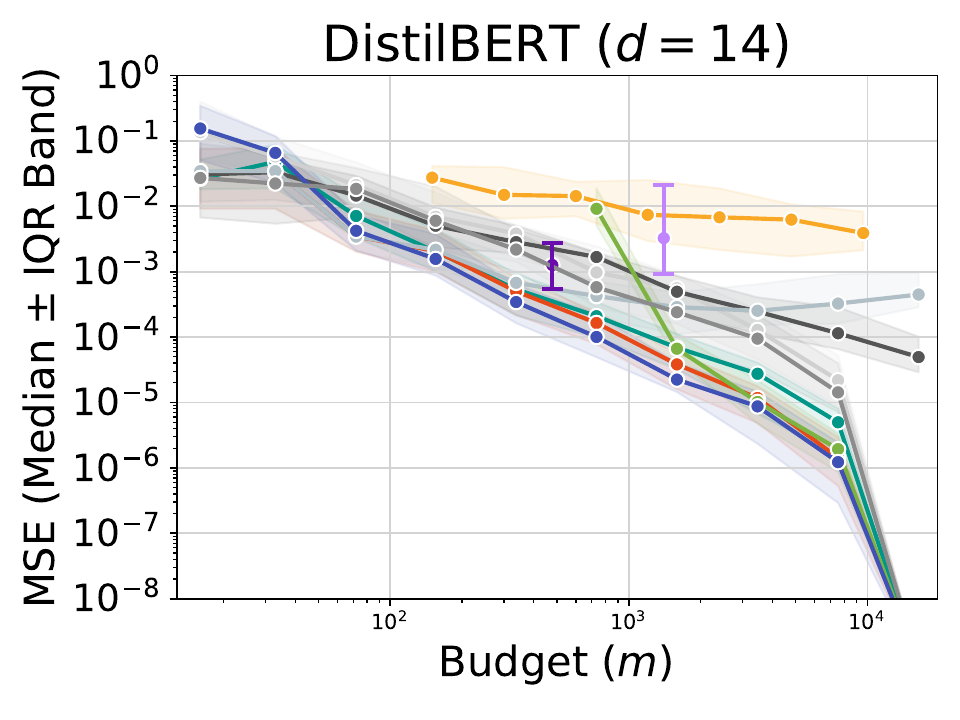}
    \end{minipage}
        \hfill
    \begin{minipage}{0.245\linewidth}
        \includegraphics[width=\linewidth]{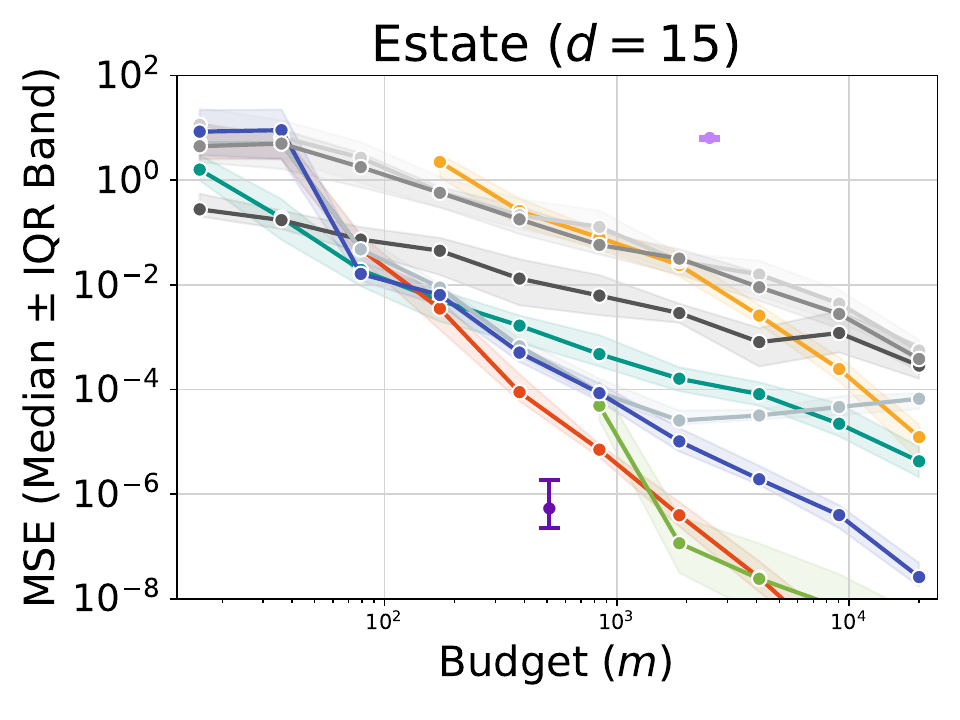}
    \end{minipage}
        \hfill
    \begin{minipage}{0.245\linewidth}
        \includegraphics[width=\linewidth]{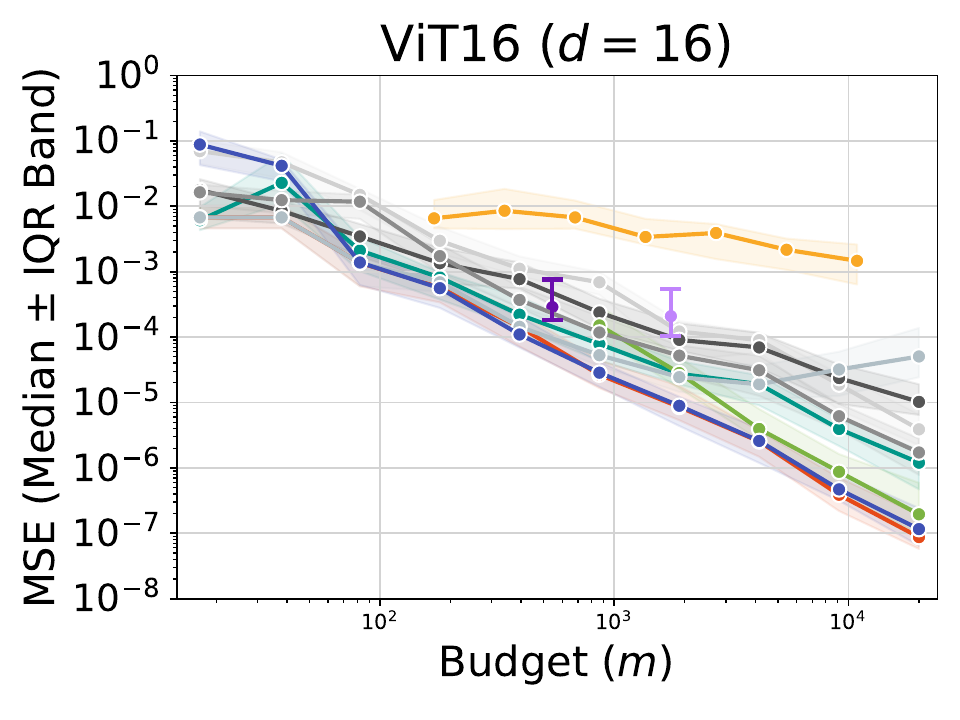}
    \end{minipage}
        \hfill
    \begin{minipage}{0.245\linewidth}
        \includegraphics[width=\linewidth]{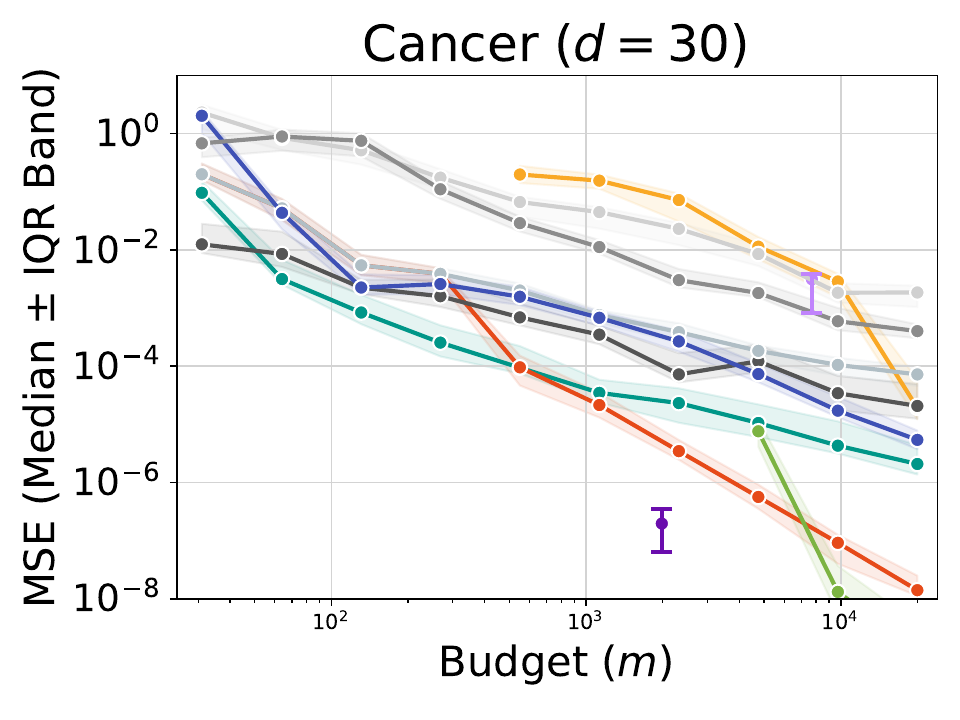}
    \end{minipage}
        \hfill
    \begin{minipage}{0.245\linewidth}
        \includegraphics[width=\linewidth]{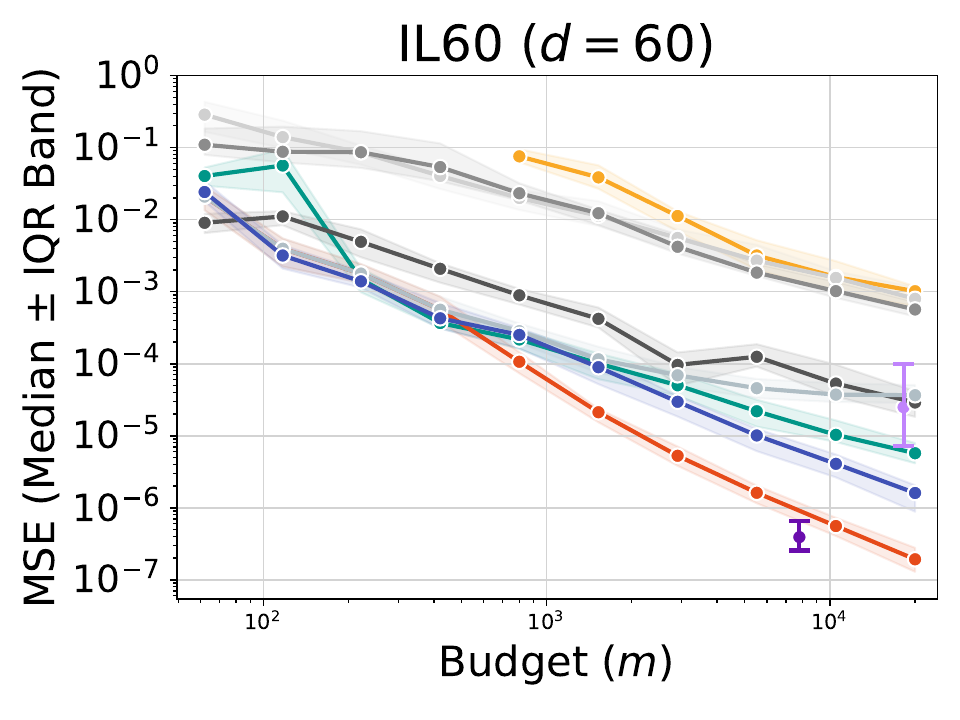}
    \end{minipage}
        \hfill
    \begin{minipage}{0.245\linewidth}
        \includegraphics[width=\linewidth]{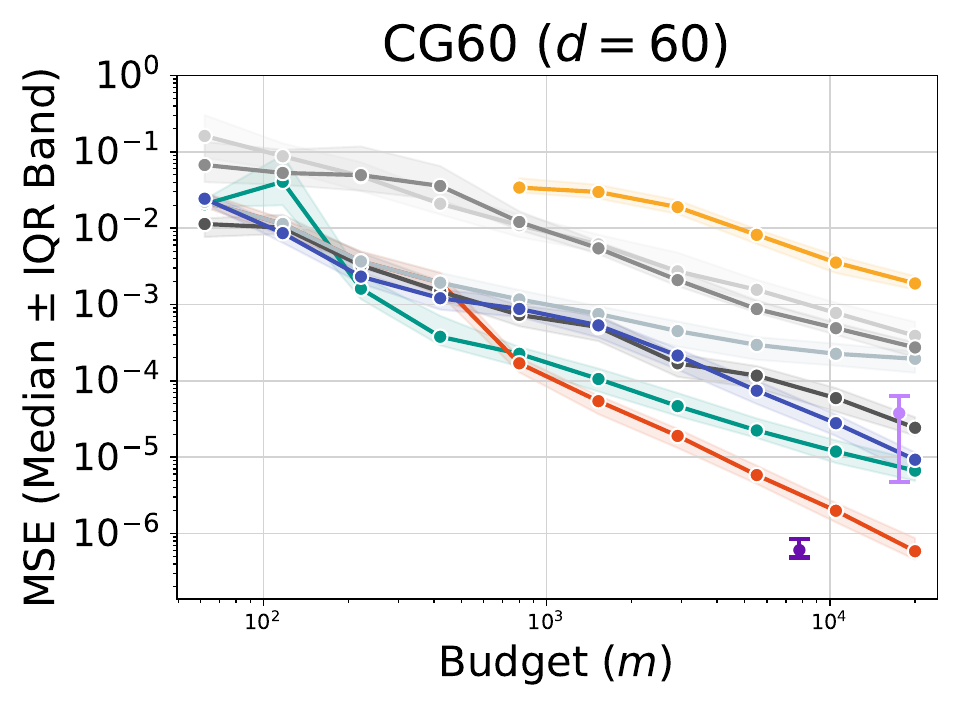}
    \end{minipage}
        \hfill
    \begin{minipage}{0.245\linewidth}
        \includegraphics[width=\linewidth]{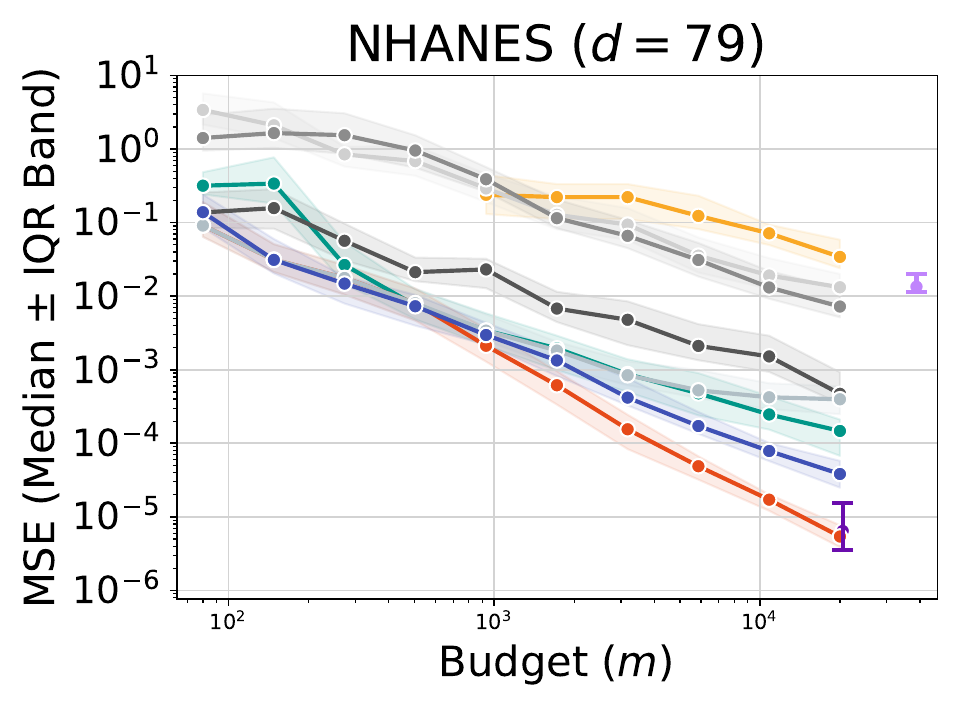}
    \end{minipage}
        \hfill
    \begin{minipage}{0.245\linewidth}
        \includegraphics[width=\linewidth]{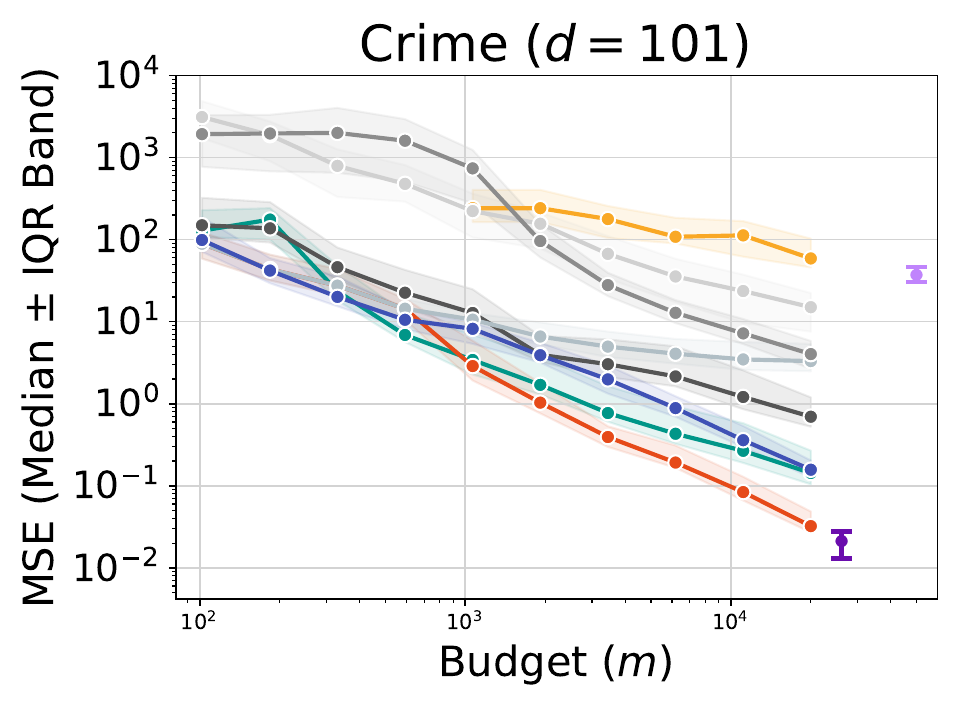}
    \end{minipage}
    \caption{Approximation quality measured by MSE (median with interquartile range) for Shapley value estimators with paired sampling.}
    \label{fig_oddshap_mse}
\end{figure*}

To empirically validate OddSHAP, we approximate Shapley values on $8$ value functions for local explanations of $30$ randomly selected predictions, as listed in \cref{tab_games}.

\begin{table}[]
\caption{Value functions.}\label{tab_games}
\centering
\resizebox{\linewidth}{!}{%  <--- scale to the wraptable width
\begin{tabular}{llll}
\textbf{$d$} & \textbf{ID} & \textbf{Source} & \textbf{Domain} \\
\hline
14 & DistilBERT & \texttt{shapiq} \citep{Muschalik.2024a} & language \\
15 & Estate & \citet{Yeh.2018b} & tabular \\
16 & ViT16 & \texttt{shapiq} \citep{Muschalik.2024a} & image \\
30 & Cancer & \citet{Street.1993} & tabular\\
60 & CG60 & \texttt{shap} \citep{Lundberg.2017} & synthetic\\
60 & IL60 & \texttt{shap} \citep{Lundberg.2017} & synthetic\\
79 & NHANES & \citet{Dinh.2019} & tabular \\
101 & Crime & \citet{Redmond.2011} & tabular \\
\hline
\end{tabular}
}
\end{table}

\paragraph{Deep Learning Value Functions}
We utilize the ViT-16 and DistilBERT value functions provided by the \texttt{shapiq} library \citep{Muschalik.2024a}. For the image domain, the ViT-16 benchmark classifies ImageNet \citep{Deng.2009} samples using a Vision Transformer; the input is segmented into a $4\times4$ grid of super-patches, resulting in $d=16$ players. For the text domain, the DistilBERT benchmark \citep{Sanh.2019} predicts sentiment on IMDB \citep{Maas.2011} review excerpts comprising $d=14$ tokens. Both benchmarks employ baseline imputation, allowing to compute ground-truth Shapley values via exhaustive summation.

\paragraph{Tabular Value Functions}
For tabular datasets, we train XGBoost classifiers \citep{Chen.2016} and define the value function via interventional feature perturbation (marginal expectation) estimated using $50$ background instances. Ground-truth Shapley values are obtained using the interventional TreeSHAP algorithm \citep{Lundberg.2020}.

\paragraph{Flexible-Budget Estimators}
We evaluate all methods with $m$ coalitions ranging from $d+1$ to $\min(2^d,20000)$, which are sampled according to equal probabilities over coalition sizes and correspond to leverage scores \citep{Musco.2025}.
We compare OddSHAP-$\eta$ with $\eta=10$ against \emph{Permutation Sampling} \citep{Castro.2009}, \emph{SVARM} \citep{Kolpaczki.2024a}, \emph{MSR} \citep{witter2025regressionadjusted}, which is equivalent to \emph{Unbiased KernelSHAP} \citep{Covert.2021,Fumagalli.2023}, \emph{FourierSHAP} \citep{gorji2025shap}, and \emph{LeverageSHAP} \citep{Musco.2025}, which improves KernelSHAP \citep{Lundberg.2017} and corresponds to OddSHAP without interactions.
We further compare against the proxy-based methods \emph{RegressionMSR} \citep{witter2025regressionadjusted} and Proxy (LGBM), which corresponds to RegressionMSR without adjustment, or equivalently ProxySPEX \citep{butler.2025} using all Fourier interactions.  
All proxy-based methods, including ProxySPEX used by OddSHAP, use the same LightGBM \citep{Ke.2017} proxy with default configurations and maximum depth set to $10$.

\paragraph{Fixed-Budget Fractional Factorial Design (FFD)} We compare our method against the fixed-budget Fractional Factorial Design (FFD) framework introduced by \citet{zhou2025fast}. 
This framework enforces a combinatorial design that restricts the evaluation overhead to a fixed $\mathcal{O}(d^2)$ total function calls. 
We evaluate two distinct variants: \emph{FFD-RD} executes their baseline recursive design (RD), and \emph{FFD-RD (corrected)} invokes their subsequent adaptive, sampling-based bias correction routine. Both variants rely on a strict truncation assumption that treats all feature interactions of order $5$ and higher as exactly zero. Like OddSHAP, they exploit the algebraic property that even-order interactions completely cancel out within the Shapley value formulation.

We report \emph{mean-squared error (MSE)} using the median across explained instances and the interquartile range (IQR).
Our code is publicly available\footnote{\url{https://github.com/FFmgll/oddshap}}, and additional details and results are provided in Appendix~\ref{appx_sec_further_results}. 

\subsection{Approximation Quality of OddSHAP}
In practice, the primary computational bottleneck for Shapley value estimation is the cost of evaluating $f$.
Depending on the framework, these evaluations vary in complexity, ranging from simple model predictions \citep{Lundberg.2017} to the estimation of conditional distributions \citep{Frye.2021}, and even complete model retraining \citep{Strumbelj.2010,Wang.2023}.
Therefore, we first evaluate the performance of the approximation methods with respect to the number of queries to $f$.

\cref{fig_oddshap_mse} reports the MSE of OddSHAP and the baselines with respect to the budget $m$ in log-scale.
For the low-dimensional value functions (DistilBERT, Estate, and ViT16), OddSHAP yields the same performance as RegressionMSR, which is the best budget-flexible baseline.
For all moderate- to high-dimensional value functions, Cancer, IL60, CG60, NHANES, and Crime, OddSHAP clearly outperforms all flexible-budget baselines when having sufficient budget ($m>\eta\cdot d$) to detect and model interactions. In low sample budget scenarios, OddSHAP coincides with Proxy (LGBM), and yields comparable results with RegressionMSR or LeverageSHAP.
When compared against the fixed-budget FFD framework, \emph{FFD-RD} outperforms OddSHAP and the flexible-budget methods within equivalent budget regimes across most tree-based value functions (Estate, Cancer, IL60, and CG60). 
However, its performance degrades substantially on deep learning-based value functions (DistilBERT and ViT). 
This performance split indicates that while feature interactions of order $5$ and higher are largely negligible for tree-based architectures, they remain highly active in deep neural networks, which we explicitly confirm by approximating the spectral energy in \cref{appx_sec_spectral_energy}. 
Interestingly, the adaptive \emph{FFD-RD (corrected)} variant performs significantly worse across almost all value functions, with ViT being the sole exception. 
Furthermore, because the budget of FFD scales quadratically ($\mathcal{O}(d^2)$), their strict sample requirements expand drastically in high-dimensional settings, rendering them less practical as the feature dimension $d$ grows.
In conclusion, OddSHAP provides a flexible-budget framework that adapts robustly across diverse interaction structures, delivering highly competitive performance across all evaluated value functions.

\subsection{Ablation of Evaluation Costs and Runtime}
To simulate diverse real-world applications, we vary the evaluation cost of $f$ as $T_1 \in \{0.001,0.01,0.1,1\}$ seconds per evaluation.
Although we evaluate $f$ sequentially, we note that evaluation costs can be amortized through parallelization.
\cref{fig_runtime_all} reports the MSE (median with IQR) for the runtime of all algorithms and the smallest cost ($T_1=0.001$s).
We observe a similar pattern as in \cref{fig_oddshap_mse}, and provide other cost settings and datasets in \cref{appx_sec_further_results}.

\begin{figure}[h]
    \centering
    %legend 
        \centering
         \includegraphics[width=.9\linewidth]{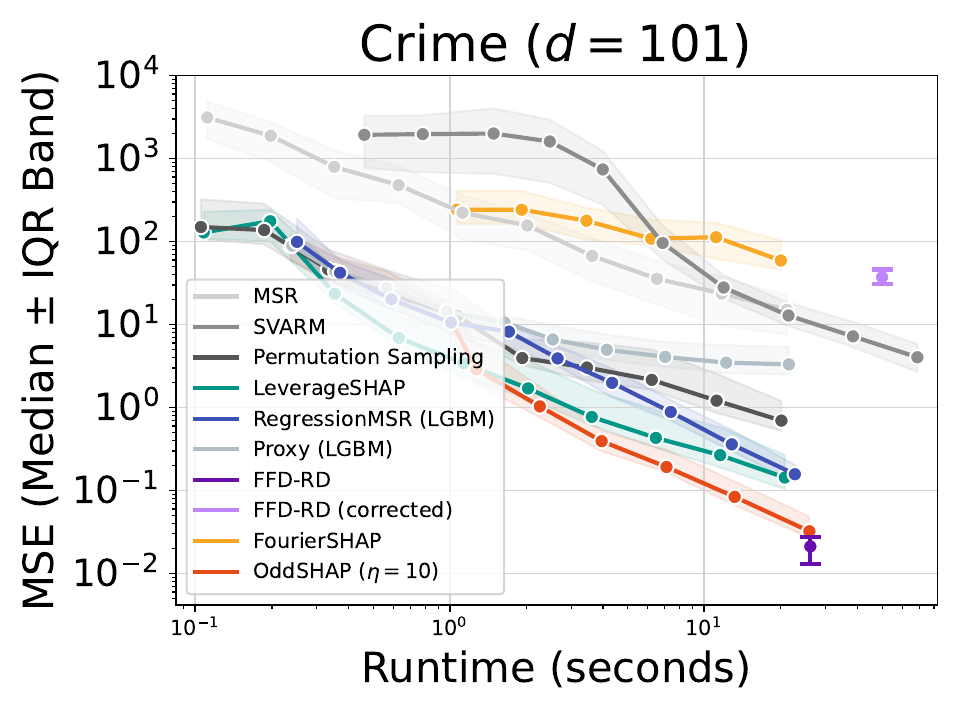}   
    \caption{Approximation quality measured by MSE (median with interquartile range) and total runtime in seconds for a simulated setting with $0.001$s per subset evaluation. Similar plots for other datasets appear in \cref{appx_sec_further_results}.}
    \label{fig_runtime_all}
\end{figure}

\subsection{Ablation of Interaction Sparsity}
Including higher-order interactions yields a more expressive approximation of $f$, but this capacity comes at the expense of increased runtime and a larger sample budget. To evaluate this trade-off, we vary the regression variable factor across $\eta \in \{2, 5, 10, 50\}$, which controls the number of included odd interactions via $|\mathcal{T}_{\textnormal{odd}}| = \lceil m/\eta \rceil - d$. \Cref{fig:intSparsity} illustrates the empirical impact on Shapley value accuracy.
We report the MSE Ratio, defined as the Shapley MSE of OddSHAP-$\eta$ normalized by the error of the interaction-free baseline, LeverageSHAP. 
Estimates across all value functions were computed under a fixed budget of 10,000 samples, and we defer additional experiments under 5,000 and 20,000 samples to \cref{appx_sec_further_results}, along with results for the Estate value function (omitted here due to outlier improvements).

\begin{figure}[h]
    \centering
    \includegraphics[width=0.9\linewidth]{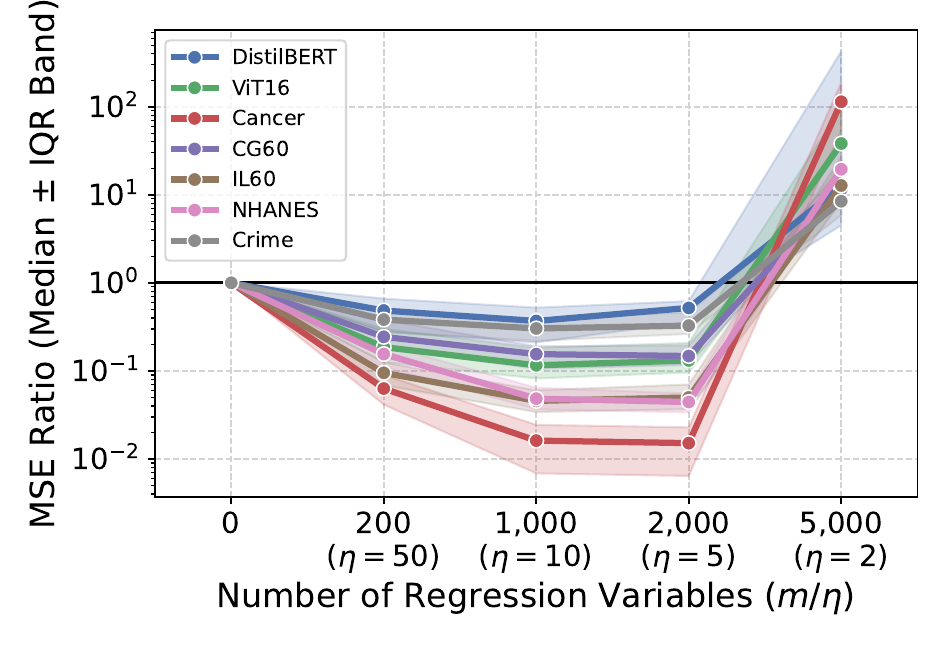}
    \caption{MSE ratio between OddSHAP and LeverageSHAP as a function of the number of regression variables in OddSHAP, computed under a fixed budget of 10,000 samples.}
    \label{fig:intSparsity}
\end{figure}

Across all datasets, we initially observe a strong decrease in the MSE Ratio as the number of interactions increases, validating that incorporating influential interactions significantly improves estimation over the interaction-free baseline. 
Specifically, with $\eta = 10$ (corresponding to roughly 1,000 interactions), we observe at least a $6\times$ reduction in error for all value functions and up to a $62\times$ reduction for one value function (Cancer). However, incorporating too many interactions eventually induces overfitting, reversing gains.

\subsection{Ablation of Paired and Non-Paired Sampling}
Paired sampling isolates the odd component of $f$ within the Fourier regression, eliminating the need to model even-order terms.
However, it is not guaranteed to outperform non-paired sampling. 
\Cref{fig:samplingAblation} illustrates the estimation accuracy achieved by paired and non-paired sampling across distinct interaction support. We report the MSE Ratio, defined as the Shapley MSE normalized by the error of OddSHAP (paired sampling with odd interactions). Estimates were computed under a fixed budget of 10,000 samples, and we defer additional experiments to \cref{appx_sec_further_results}.

\begin{figure}[h]
    \centering
    \includegraphics[width=0.9\linewidth]{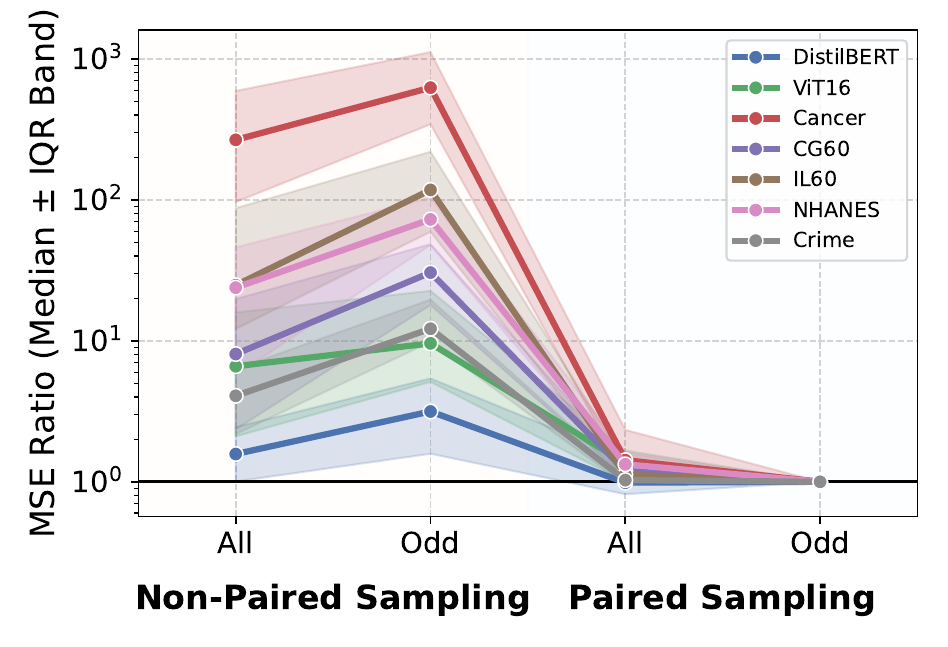}
    \caption{MSE ratio of paired and non-paired sampling with \emph{All} or \emph{Odd} interactions relative to OddSHAP under 10,000 samples.}
    \label{fig:samplingAblation}
\end{figure}

Across all datasets, non-paired sampling underperforms paired sampling, with the non-paired \emph{Odd} variant performing worst. Under paired sampling, including both even and odd interactions (\emph{All}) splits the interaction budget $|\mathcal{T}| = \lceil m/\eta \rceil$ with superfluous even terms. Because these even terms mathematically cancel out, this configuration effectively restricts odd interactions to a smaller, less expressive subset, explaining its slight performance drop. Ultimately, incorporating even interactions under paired sampling only increases runtime without improving estimation.

\section{Discussion}

We introduced OddSHAP, a budget-flexible algorithm for computing Shapley values that achieves state-of-the-art performance under larger budget constraints across a broad range of feature dimensions. OddSHAP is based on a theoretical insight: Shapley values depend only on the odd component of the value function. In addition to serving as a basis for the OddSHAP algorithm, this insight provides the first rigorous theoretical basis for the popular heuristic of \emph{paired sampling} used in prior state-of-the-art algorithms, and resolves an open conjecture in \citet{fumagalli2026polyshap}.

\paragraph{Limitations and Future Directions}
While OddSHAP offers clear advantages, it introduces certain computational and statistical limitations. Computationally, the algorithm scales quadratically with the number of selected interactions; since our formulation ties the interaction count to the budget $m$, the overall runtime is cubic in $m$. While the regression factor $\eta=10$ ensures stable performance across diverse applications, we recommend capping and decoupling the number of selected interactions for larger budgets.
Statistically, paired sampling does not strictly guarantee outperforming non-paired sampling. While it isolates odd interactions, reducing the regression to $m/2$ paired observations, rather than $m$ independent samples, halves the number of rows in the design matrix. This fundamentally inflates estimator variance and increases mutual coherence due to the loss of independent subset coverage.
Moreover, exploring the empirical correlations between even and odd interactions observed by \citet{butler.2025} presents a promising avenue to leverage even components and improve estimation. 
Finally, unlike baselines such as FFD-RD that rely on rigid fixed-budget assumptions of non-existing higher-order interactions, OddSHAP accommodates more flexible interaction structures. In practice, pre-existing structural knowledge of the value function could be explicitly combined with OddSHAP's interaction detection to further optimize sample efficiency, a compelling direction for future exploration.

\clearpage

\section*{Acknowledgments}
Fabian Fumagalli gratefully acknowledges funding by the Deutsche Forschungsgemeinschaft (DFG, German Research Foundation): TRR 318/3 2026 – 438445824. 
This material is based upon work supported by the National Science Foundation Graduate Research Fellowship Program under Grant No. DGE-2146752. Any opinions, findings, and conclusions or recommendations expressed in this material are those of the author(s) and do not necessarily reflect the views of the National Science Foundation.

\section*{Impact Statement}

This paper presents work whose goal is to advance the field of 
Machine Learning. There are many potential societal consequences 
of our work, none of which we feel must be specifically highlighted here.

\bibliography{references}
\bibliographystyle{icml2026}

%%%%%%%%%%%%%%%%%%%%%%%%%%%%%%%%%%%%%%%%%%%%%%%%%%%%%%%%%%%%%%%%%%%%%%%%%%%%%%%
%%%%%%%%%%%%%%%%%%%%%%%%%%%%%%%%%%%%%%%%%%%%%%%%%%%%%%%%%%%%%%%%%%%%%%%%%%%%%%%
% APPENDIX
%%%%%%%%%%%%%%%%%%%%%%%%%%%%%%%%%%%%%%%%%%%%%%%%%%%%%%%%%%%%%%%%%%%%%%%%%%%%%%%
%%%%%%%%%%%%%%%%%%%%%%%%%%%%%%%%%%%%%%%%%%%%%%%%%%%%%%%%%%%%%%%%%%%%%%%%%%%%%%%
\newpage
\appendix
\onecolumn

\clearpage
\section{Expanded Table}
\label{app:expandedtable}

\begin{table}[h!]
    \centering
    \caption{Summary statistics of MSE for Shapley value estimators with $m \approx 100d$. OddSHAP achieves the lowest average rank.}
    \resizebox{\linewidth}{!}{%
\begin{tabular}{lccccccccc}
\hline
 & \textbf{DistilBERT ($d=14$)} & \textbf{Estate ($d=15$)} & \textbf{ViT16 ($d=16$)} & \textbf{Cancer ($d=30$)} & \textbf{IL60 ($d=60$)} & \textbf{CG60 ($d=60$)} & \textbf{NHANES ($d=79$)} & \textbf{Crime ($d=101$)} & \textbf{Avg. Rank} \\
$m$ & $1440$ & $1722$ & $1705$ & $2281$ & $5521$ & $5521$ & $5864$ & $11126$ & -- \\
\hline
\textbf{MSR}  &  &  &  &  &  &  &  &  &  \\
\hspace{7pt}Mean & $7.5\times 10^{-4}$ & $2.8\times 10^{-2}$ & $1.2\times 10^{-4}$ & $2.2\times 10^{-2}$ & $3.5\times 10^{-3}$ & $2.1\times 10^{-3}$ & $4.7\times 10^{-2}$ & $3.6\times 10^{1}$ & 7.75 \\
\hspace{7pt}1st Quartile & $1.6\times 10^{-4}$ & $1.7\times 10^{-2}$ & $4.4\times 10^{-5}$ & $1.2\times 10^{-2}$ & $1.9\times 10^{-3}$ & $1.1\times 10^{-3}$ & $2.1\times 10^{-2}$ & $9.7\times 10^{0}$ & 7.62 \\
\hspace{7pt}Median & $5.6\times 10^{-4}$ & $2.8\times 10^{-2}$ & $1.2\times 10^{-4}$ & $2.3\times 10^{-2}$ & $2.7\times 10^{-3}$ & $1.6\times 10^{-3}$ & $3.6\times 10^{-2}$ & $2.4\times 10^{1}$ & 7.88 \\
\hspace{7pt}3rd Quartile & $8.8\times 10^{-4}$ & $4.0\times 10^{-2}$ & $1.8\times 10^{-4}$ & $2.9\times 10^{-2}$ & $4.5\times 10^{-3}$ & $2.1\times 10^{-3}$ & $6.3\times 10^{-2}$ & $3.7\times 10^{1}$ & 7.75 \\
\hline
\textbf{SVARM}  &  &  &  &  &  &  &  &  &  \\
\hspace{7pt}Mean & $3.7\times 10^{-4}$ & $3.3\times 10^{-2}$ & $5.7\times 10^{-5}$ & $3.5\times 10^{-3}$ & $2.2\times 10^{-3}$ & $9.7\times 10^{-4}$ & $4.0\times 10^{-2}$ & $1.5\times 10^{1}$ & 6.75 \\
\hspace{7pt}1st Quartile & $6.5\times 10^{-5}$ & $1.6\times 10^{-2}$ & $2.6\times 10^{-5}$ & $2.3\times 10^{-3}$ & $1.6\times 10^{-3}$ & $7.8\times 10^{-4}$ & $1.8\times 10^{-2}$ & $5.3\times 10^{0}$ & 6.50 \\
\hspace{7pt}Median & $2.4\times 10^{-4}$ & $3.2\times 10^{-2}$ & $5.2\times 10^{-5}$ & $3.0\times 10^{-3}$ & $1.8\times 10^{-3}$ & $8.7\times 10^{-4}$ & $3.1\times 10^{-2}$ & $7.2\times 10^{0}$ & 6.62 \\
\hspace{7pt}3rd Quartile & $4.7\times 10^{-4}$ & $4.8\times 10^{-2}$ & $7.5\times 10^{-5}$ & $4.7\times 10^{-3}$ & $2.4\times 10^{-3}$ & $1.1\times 10^{-3}$ & $5.3\times 10^{-2}$ & $1.1\times 10^{1}$ & 6.75 \\
\hline
\textbf{PermutationSampling}  &  &  &  &  &  &  &  &  &  \\
\hspace{7pt}Mean & $6.2\times 10^{-4}$ & $5.3\times 10^{-3}$ & $1.2\times 10^{-4}$ & $1.1\times 10^{-4}$ & $1.3\times 10^{-4}$ & $1.4\times 10^{-4}$ & $3.5\times 10^{-3}$ & $2.7\times 10^{0}$ & 5.62 \\
\hspace{7pt}1st Quartile & $2.3\times 10^{-4}$ & $1.9\times 10^{-3}$ & $6.2\times 10^{-5}$ & $5.3\times 10^{-5}$ & $9.1\times 10^{-5}$ & $8.0\times 10^{-5}$ & $1.3\times 10^{-3}$ & $8.6\times 10^{-1}$ & 5.75 \\
\hspace{7pt}Median & $4.9\times 10^{-4}$ & $2.9\times 10^{-3}$ & $9.1\times 10^{-5}$ & $7.2\times 10^{-5}$ & $1.2\times 10^{-4}$ & $1.2\times 10^{-4}$ & $2.1\times 10^{-3}$ & $1.2\times 10^{0}$ & 5.62 \\
\hspace{7pt}3rd Quartile & $8.6\times 10^{-4}$ & $4.4\times 10^{-3}$ & $1.6\times 10^{-4}$ & $1.7\times 10^{-4}$ & $1.9\times 10^{-4}$ & $1.6\times 10^{-4}$ & $4.2\times 10^{-3}$ & $2.6\times 10^{0}$ & 5.62 \\
\hline
\textbf{LeverageSHAP}  &  &  &  &  &  &  &  &  &  \\
\hspace{7pt}Mean & \cellcolor{bronze!60}$7.7\times 10^{-5}$ & $3.4\times 10^{-4}$ & \cellcolor{bronze!60}$3.5\times 10^{-5}$ & \cellcolor{bronze!60}$3.2\times 10^{-5}$ & \cellcolor{bronze!60}$2.6\times 10^{-5}$ & \cellcolor{silver!60}$2.5\times 10^{-5}$ & \cellcolor{bronze!60}$6.2\times 10^{-4}$ & \cellcolor{bronze!60}$7.5\times 10^{-1}$ & 3.25 \\
\hspace{7pt}1st Quartile & \cellcolor{bronze!60}$3.2\times 10^{-5}$ & $9.2\times 10^{-5}$ & $1.9\times 10^{-5}$ & \cellcolor{bronze!60}$1.1\times 10^{-5}$ & \cellcolor{bronze!60}$1.3\times 10^{-5}$ & \cellcolor{silver!60}$1.8\times 10^{-5}$ & \cellcolor{bronze!60}$2.4\times 10^{-4}$ & \cellcolor{silver!60}$1.9\times 10^{-1}$ & 3.38 \\
\hspace{7pt}Median & $6.9\times 10^{-5}$ & $1.6\times 10^{-4}$ & $2.7\times 10^{-5}$ & \cellcolor{bronze!60}$2.3\times 10^{-5}$ & \cellcolor{bronze!60}$2.2\times 10^{-5}$ & \cellcolor{silver!60}$2.2\times 10^{-5}$ & \cellcolor{bronze!60}$4.8\times 10^{-4}$ & \cellcolor{silver!60}$2.7\times 10^{-1}$ & 3.38 \\
\hspace{7pt}3rd Quartile & $1.1\times 10^{-4}$ & $2.6\times 10^{-4}$ & \cellcolor{bronze!60}$4.3\times 10^{-5}$ & \cellcolor{bronze!60}$4.2\times 10^{-5}$ & \cellcolor{bronze!60}$3.2\times 10^{-5}$ & \cellcolor{silver!60}$3.2\times 10^{-5}$ & \cellcolor{bronze!60}$9.0\times 10^{-4}$ & \cellcolor{bronze!60}$5.8\times 10^{-1}$ & 3.38 \\
\hline
\textbf{PolySHAP-3}  &  &  &  &  &  &  &  &  &  \\
\hspace{7pt}Mean & $8.0\times 10^{-5}$ & \cellcolor{gold!60}$3.2\times 10^{-7}$ & $3.8\times 10^{-5}$ & $4.0\times 10^{-5}$ &  &  &  &  & 3.25 \\
\hspace{7pt}1st Quartile & $4.5\times 10^{-5}$ & \cellcolor{gold!60}$3.1\times 10^{-8}$ & $1.7\times 10^{-5}$ & $1.3\times 10^{-5}$ &  &  &  &  & 3.25 \\
\hspace{7pt}Median & \cellcolor{bronze!60}$6.6\times 10^{-5}$ & \cellcolor{gold!60}$1.1\times 10^{-7}$ & $2.8\times 10^{-5}$ & $2.7\times 10^{-5}$ &  &  &  &  & 3.25 \\
\hspace{7pt}3rd Quartile & \cellcolor{bronze!60}$1.1\times 10^{-4}$ & \cellcolor{gold!60}$3.7\times 10^{-7}$ & $4.8\times 10^{-5}$ & $6.2\times 10^{-5}$ &  &  &  &  & 3.25 \\
\hline
\textbf{RegressionMSR}  &  &  &  &  &  &  &  &  &  \\
\hspace{7pt}Mean & \cellcolor{gold!60}$3.1\times 10^{-5}$ & $1.3\times 10^{-5}$ & \cellcolor{gold!60}$1.0\times 10^{-5}$ & $3.0\times 10^{-4}$ & \cellcolor{silver!60}$9.7\times 10^{-6}$ & \cellcolor{bronze!60}$7.6\times 10^{-5}$ & \cellcolor{silver!60}$2.4\times 10^{-4}$ & \cellcolor{silver!60}$5.6\times 10^{-1}$ & 2.62 \\
\hspace{7pt}1st Quartile & \cellcolor{gold!60}$1.4\times 10^{-5}$ & $6.5\times 10^{-6}$ & \cellcolor{gold!60}$4.4\times 10^{-6}$ & $1.8\times 10^{-4}$ & \cellcolor{silver!60}$6.1\times 10^{-6}$ & \cellcolor{bronze!60}$5.0\times 10^{-5}$ & \cellcolor{silver!60}$1.3\times 10^{-4}$ & \cellcolor{bronze!60}$3.0\times 10^{-1}$ & 2.75 \\
\hspace{7pt}Median & \cellcolor{gold!60}$2.2\times 10^{-5}$ & $1.0\times 10^{-5}$ & \cellcolor{silver!60}$8.9\times 10^{-6}$ & $2.7\times 10^{-4}$ & \cellcolor{silver!60}$1.0\times 10^{-5}$ & \cellcolor{bronze!60}$7.4\times 10^{-5}$ & \cellcolor{silver!60}$1.7\times 10^{-4}$ & \cellcolor{bronze!60}$3.6\times 10^{-1}$ & 2.88 \\
\hspace{7pt}3rd Quartile & \cellcolor{gold!60}$4.1\times 10^{-5}$ & $1.8\times 10^{-5}$ & \cellcolor{gold!60}$1.2\times 10^{-5}$ & $4.1\times 10^{-4}$ & \cellcolor{silver!60}$1.2\times 10^{-5}$ & \cellcolor{bronze!60}$1.0\times 10^{-4}$ & \cellcolor{silver!60}$2.7\times 10^{-4}$ & \cellcolor{silver!60}$5.4\times 10^{-1}$ & 2.62 \\
\hline
\textbf{Proxy}  &  &  &  &  &  &  &  &  &  \\
\hspace{7pt}Mean & $3.6\times 10^{-4}$ & $3.3\times 10^{-5}$ & $4.6\times 10^{-5}$ & $4.2\times 10^{-4}$ & $4.9\times 10^{-5}$ & $2.9\times 10^{-4}$ & $7.0\times 10^{-4}$ & $5.3\times 10^{0}$ & 5.00 \\
\hspace{7pt}1st Quartile & $1.1\times 10^{-4}$ & $2.1\times 10^{-5}$ & \cellcolor{bronze!60}$1.4\times 10^{-5}$ & $2.5\times 10^{-4}$ & $3.3\times 10^{-5}$ & $1.9\times 10^{-4}$ & $4.1\times 10^{-4}$ & $2.6\times 10^{0}$ & 4.88 \\
\hspace{7pt}Median & $2.9\times 10^{-4}$ & $2.5\times 10^{-5}$ & \cellcolor{bronze!60}$2.4\times 10^{-5}$ & $3.8\times 10^{-4}$ & $4.6\times 10^{-5}$ & $3.0\times 10^{-4}$ & $5.3\times 10^{-4}$ & $3.5\times 10^{0}$ & 4.88 \\
\hspace{7pt}3rd Quartile & $4.8\times 10^{-4}$ & $3.9\times 10^{-5}$ & $4.6\times 10^{-5}$ & $5.5\times 10^{-4}$ & $6.1\times 10^{-5}$ & $3.7\times 10^{-4}$ & $9.9\times 10^{-4}$ & $5.6\times 10^{0}$ & 5.00 \\
\hline
\textbf{FFD-RD}  &  &  &  &  &  &  &  &  &  \\
\hspace{7pt}Mean & $1.6\times 10^{-3}$ & \cellcolor{bronze!60}$1.8\times 10^{-6}$ & $5.3\times 10^{-4}$ & \cellcolor{gold!60}$6.4\times 10^{-7}$ &  &  &  &  & 5.75 \\
\hspace{7pt}1st Quartile & $5.4\times 10^{-4}$ & \cellcolor{silver!60}$2.2\times 10^{-7}$ & $1.8\times 10^{-4}$ & \cellcolor{gold!60}$6.3\times 10^{-8}$ &  &  &  &  & 5.50 \\
\hspace{7pt}Median & $1.3\times 10^{-3}$ & \cellcolor{bronze!60}$5.3\times 10^{-7}$ & $2.9\times 10^{-4}$ & \cellcolor{gold!60}$2.0\times 10^{-7}$ &  &  &  &  & 5.75 \\
\hspace{7pt}3rd Quartile & $2.8\times 10^{-3}$ & \cellcolor{bronze!60}$1.8\times 10^{-6}$ & $7.6\times 10^{-4}$ & \cellcolor{gold!60}$3.5\times 10^{-7}$ &  &  &  &  & 6.00 \\
\hline
\textbf{FFD-RD-Corrected}  &  &  &  &  &  &  &  &  &  \\
\hspace{7pt}Mean & $1.8\times 10^{-3}$ &  & $6.8\times 10^{-5}$ &  &  &  &  &  & 8.50 \\
\hspace{7pt}1st Quartile & $1.8\times 10^{-3}$ &  & $6.8\times 10^{-5}$ &  &  &  &  &  & 9.50 \\
\hspace{7pt}Median & $1.8\times 10^{-3}$ &  & $6.8\times 10^{-5}$ &  &  &  &  &  & 8.50 \\
\hspace{7pt}3rd Quartile & $1.8\times 10^{-3}$ &  & $6.8\times 10^{-5}$ &  &  &  &  &  & 7.50 \\
\hline
\textbf{FourierSHAP}  &  &  &  &  &  &  &  &  &  \\
\hspace{7pt}Mean & $1.9\times 10^{-2}$ & $3.0\times 10^{-2}$ & $4.9\times 10^{-3}$ & $7.7\times 10^{-2}$ & $4.1\times 10^{-3}$ & $8.5\times 10^{-3}$ & $1.7\times 10^{-1}$ & $1.9\times 10^{2}$ & 9.12 \\
\hspace{7pt}1st Quartile & $2.9\times 10^{-3}$ & $1.4\times 10^{-2}$ & $2.6\times 10^{-3}$ & $3.1\times 10^{-2}$ & $2.0\times 10^{-3}$ & $6.2\times 10^{-3}$ & $8.2\times 10^{-2}$ & $6.2\times 10^{1}$ & 9.00 \\
\hspace{7pt}Median & $7.4\times 10^{-3}$ & $2.4\times 10^{-2}$ & $3.4\times 10^{-3}$ & $7.2\times 10^{-2}$ & $3.2\times 10^{-3}$ & $8.2\times 10^{-3}$ & $1.2\times 10^{-1}$ & $1.1\times 10^{2}$ & 9.00 \\
\hspace{7pt}3rd Quartile & $2.5\times 10^{-2}$ & $4.2\times 10^{-2}$ & $6.4\times 10^{-3}$ & $9.3\times 10^{-2}$ & $5.2\times 10^{-3}$ & $9.7\times 10^{-3}$ & $2.3\times 10^{-1}$ & $1.7\times 10^{2}$ & 9.12 \\
\hline
\textbf{OddSHAP}  &  &  &  &  &  &  &  &  &  \\
\hspace{7pt}Mean & \cellcolor{silver!60}$4.6\times 10^{-5}$ & \cellcolor{silver!60}$5.1\times 10^{-7}$ & \cellcolor{silver!60}$1.3\times 10^{-5}$ & \cellcolor{silver!60}$4.2\times 10^{-6}$ & \cellcolor{gold!60}$1.6\times 10^{-6}$ & \cellcolor{gold!60}$6.2\times 10^{-6}$ & \cellcolor{gold!60}$5.8\times 10^{-5}$ & \cellcolor{gold!60}$1.3\times 10^{-1}$ & 1.50 \\
\hspace{7pt}1st Quartile & \cellcolor{silver!60}$1.6\times 10^{-5}$ & \cellcolor{bronze!60}$2.4\times 10^{-7}$ & \cellcolor{silver!60}$5.4\times 10^{-6}$ & \cellcolor{silver!60}$2.5\times 10^{-6}$ & \cellcolor{gold!60}$1.2\times 10^{-6}$ & \cellcolor{gold!60}$4.5\times 10^{-6}$ & \cellcolor{gold!60}$3.2\times 10^{-5}$ & \cellcolor{gold!60}$6.6\times 10^{-2}$ & 1.62 \\
\hspace{7pt}Median & \cellcolor{silver!60}$3.8\times 10^{-5}$ & \cellcolor{silver!60}$3.9\times 10^{-7}$ & \cellcolor{gold!60}$8.6\times 10^{-6}$ & \cellcolor{silver!60}$3.5\times 10^{-6}$ & \cellcolor{gold!60}$1.6\times 10^{-6}$ & \cellcolor{gold!60}$5.8\times 10^{-6}$ & \cellcolor{gold!60}$4.9\times 10^{-5}$ & \cellcolor{gold!60}$8.4\times 10^{-2}$ & 1.38 \\
\hspace{7pt}3rd Quartile & \cellcolor{silver!60}$5.7\times 10^{-5}$ & \cellcolor{silver!60}$7.1\times 10^{-7}$ & \cellcolor{silver!60}$1.8\times 10^{-5}$ & \cellcolor{silver!60}$5.4\times 10^{-6}$ & \cellcolor{gold!60}$2.0\times 10^{-6}$ & \cellcolor{gold!60}$7.8\times 10^{-6}$ & \cellcolor{gold!60}$6.7\times 10^{-5}$ & \cellcolor{gold!60}$1.3\times 10^{-1}$ & 1.50 \\
\hline
\end{tabular}}
\end{table}

\clearpage
\section{Delayed Proofs}
\label{app:proofs}

\noindent \textbf{Observation~\ref{observation:evenshapley}} (Shapley values of Odd and Even Functions)

\begin{proof}
Rearranging Equation \ref{eq_def_shapley}, we have
\begin{align}
    \phi_i(f) = 
    \sum_{S \subseteq [d]} f(S)
    (\mathbbm{1}[i \in S] p_{|S|-1} - \mathbbm{1}[i \notin S] p_{|S|}).
    \label{eq:phi_by_S}
\end{align}
Since $f=f_\text{odd}+f_\text{even}$ and the Shapley value $\phi_i(\cdot)$ is a linear operator, it suffices to show that $\phi_i(f_\text{even})=0$.
We will next rewrite the summation over complementary pairs $S$ and $S^c$.
We pair each subset containing $i$ with its complement, which does not contain $i$. Let $S$ denote the subset in each pair where $i \in S$.
Then,
\begin{align}
    \phi_i(f_{\even})
    &=\sum_{S, S^c} 
    f_\text{even}(S) p_{|S|-1} - f_\text{even}(S^c) p_{|S^c|}
    \nonumber \\
    &=\sum_{S, S^c}
    f_\text{even}(S) [p_{|S|-1} -  p_{|S^c|}],
    \label{eq:evenshapleygrouped}
\end{align}
where the last equality follows because $f_\text{even}$ is even (i.e., $f_\text{even}(S) = f_\text{even}(S^c)$).
Notice that $|S^c| = d - |S|$.
Using the symmetry of the binomial coefficient, it is easy to show that $p_{|S|-1} = p_{d-|S|}$.
Then every term in \cref{eq:evenshapleygrouped} is $0$, and the lemma statement follows.
\end{proof}

\noindent \textbf{Theorem \ref{thm:separation}} (Even-Odd Separation via Paired Sampling).
Suppose the vector space $\mathcal{V}$ is even-odd decomposable, and let $w_{|S|}$ be a symmetric weight with $w_{|S|} = w_{d-|S|}$.
Under paired sampling, the weighted least squares projection onto the affine space $\mathcal{F}(f)$ completely decouples,
\begin{align*} 
\argmin_{g \in \mathcal{F}(f)} & \sum_{S \in \mathcal{S}} w_{|S|} \left( f(S) - g(S)\right)^2 
\\
&= \argmin_{g_\textnormal{odd} \in \mathcal{F}_{\textnormal{odd}}(f_\textnormal{odd})} \sum_{S \in \mathcal{S}}w_{|S|}\left( f_{\textnormal{odd}}(S) - g_\textnormal{odd}(S)\right)^2 + \argmin_{g_\textnormal{even} \in \mathcal{F}_{\textnormal{even}}(f_\textnormal{even})} \sum_{S \in \mathcal{S}}w_{|S|}\left( f_{\textnormal{even}}(S) - g_\textnormal{even}(S)\right)^2. 
\end{align*}

\begin{proof}[Proof of Theorem \ref{thm:separation}] 
Consider the inner summation for a specific pair $S$ and $S^c$. 
Since $w_{|S|} = w_{|S^c|}$,
we can expand the sum over the pair and apply an even-odd decomposition:
\begin{align}
&\left( f(S) - g(S)\right)^2
+ \left( f(S^c) - g(S^c)\right)^2
\nonumber\\
&=\left(f_{\text{even}}(S) + f_{\text{odd}}(S) - g_{\text{even}}(S)-g_{\text{odd}}(S)\right)^2  
+\left(f_{\text{even}}(S^c) + f_{\text{odd}}(S^c) - g_{\text{even}}(S^c)-g_{\text{odd}}(S^c)\right)^2.  
\label{eq:fevenodd_expansion}
\end{align}
Using the definition of even and odd functions on the complement:
\begin{align}
 &(\ref{eq:fevenodd_expansion}) = \left(f_{\text{even}}(S) + f_{\text{odd}}(S) - g_{\text{even}}(S)-g_{\text{odd}}(S)\right)^2
+\left(f_{\text{even}}(S) - f_{\text{odd}}(S) -g_{\text{even}}(S)+g_{\text{odd}}(S)\right)^2.  
\label{eq:fevenodd_def_applied}
\end{align}
After expanding and refactoring:
\begin{align*}
 &(\ref{eq:fevenodd_def_applied})
=2\left( f_{\text{odd}}(S) - g_{\text{odd}}(S)\right)^2 + 2\left( f_{\text{even}}(S) - g_{\text{even}}(S)\right)^2
\\&=\left( f_{\text{odd}}(S) - g_{\text{odd}}(S)\right)^2 + \left( f_{\text{even}}(S) - g_{\text{even}}(S)\right)^2 
+ \left( f_{\text{odd}}(S^c) - g_{\text{odd}}(S^c)\right)^2 + \left( f_{\text{even}}(S^c) - g_{\text{even}}(S^c)\right)^2,
\end{align*}
where the last equation follows by again applying the definition of even and odd functions to half of the terms.
Substituting this expression back into the objective and separating the minimization completes the proof. 
\end{proof}
\noindent \textbf{Corollary \ref{cor:odd_unanimity}} (Frontier Invariance under Unanimity).
Under paired sampling, if $k$ is odd and $\hat{f}_k$ is the solution to \cref{eq:evenOddSep} under class $\mathcal{F}_M(f, \mathcal{T}_{\leq k})$, then
\begin{align*}
\phi_i(\hat{f}_k) = \phi_i(\hat{f}_{k+1}) \quad \forall i \in [d].
\end{align*}
\begin{proof}
    By Lemma \ref{lemma:equivalence}, the restricted unanimity class $\mathcal{F}_M(f, \mathcal{T}_{\leq k})$ is equivalent to the Fourier restricted class $\mathcal{F}_F(f, \mathcal{T}_{\leq k})$. We can therefore parameterize the solution space using Fourier coefficients, where the conversion to unanimity coefficients is  given by Eq. 2.76 in \citep{Grabisch.2016}:
\begin{equation*}
    \alpha_S = (-2)^{|S|}\sum_{T \supseteq S} \beta_T.
\end{equation*}
Consider the expansion from degree limit $k$ to $k+1$. By the conversion above, it can be seen that this only introduces Fourier basis terms corresponding to subsets $T$ where $|T| = k+1$. Since $k$ is odd, $k+1$ is even. Thus, the expansion only adds dimensions to the even subspace of the Fourier domain. The subspace of odd-degree Fourier terms remains identical. 

By Theorem \ref{thm:separation}, under paired sampling, the optimization of the odd component is independent of the even subspace. Consequently, the odd component of the minimizer remains invariant, and by Observation \ref{observation:evenshapley}, $\phi(\hat{f}_k) = \phi(\hat{f}_{k+1})$.
\end{proof}

\noindent \textbf{Lemma \ref{lemma:equivalence}} (Unanimity and Fourier Equivalence).
\textit{
    For any $f$, the span of the unanimity and Fourier restricted classes on $\mathcal{T}_{\leq k}$ are the same i.e.,
    \begin{align*}
        \mathcal{F}_M(f, \mathcal{T}_{\leq k}) 
        = \mathcal{F}_F(f, \mathcal{T}_{\leq k}).
    \end{align*}
}

\begin{proof}[Proof of Lemma \ref{lemma:equivalence}]
We can use a change of variable to show this quickly.
First, observe that the constraints $g(S) = f(S), S\in \{\emptyset, [d]\}$ are conditions on the function values, which are independent of the basis representation. Therefore, it suffices to show that the unconstrained spans of the two bases restricted to order $k$ are identical.

\emph{Fourier to Unanimity:} Assume $g$ is in the unconstrained Fourier class of up to order $k$ (i.e., $\beta_T = 0$ for all $|T| > k$). To see the conversion from Fourier to unanimity coefficients, we can represent coalitions using an indicator vector $\mathbf{x} \in \{0, 1\}^d$ where $x_i = 1$ if the $i$-th feature is present. The bases can then be written as $u_T(\mathbf{x}) = \prod_{i\in T} x_i$ and $\chi_T(\mathbf{x}) = \prod_{i\in T}(1 - 2x_i)$. Expanding the Fourier basis yields $\chi_T(\mathbf{x}) = \sum_{S \subseteq T} (-2)^{|S|} u_S(\mathbf{x})$, which immediately gives the coefficient conversion formula \citep{Grabisch.2016}:
\begin{equation*}
    \alpha_S = (-2)^{|S|}\sum_{T \supseteq S} \beta_T.
\end{equation*}
For any set $S$ such that $|S| > k$, the condition $T \supseteq S$ implies $|T| \geq |S| > k$. Thus, $\beta_T = 0$ for all such $T$, which implies $\alpha_S= 0$. Thus, $g$ admits a representation in the unanimity class of up to order $k$.

\emph{Unanimity to Fourier:} Conversely, assume $g$ is in the unanimity class of up to order $k$ (i.e., $\alpha_S = 0$ for all $|S| > k$). Alternatively, we can represent coalitions using $\mathbf{x} \in \{1, -1\}^d$ where $-1$ indicates the presence of a feature. In this case, the bases are $\chi_T(\mathbf{x}) = \prod_{i\in T} x_i$ and $u_T(\mathbf{x}) = \prod_{i\in T} \frac{1-x_i}{2}$. Expanding the unanimity basis yields $u_T(\mathbf{x}) = \frac{1}{2^{|T|}} \sum_{S \subseteq T} (-1)^{|S|} \chi_S(\mathbf{x})$, which gives the reverse conversion \citep{Grabisch.2016}:
\begin{equation}
    \beta_T = (-1)^{|T|}\sum_{S \supseteq T} \frac{\alpha_S}{2^{|S|}}.
\end{equation}

Since the unconstrained subspaces are identical, and the boundary constraints are basis-independent, we conclude that $\mathcal{F}_M(f, \mathcal{T}_{\leq k}) = \mathcal{F}_F(f, \mathcal{T}_{\leq k})$.
\end{proof}

\noindent \textbf{Theorem \ref{thm:fourier_regression}} (Fourier Regression). \textit{
    Consider any collection of coalitions $\mathcal{T} \supset \mathcal{T}_{\leq 1}$.
    Consider the best polynomial approximation to $f$ in the Fourier basis on $\mathcal{T}$:
    \begin{align*}
        \hat{f} = \argmin_{g \in \mathcal{F}_F(f, \mathcal{T})}
        \sum_{S \subseteq [d]: 0 < \vert S \vert < d} w_{|S|}
        \left(
        f(S)-g(S)
        \right)^2
    \end{align*}
    Then $\phi_i(f) = \phi_i(\hat{f})$ for all $i \in [d]$.
}

\begin{proof}[Proof of Theorem \ref{thm:fourier_regression}]
The proof relies on the nested structure of the regression spaces and the property that orthogonal projections onto nested subspaces can be composed.

First, by Lemma \ref{lemma:equivalence}, the restricted function classes for the unanimity and Fourier bases on $\mathcal{T}_{\leq 1}$ are identical:

\begin{align}
\mathcal{F}_F(f, \mathcal{T}_{\leq 1}) = \mathcal{F}_M(f, \mathcal{T}_{\leq 1}).
\end{align}

Let $\mathcal{V}_{\text{small}} = \mathcal{F}_F(f, \mathcal{T}_{\leq 1})$ denote the affine subspace of additive functions satisfying the boundary constraints. Similarly, let $\mathcal{V}_{\text{large}} = \mathcal{F}_F(f, \mathcal{T})$ denote the affine subspace spanned by the Fourier basis on $\mathcal{T}$ satisfying the same constraints. Since $\mathcal{T}_{\leq 1} \subset \mathcal{T}$, it follows that $\mathcal{V}_{\text{small}} \subset \mathcal{V}_{\text{large}}$.

We define the weighted inner product $\langle g, h \rangle_w = \sum_{S} w_{|S|} g(S) h(S)$, where the boundary weights $w_0$ and $w_d$ can be chosen to be any arbitrary positive constants.
The regression problem in \cref{eq:fourierproblem} is equivalent to finding the orthogonal projection of $f$ onto $\mathcal{V}_{\text{large}}$ under this inner product. 
Let $\hat{f}$ be this solution:
\begin{align*}
\hat{f} = \text{proj}_{\mathcal{V}_{\text{large}}}(f).
\end{align*}
Similarly, let $\hat{g}$ be the best linear approximation to $f$ (the solution to \cref{eq:linearproblem}):
\begin{align*}
\hat{g} = \text{proj}_{\mathcal{V}_{\text{small}}}(f).
\end{align*}
By the tower property of projections onto nested subspaces, projecting $f$ onto the smaller space $\mathcal{V}_{\text{small}}$ is equivalent to first projecting $f$ onto the larger space $\mathcal{V}_{\text{large}}$, and then projecting the result onto $\mathcal{V}_{\text{small}}$. Therefore:
\begin{align}
\hat{g} = \text{proj}_{\mathcal{V}_{\text{small}}}(\hat{f}).
\end{align}
This implies that $\hat{g}$ is \textit{also} the best linear approximation to $\hat{f}$.

We now invoke Theorem \ref{thm:linearregression}. The theorem states that for any function, its Shapley values are exactly recovered by its best linear approximation. Applying this to $f$ and $\hat{f}$ respectively:\begin{enumerate}\item $\phi_i(f) = \phi_i(\hat{g})$ (Applying Theorem \ref{thm:linearregression} to $f$).\item $\phi_i(\hat{f}) = \phi_i(\hat{g})$ (Applying Theorem \ref{thm:linearregression} to $\hat{f}$).\end{enumerate}Combining these equalities yields $\phi_i(f) = \phi_i(\hat{f})$, completing the proof.
\end{proof}

\clearpage
\section{Sampling and Regressing in OddSHAP}
\label{app:oddshapdetails}

In this appendix, we detail the sampling strategy used in OddSHAP and derive the closed-form constraints required to enforce the efficiency property in the Fourier domain.

\subsection{Sampling Strategy}
We sample coalitions $S \subseteq [d]$ without replacement, with uniform weights by coalition size. This strategy aligns naturally with the definition of the Shapley value, which equally weights coalitions uniformly by their size.
Furthermore, this distribution is theoretically grounded: \citet{Musco.2025} demonstrate that these weights correspond to the leverage scores of the weighted linear regression problem in Theorem \ref{thm:linearregression}.

This sampling procedure is consistent with the standard implementations of KernelSHAP, LeverageSHAP, and PolySHAP ensuring a fair comparison between estimators.

\subsection{Exact Boundary Constraints in the Fourier Domain}

A critical requirement for Shapley value estimators is the \textit{efficiency} property, which mandates that the sum of the estimated feature attributions equals the difference between the total payout and the baseline: $\sum \phi_i = f([d]) - f(\emptyset)$. In the regression framework, this is equivalent to constraining the approximation $\hat{f}$ to match the ground truth on the empty and full sets:
\begin{align}
\hat{f}(\emptyset) = f(\emptyset) \quad \text{and} \quad \hat{f}([d]) = f([d]).
\end{align}
Standard implementations of KernelSHAP enforce these constraints ``softly'' by assigning a very large weight (simulating infinity) to the samples for $\emptyset$ and $[d]$. However, this approach can lead to numerical instability and ill-conditioned design matrices. In contrast, LeverageSHAP \citep{Musco.2025}, PolySHAP \citep{fumagalli2026polyshap}, and our proposed OddSHAP solve the regression subject to exact constraints.

We can neatly derive these constraints for the Fourier basis. Recall that Fourier basis functions are defined as $\chi_T(S) = (-1)^{|S \cap T|}$.

\paragraph{The Empty Set Constraint:}

For $S=\emptyset$, the intersection $|S \cap T| = 0$ for all $T$. Thus, $\chi_T(\emptyset) = 1$ for all basis functions. The constraint $\hat{f}(\emptyset) = f(\emptyset)$ implies:
\begin{align}
\sum_{T \in \mathcal{T}} \beta_T \chi_T(\emptyset) =\sum_{T \in \mathcal{T}: |T| \text{ even}} \beta_T+\sum_{T \in \mathcal{T}: |T| \text{ odd}} \beta_T= f(\emptyset).
\label{eq:constraint_empty}
\end{align}

\paragraph{The Full Set Constraint:}

For $S=[d]$, the intersection is simply $T$. Thus, $\chi_T([d]) = (-1)^{|T|}$. The basis function evaluates to $1$ if $|T|$ is even and $-1$ if $|T|$ is odd. The constraint $\hat{f}([d]) = f([d])$ implies:
\begin{align}
\sum_{T \in \mathcal{T}} \beta_T \chi_T([d]) =\sum_{T \in \mathcal{T}: |T| \text{ even}} \beta_T-\sum_{T \in \mathcal{T}: |T| \text{ odd}} \beta_T= f([d]).\label{eq:constraint_full}
\end{align}

In our restricted regression setting (OddSHAP), we seek to minimize the use of even-order terms. We satisfy the necessary degrees of freedom by including exactly one even coalition in our support: the empty set $T = \emptyset$.
Under this construction, $\sum_{|T| \text{ even}} \beta_T$ simplifies to just $\beta_\emptyset$.

Adding \cref{eq:constraint_empty} and \cref{eq:constraint_full} yields the solution for the intercept:
\begin{align}
2 \beta_\emptyset = f([d]) + f(\emptyset) \implies \beta_\emptyset = \frac{f([d]) + f(\emptyset)}{2}.
\end{align}
Subtracting \cref{eq:constraint_full} from \cref{eq:constraint_empty} yields the constraint on the odd coefficients:
\begin{align}
2 \sum_{T \in \mathcal{T}: |T| \text{ odd}} \beta_T = f(\emptyset) - f([d]) \implies \sum_{T \in \mathcal{T}: |T| \text{ odd}} \beta_T = -\frac{f([d]) - f(\emptyset)}{2}.\label{eq:odd_sum_constraint}
\end{align}

\subsection{Constrained Optimization via Projection}

By explicitly fixing $\beta_\emptyset$ and deriving the sum constraint on the odd coefficients (\cref{eq:odd_sum_constraint}), we transform the original problem into a constrained least squares problem on the odd terms only:
\begin{align}
    \hat{f} = \argmin_{\bm{\beta} \in \mathbb{R}^{|\mathcal{T}|-1}: \langle \bm{\beta}, \mathbf{1}\rangle=-\frac{f([d]) -f(\emptyset)}{2}}
    \sum_{S \subseteq [d]: 0 < \vert S \vert < d} w_{|S|}
    \left(
    f(S)-\frac{f([d]) + f(\emptyset)}{2} - \sum_{T \in \mathcal{T} \setminus \{\emptyset\}}
    \chi_T(S) \beta_T
    \right)^2.
    \label{eq:wide_fourier}
\end{align}

We solve this efficiently by projecting the problem. Geometrically, \cref{eq:odd_sum_constraint} restricts the solution to an affine hyperplane defined by the normal vector $\mathbf{1}$ (the all-ones vector). To handle this inhomogeneous constraint, we first shift the linear system by a particular solution that satisfies the offset. We then project the shifted regression target and design matrix onto the null space orthogonal to $\mathbf{1}$, solve the unconstrained regression in this projected subspace, and finally map the solution back by adding the particular solution to satisfy the sum constraint.

This approach guarantees that the efficiency property holds exactly, regardless of the sampling budget $m$. Even when $m \ll 2^d$, our estimator prioritizes the exact evaluation of $f(\emptyset)$ and $f([d])$, ensuring the resulting Shapley values sum to the correct total payoff.

\clearpage
\section{Further Experimental Results}\label{appx_sec_further_results}

In this section, we provide further experimental details and results.
All experiments were conducted on a consumer-grade laptop with an 11th Gen Intel Core i7-11850H CPU and 30GB of RAM.

\subsection{Total Runtime of Approximation Algorithms}
\cref{appx_fig_oddshap_runtime} reports the total runtime (median with IQR) for all approximators across different budgets ($m$).
We observe that the runtime of OddSHAP is similar to RegressionMSR, whereas MSR, LeverageSHAP, FFD-RD, FourierSHAP and Permutation Sampling are generally faster.
Moreover, the runtime of OddSHAP and LeverageSHAP substantially increases for larger budgets.

\cref{appx_fig_oddshap_runtime_oddshap} reports the runtime in seconds for different components (sampling, proxy fit, extraction, regression) of OddSHAP across different budgets ($m$).

\paragraph{Sampling}
The \emph{sampling} is based on the \texttt{CoalitionSampler} implemented in the \texttt{shapiq} library, and uses rejection sampling.
\citet{witter2025regressionadjusted} have proposed a more efficient method, which could optimize runtime in future implementations.

\paragraph{Proxy}
Fitting the \emph{proxy} yields a moderate increase in the runtime of OddSHAP. The LightGBM \citep{Ke.2017} proxy scales well to both, higher number of samples and higher number of features.

\paragraph{Extraction}
In cases where $m<d \cdot \eta$, OddSHAP directly extract the Shapley values from the GBT, which is the main driver of computational costs.
After $m$ passes this threshold, we observe that ProxySPEX efficiently extracts the most influential Fourier interactions, which results in a drop in the runtime of \emph{extraction}.
This component then stabilizes over time.

\paragraph{Regression}
The \emph{regression} component of OddSHAP is used when $m \geq d \cdot \eta$, and remains small when the number of samples is small.
However, with larger budgets, the regression component becomes a substantial driver in the runtime of OddSHAP, confirming our theoretical observations.
With budgets over 10,000 samples, the total runtime and regression runtime are almost identical on the log scale.

\subsection{Runtime Analysis in Simulated Settings}

\cref{appx_fig_oddshap_runtime_cost0.001,appx_fig_oddshap_runtime_cost0.01,appx_fig_oddshap_runtime_cost0.1,appx_fig_oddshap_runtime_cost1} report the runtime using a simulated setting, where the cost of evaluating a single subset in $f$ was set to $T_1 \in \{0.001,0.01,0.1,1\}$.
Naturally, we observe that the differences observed in raw runtime from \cref{appx_fig_oddshap_runtime} have stronger impact when $T_1$ is small.
However, even in the lowest cost setting with $T_1=0.001$ we do not observe changes in the state-of-the-art performance of OddSHAP. 
For $T_1 = 1$, we observe curves, which are almost identical to \cref{fig_oddshap_mse} with transformed x-axis.

\begin{figure*}
    \centering
    %legend 
    \includegraphics[width=.75\linewidth]{figures/approximation/legend.pdf}
    \\
    \begin{minipage}{0.245\linewidth}
        \includegraphics[width=\linewidth]{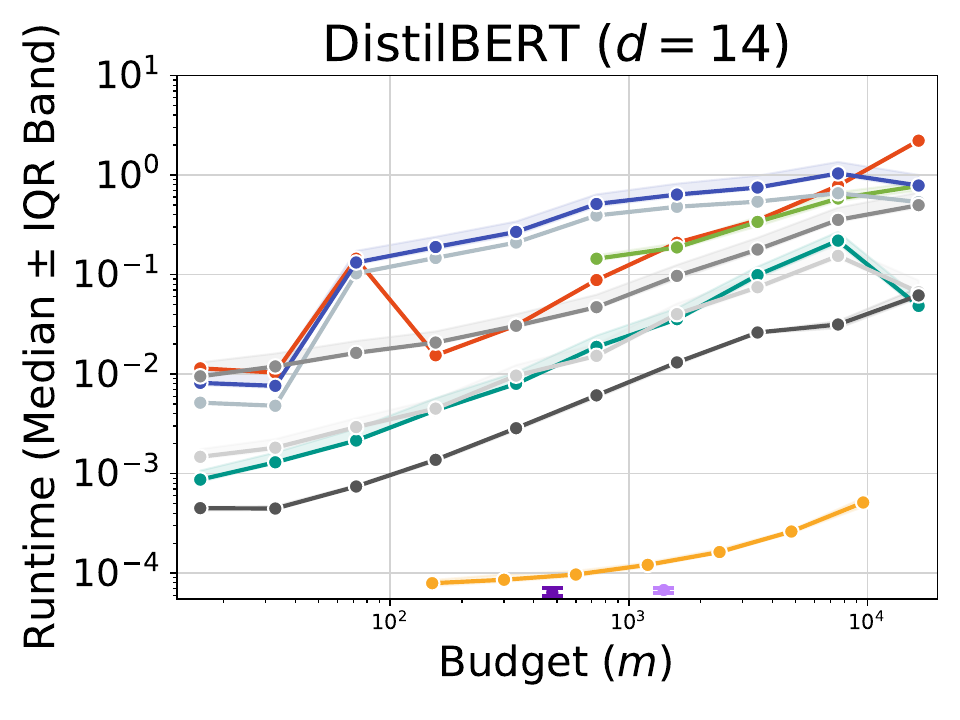}
    \end{minipage}
        \hfill
    \begin{minipage}{0.245\linewidth}
        \includegraphics[width=\linewidth]{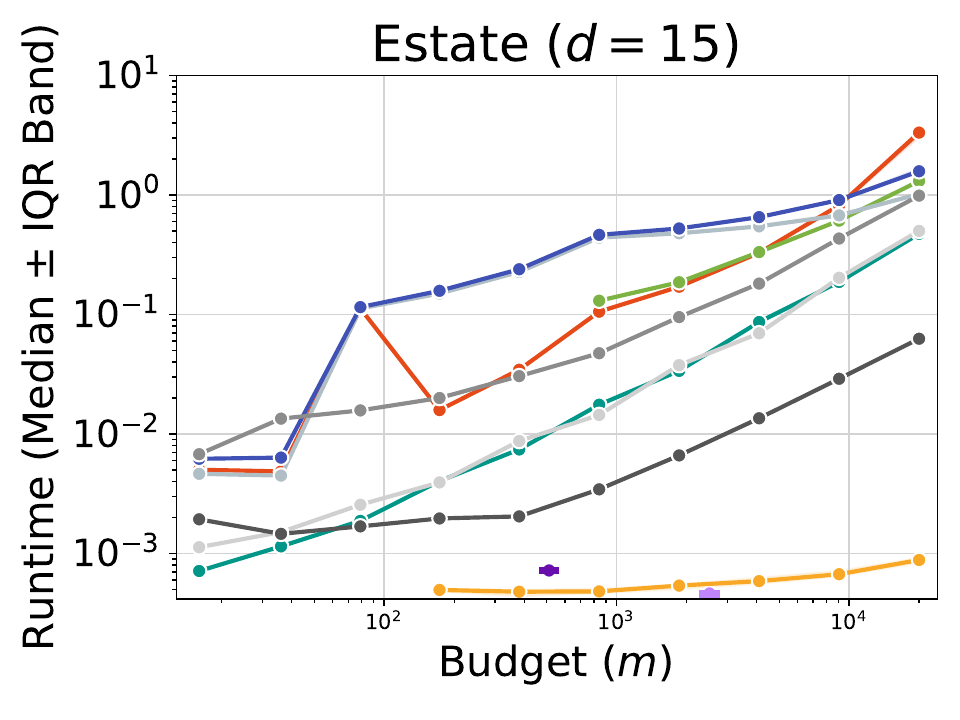}
    \end{minipage}
        \hfill
    \begin{minipage}{0.245\linewidth}
        \includegraphics[width=\linewidth]{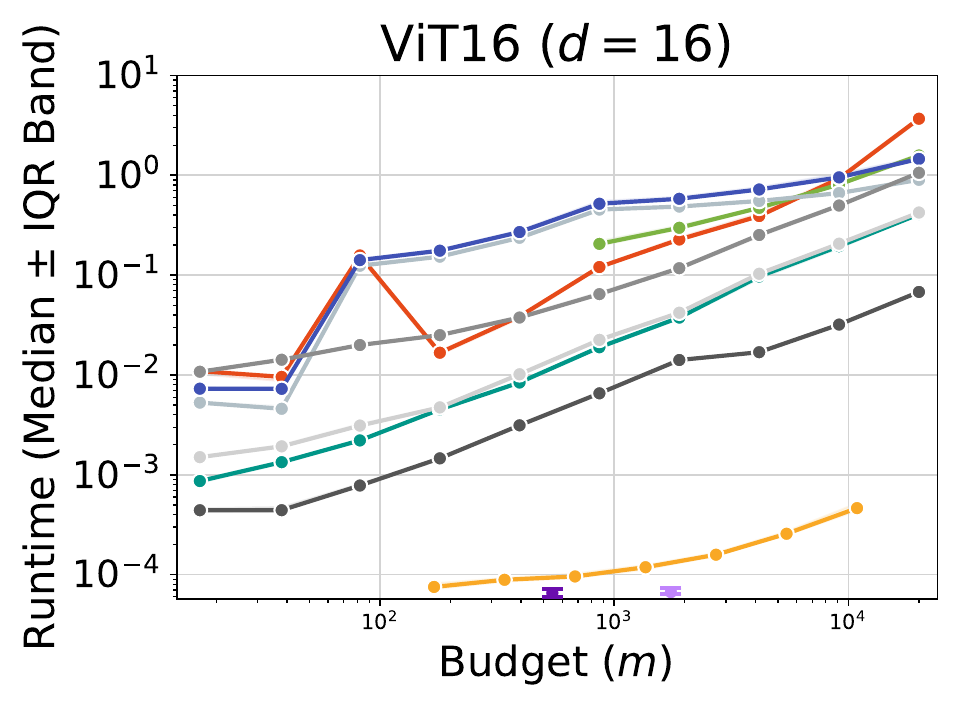}
    \end{minipage}
        \hfill
    \begin{minipage}{0.245\linewidth}
        \includegraphics[width=\linewidth]{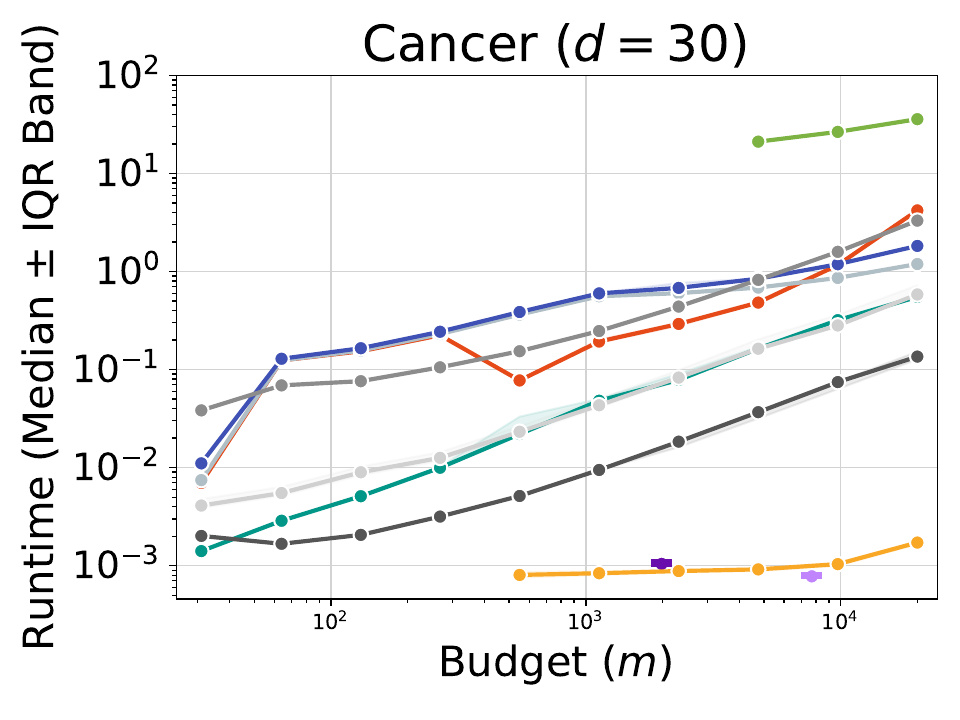}
    \end{minipage}
        \hfill
    \begin{minipage}{0.245\linewidth}
        \includegraphics[width=\linewidth]{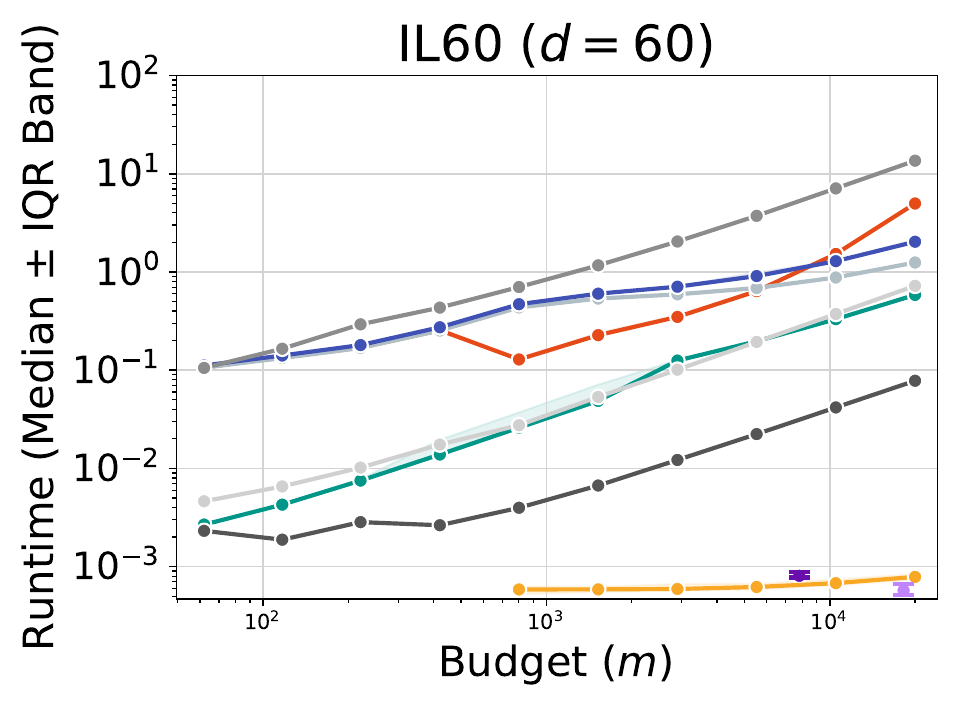}
    \end{minipage}
        \hfill
    \begin{minipage}{0.245\linewidth}
        \includegraphics[width=\linewidth]{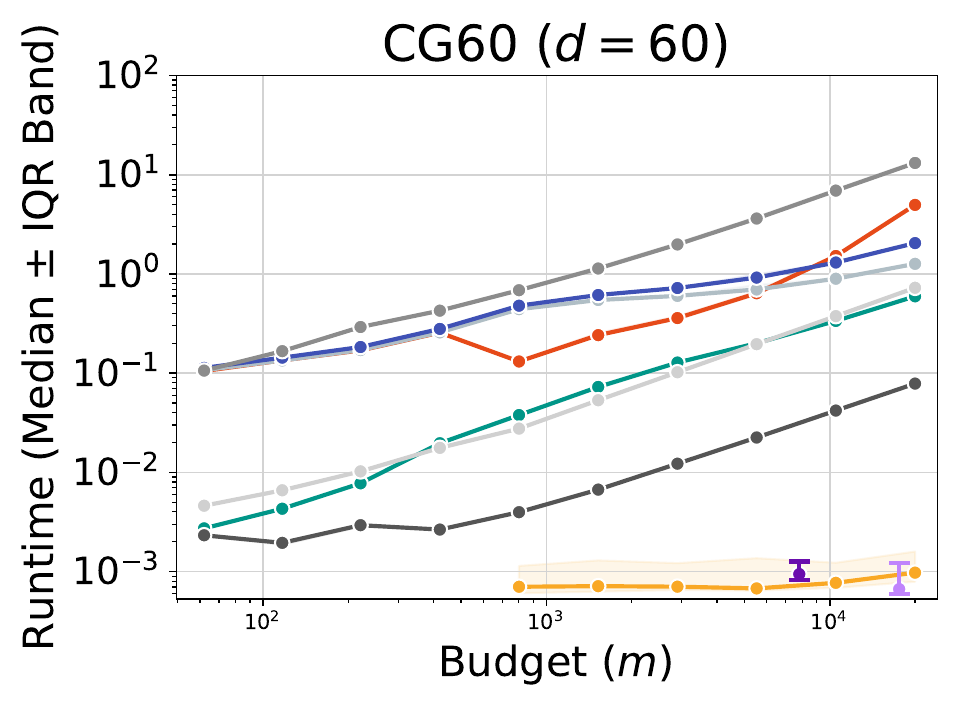}
    \end{minipage}
        \hfill
    \begin{minipage}{0.245\linewidth}
        \includegraphics[width=\linewidth]{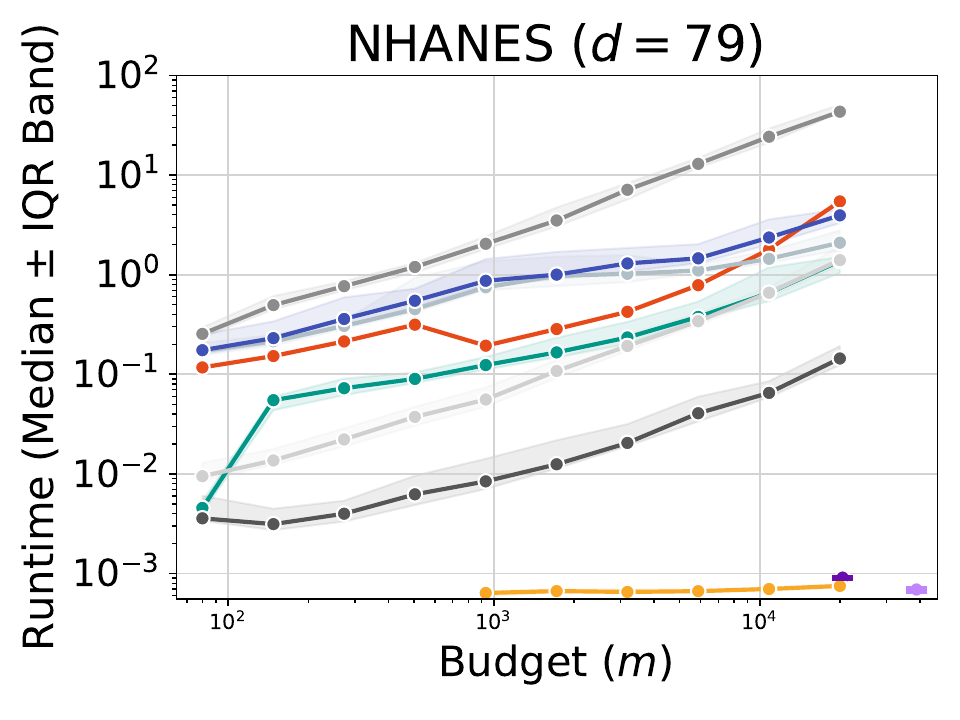}
    \end{minipage}
        \hfill
    \begin{minipage}{0.245\linewidth}
        \includegraphics[width=\linewidth]{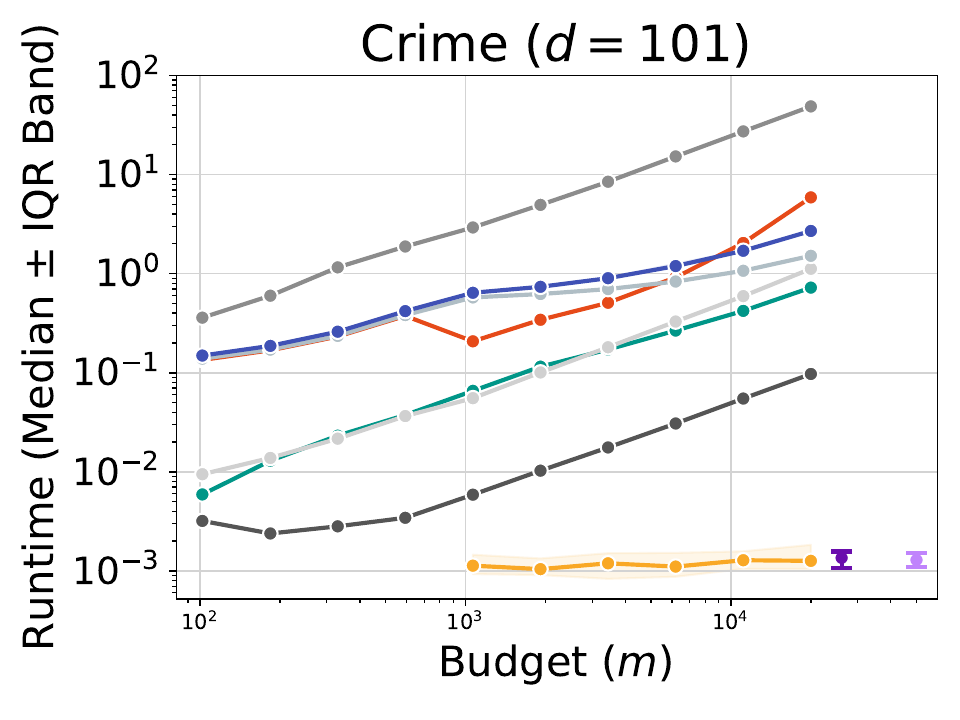}
    \end{minipage}
    \caption{Runtime measured in seconds (median and IQR band) for varying budgets ($m$). OddSHAP has similar runtime compared with RegressionMSR, but higher runtime than LeverageSHAP.}
    \label{appx_fig_oddshap_runtime}
\end{figure*}

\begin{figure*}
    \centering
    %legend 
    \includegraphics[width=.75\linewidth]{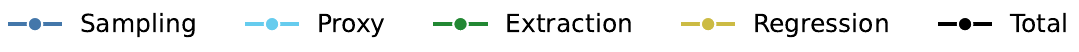}
    \\
    \begin{minipage}{0.245\linewidth}
        \includegraphics[width=\linewidth]{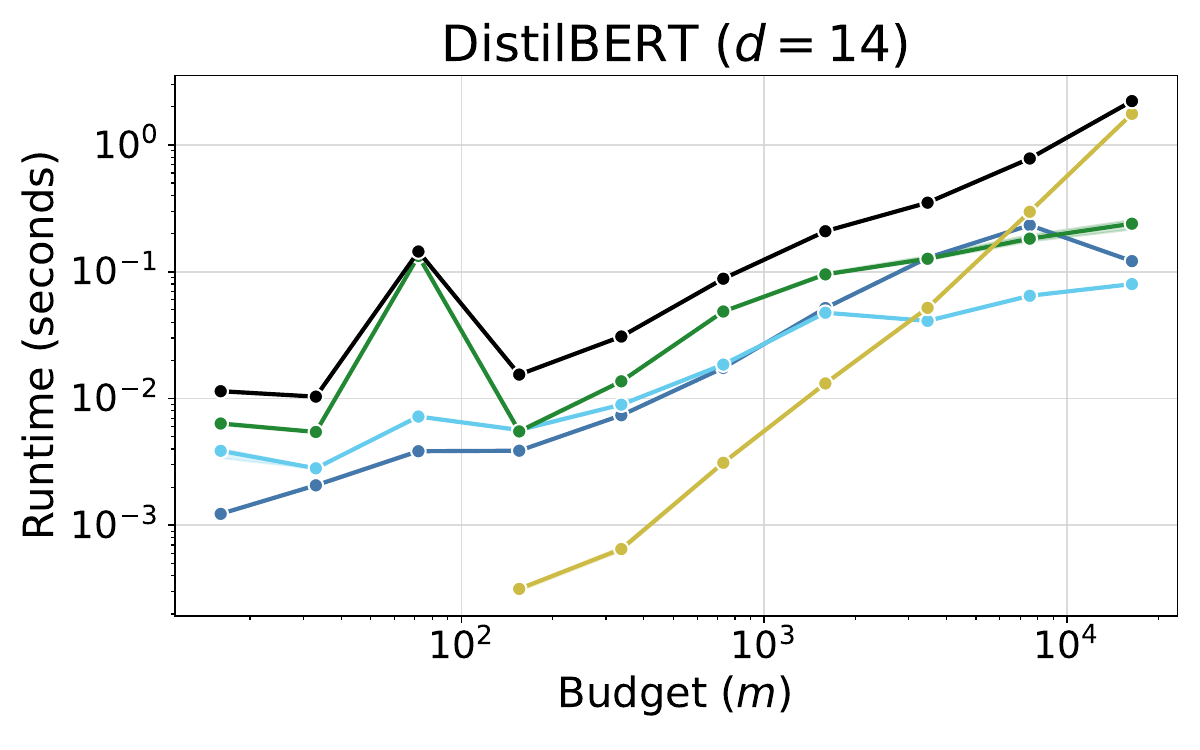}
    \end{minipage}
        \hfill
    \begin{minipage}{0.245\linewidth}
        \includegraphics[width=\linewidth]{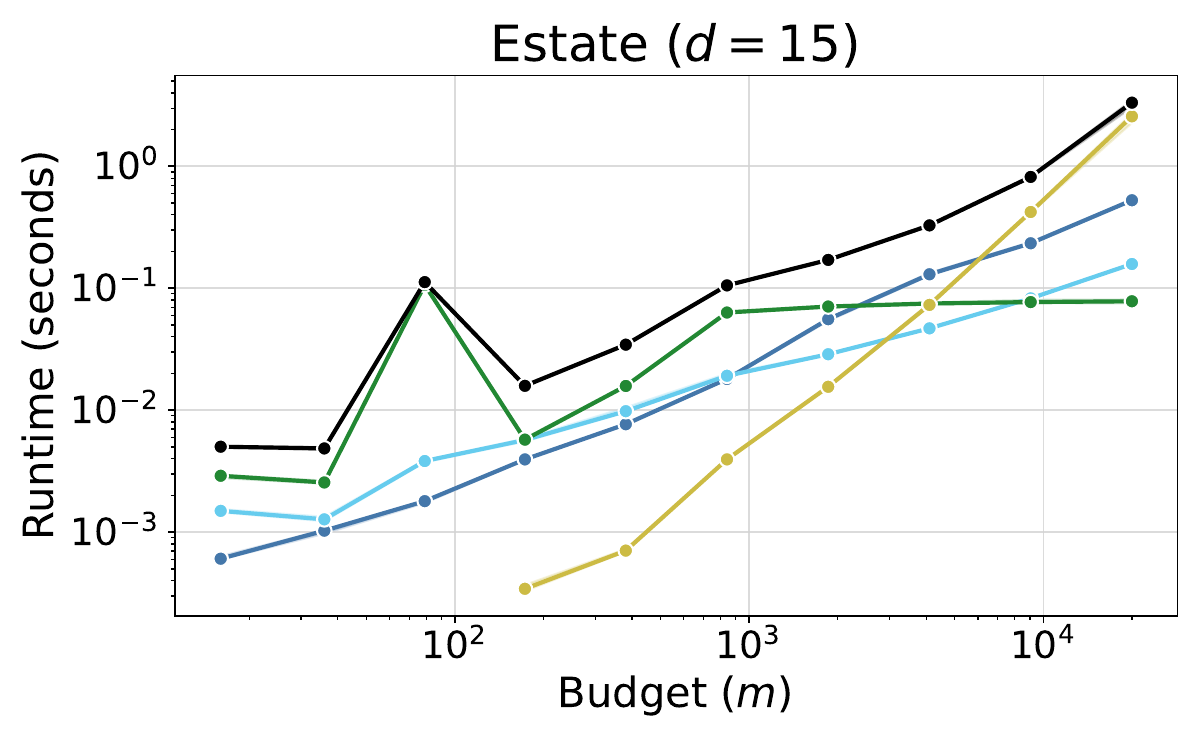}
    \end{minipage}
        \hfill
    \begin{minipage}{0.245\linewidth}
        \includegraphics[width=\linewidth]{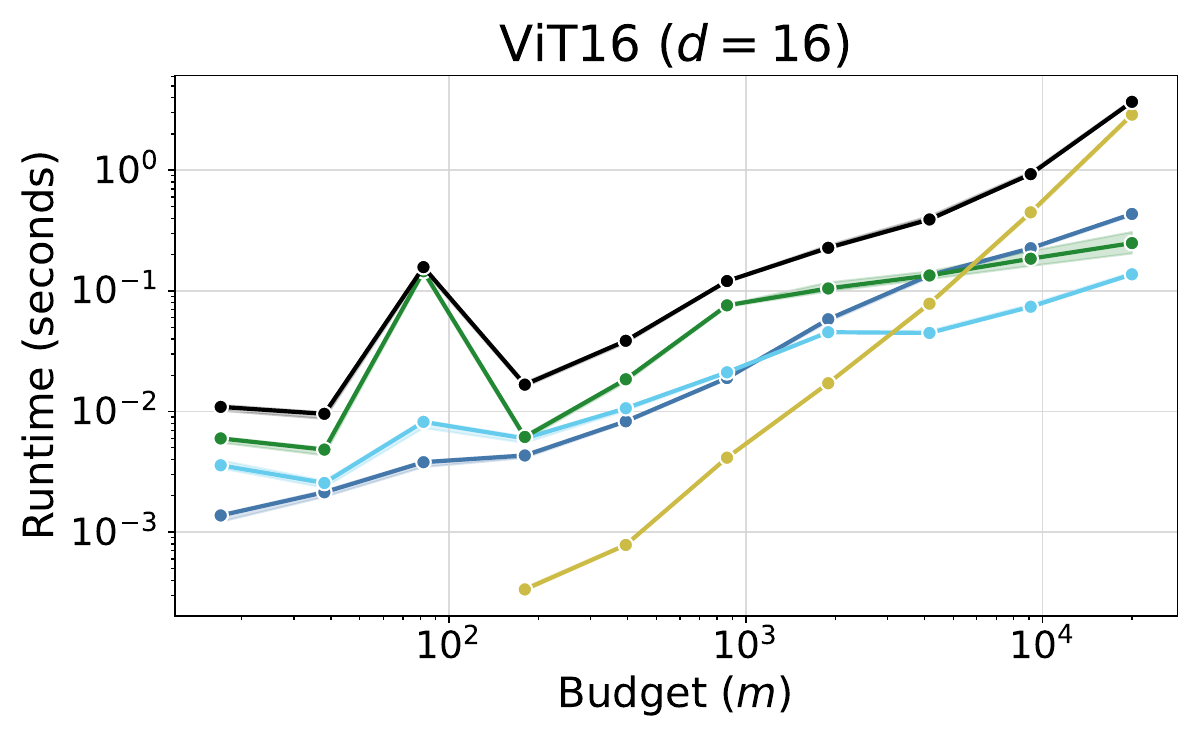}
    \end{minipage}
        \hfill
    \begin{minipage}{0.245\linewidth}
        \includegraphics[width=\linewidth]{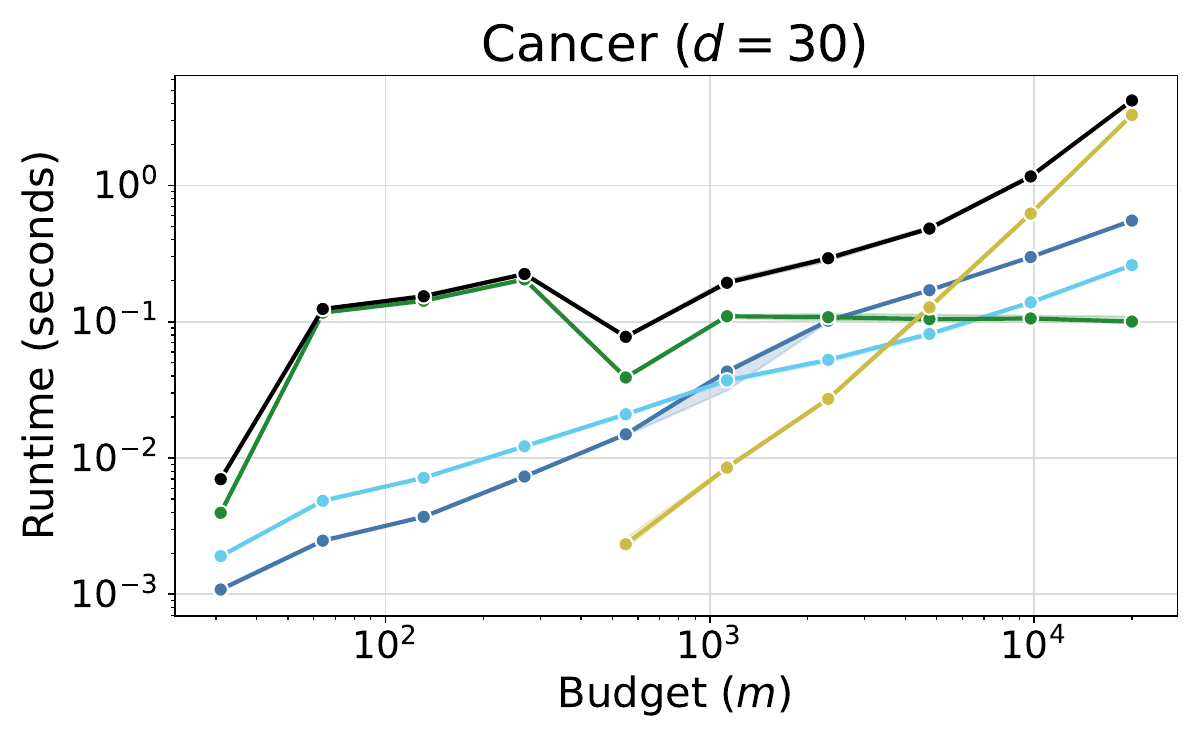}
    \end{minipage}
        \hfill
    \begin{minipage}{0.245\linewidth}
        \includegraphics[width=\linewidth]{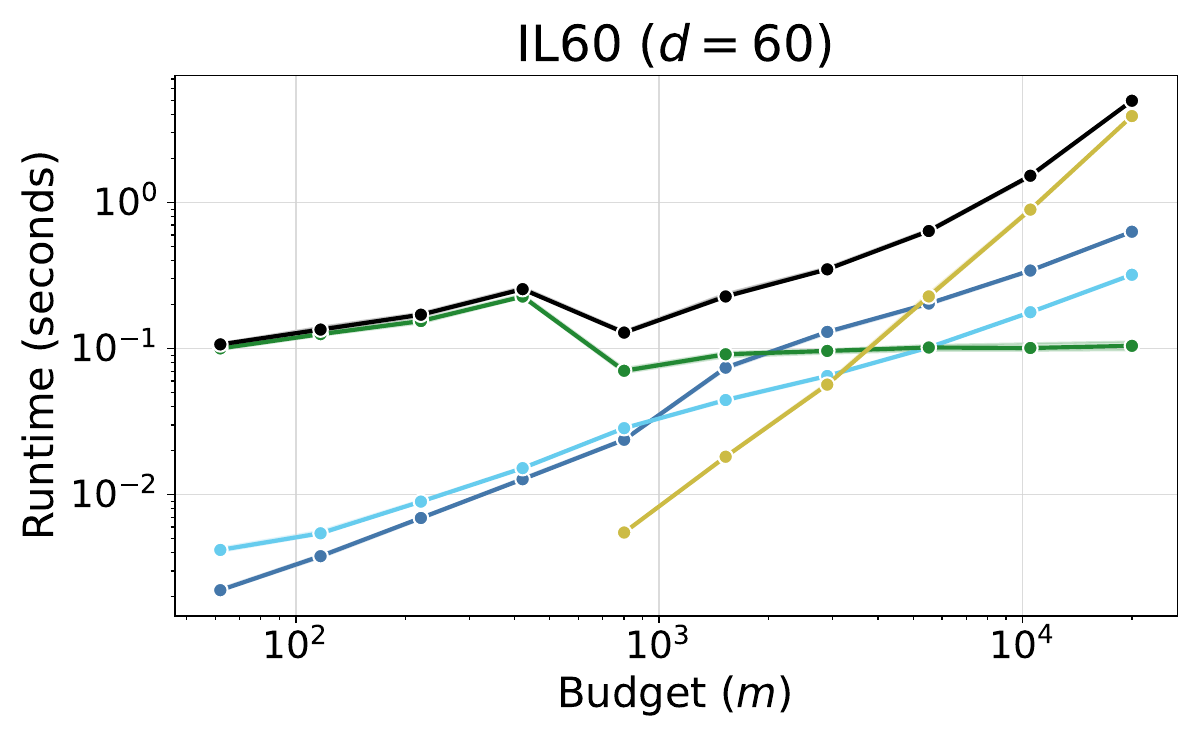}
    \end{minipage}
        \hfill
    \begin{minipage}{0.245\linewidth}
        \includegraphics[width=\linewidth]{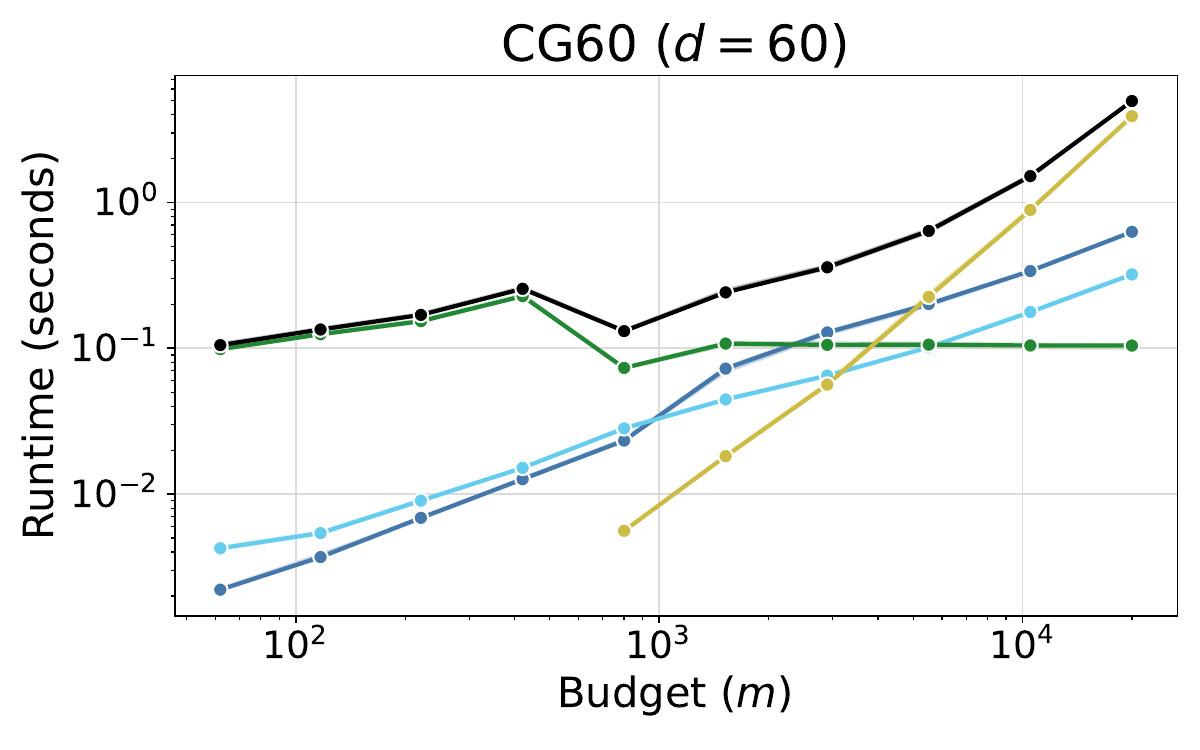}
    \end{minipage}
        \hfill
    \begin{minipage}{0.245\linewidth}
        \includegraphics[width=\linewidth]{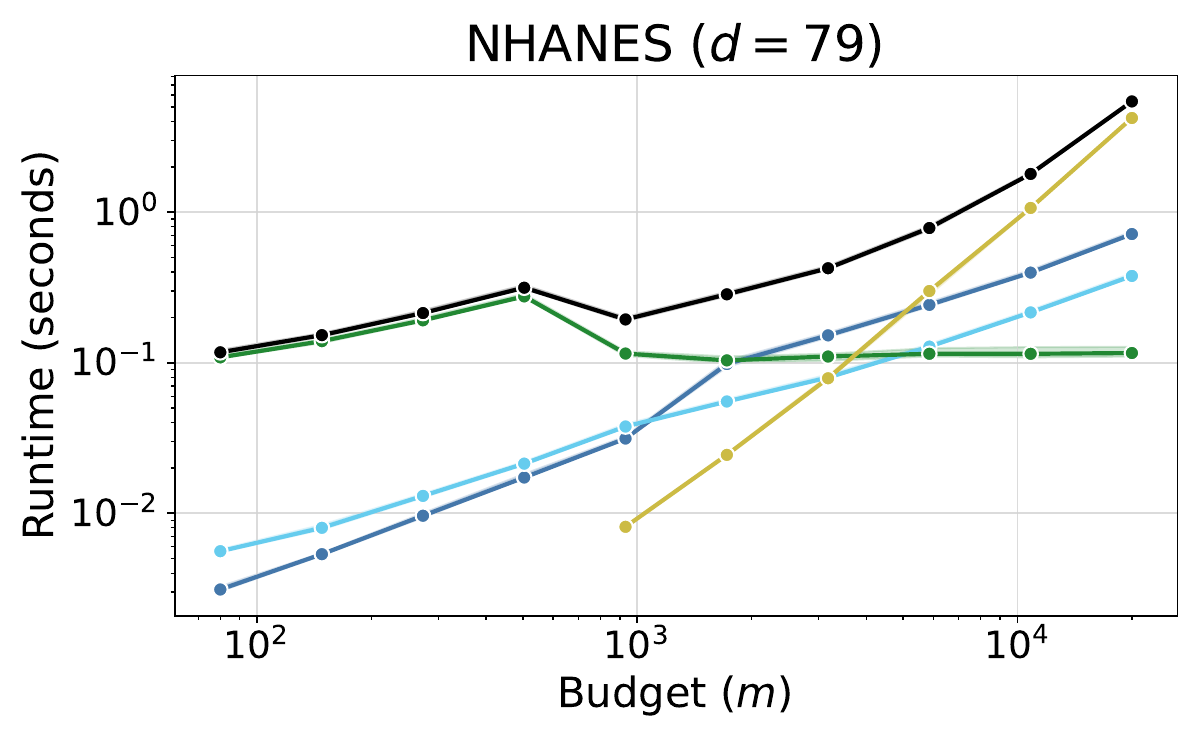}
    \end{minipage}
        \hfill
    \begin{minipage}{0.245\linewidth}
        \includegraphics[width=\linewidth]{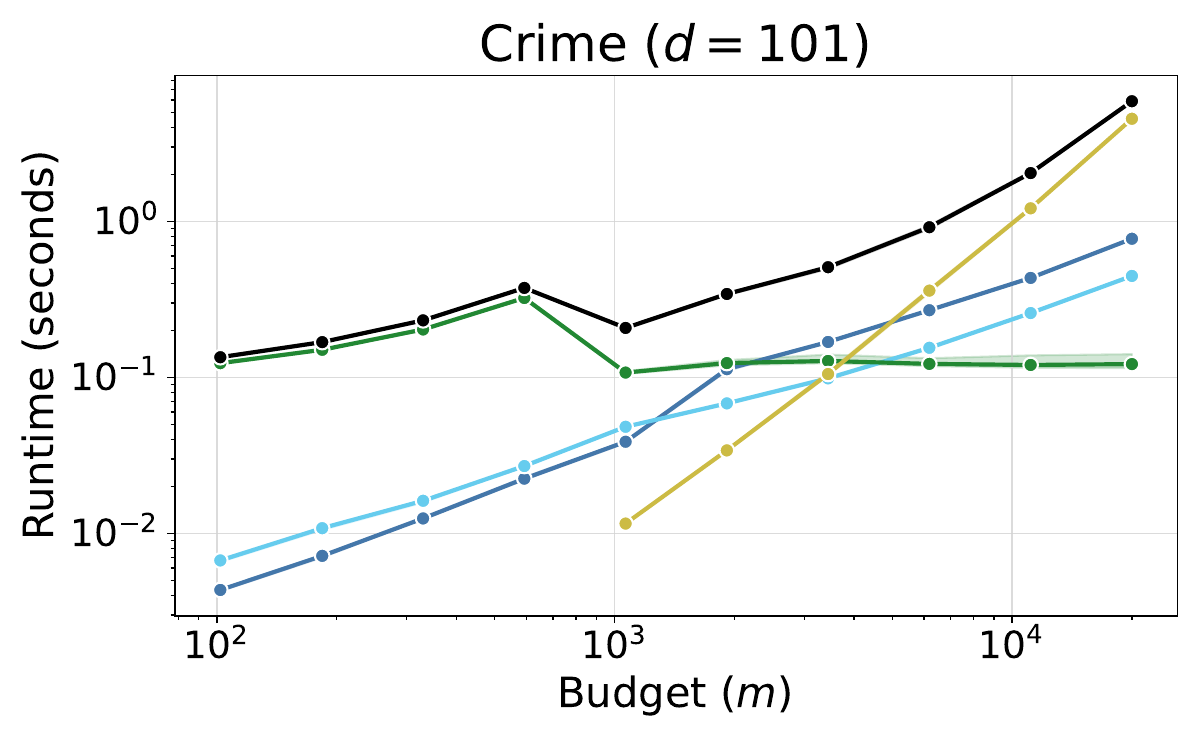}
    \end{minipage}
    \caption{Median runtime with interquartile range (IQR) band for components of OddSHAP. All computations have a fairly small impact on runtime, where the polynomial regression is the leading factor for larger budgets.}
    \label{appx_fig_oddshap_runtime_oddshap}
\end{figure*}

\begin{figure*}
    \centering
    %legend 
    \includegraphics[width=.75\linewidth]{figures/approximation/legend.pdf}
    \\
    \begin{minipage}{0.245\linewidth}
        \includegraphics[width=\linewidth]{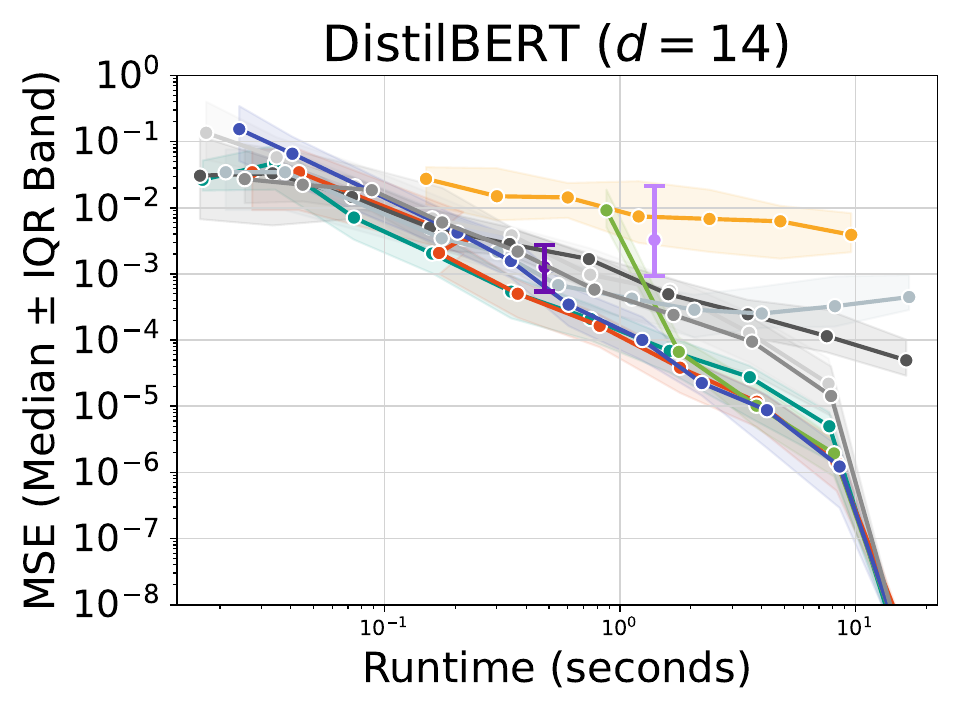}
    \end{minipage}
        \hfill
    \begin{minipage}{0.245\linewidth}
        \includegraphics[width=\linewidth]{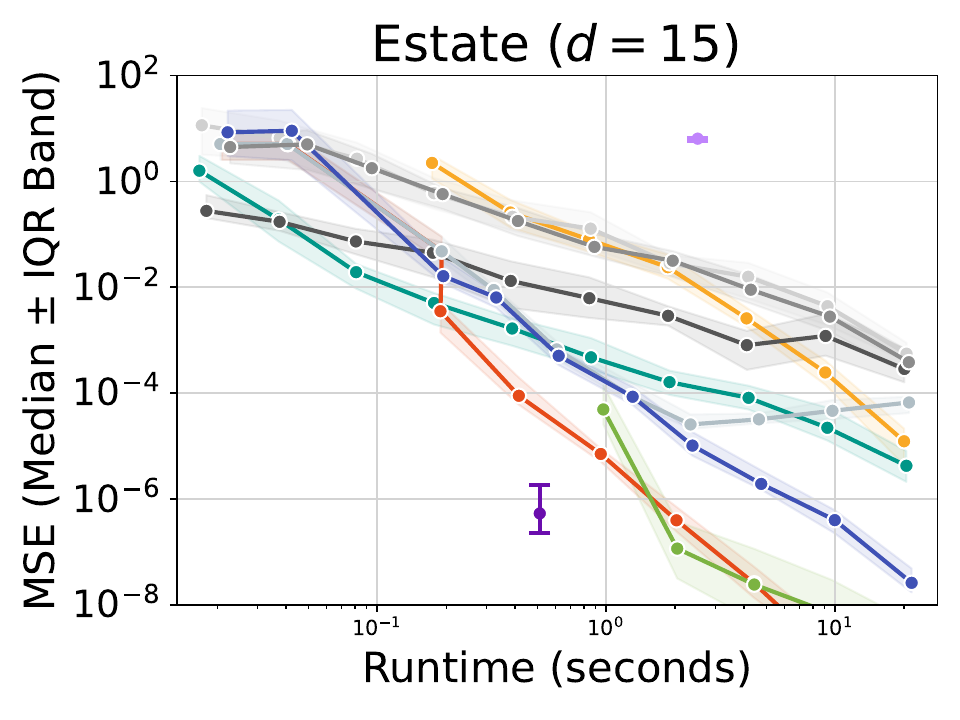}
    \end{minipage}
        \hfill
    \begin{minipage}{0.245\linewidth}
        \includegraphics[width=\linewidth]{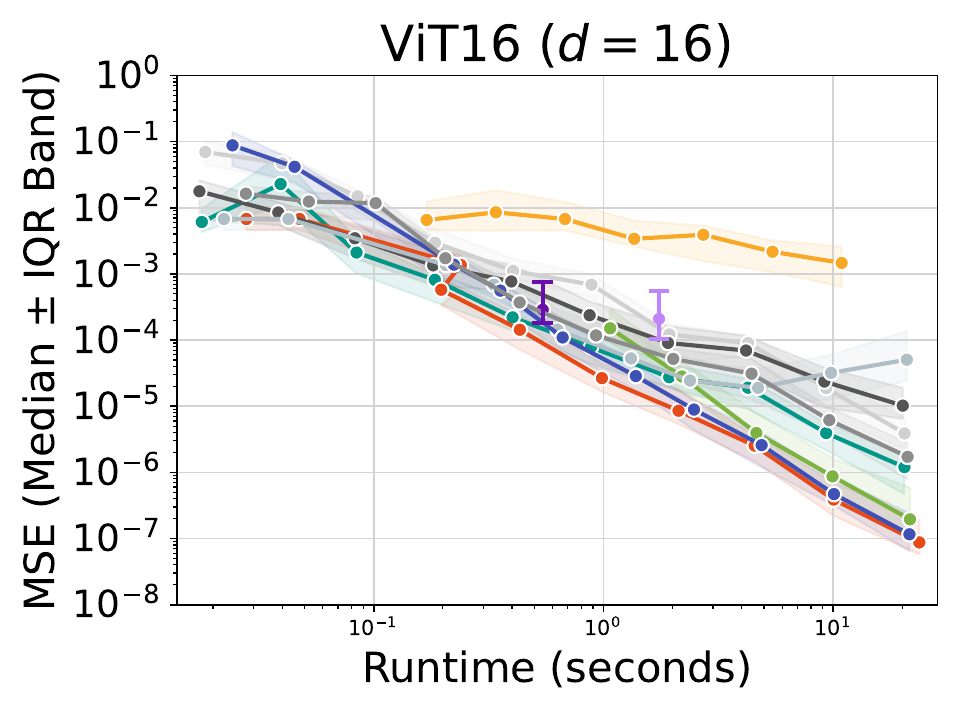}
    \end{minipage}
        \hfill
    \begin{minipage}{0.245\linewidth}
        \includegraphics[width=\linewidth]{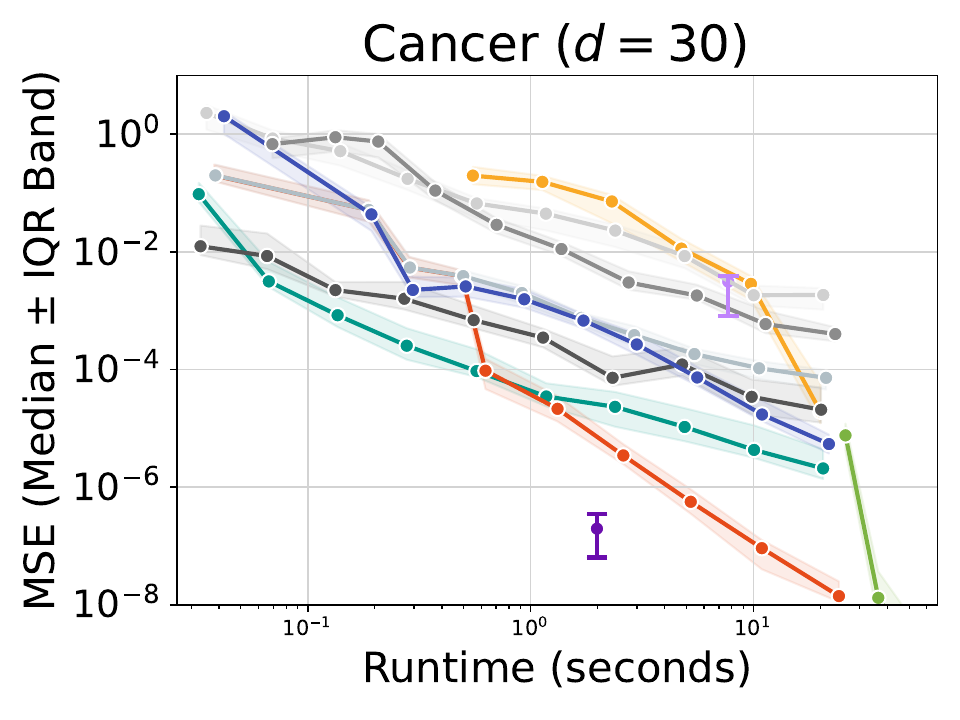}
    \end{minipage}
        \hfill
    \begin{minipage}{0.245\linewidth}
        \includegraphics[width=\linewidth]{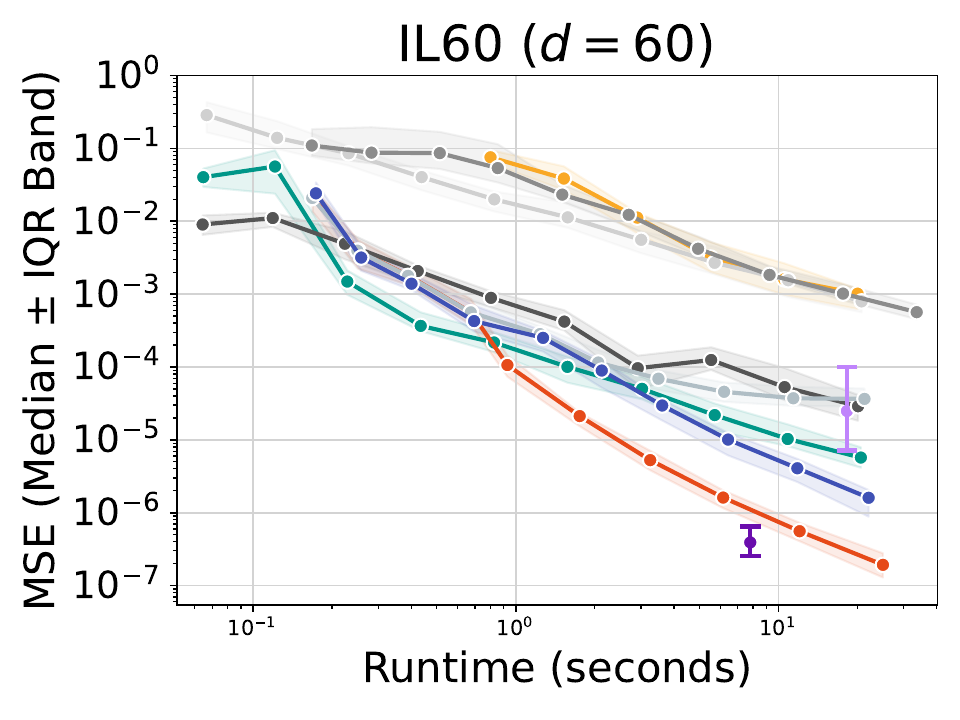}
    \end{minipage}
        \hfill
    \begin{minipage}{0.245\linewidth}
        \includegraphics[width=\linewidth]{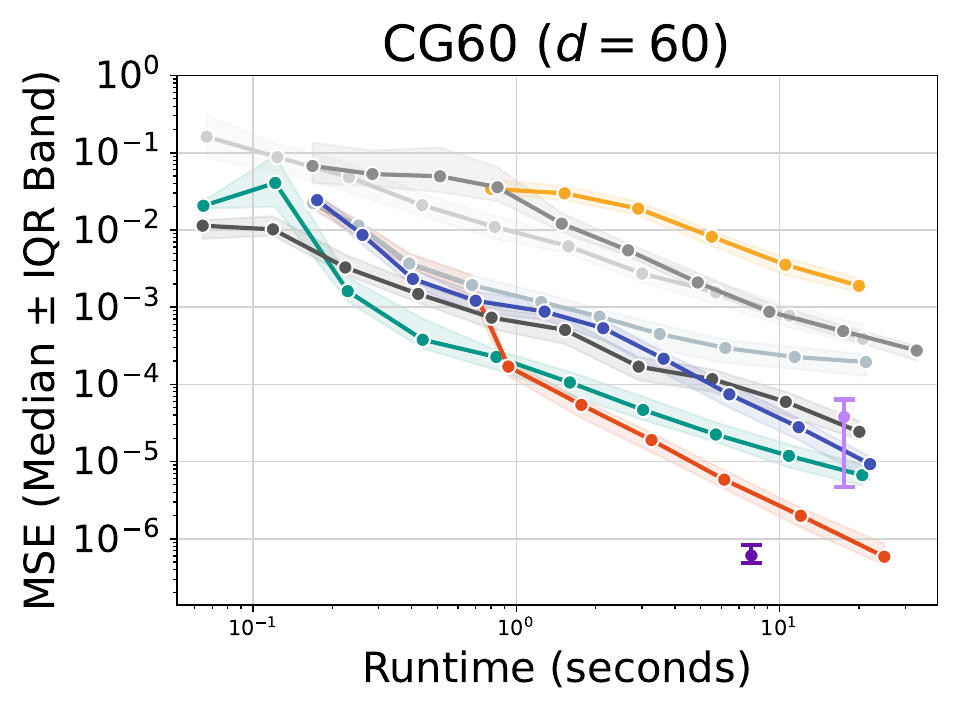}
    \end{minipage}
        \hfill
    \begin{minipage}{0.245\linewidth}
        \includegraphics[width=\linewidth]{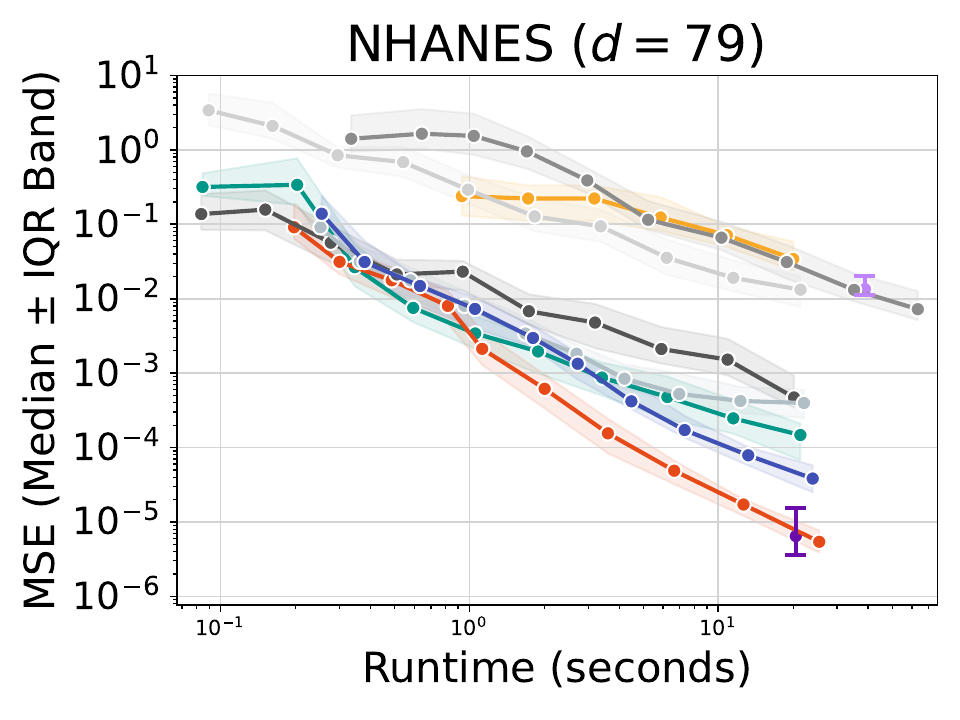}
    \end{minipage}
        \hfill
    \begin{minipage}{0.245\linewidth}
        \includegraphics[width=\linewidth]{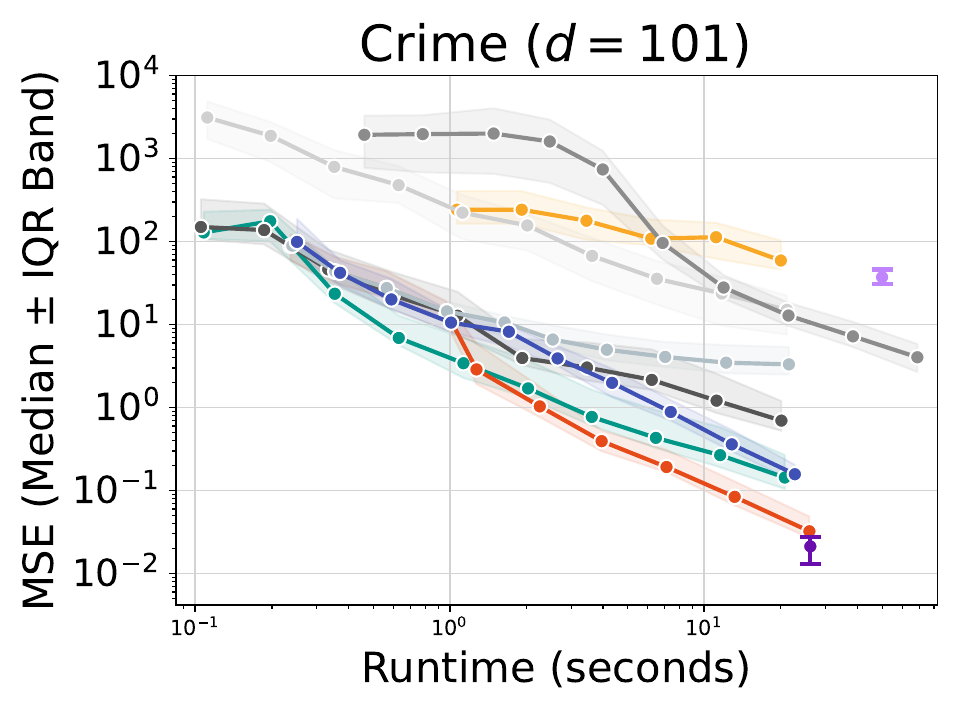}
    \end{minipage}
    \caption{Approximation quality measured by MSE (median and interquartile range (IQR) band) for different runtimes with evaluation cost of $0.001$ seconds.}
    \label{appx_fig_oddshap_runtime_cost0.001}
\end{figure*}

\begin{figure*}
    \centering
    %legend 
    \includegraphics[width=.75\linewidth]{figures/approximation/legend.pdf}
    \\
    \begin{minipage}{0.245\linewidth}
        \includegraphics[width=\linewidth]{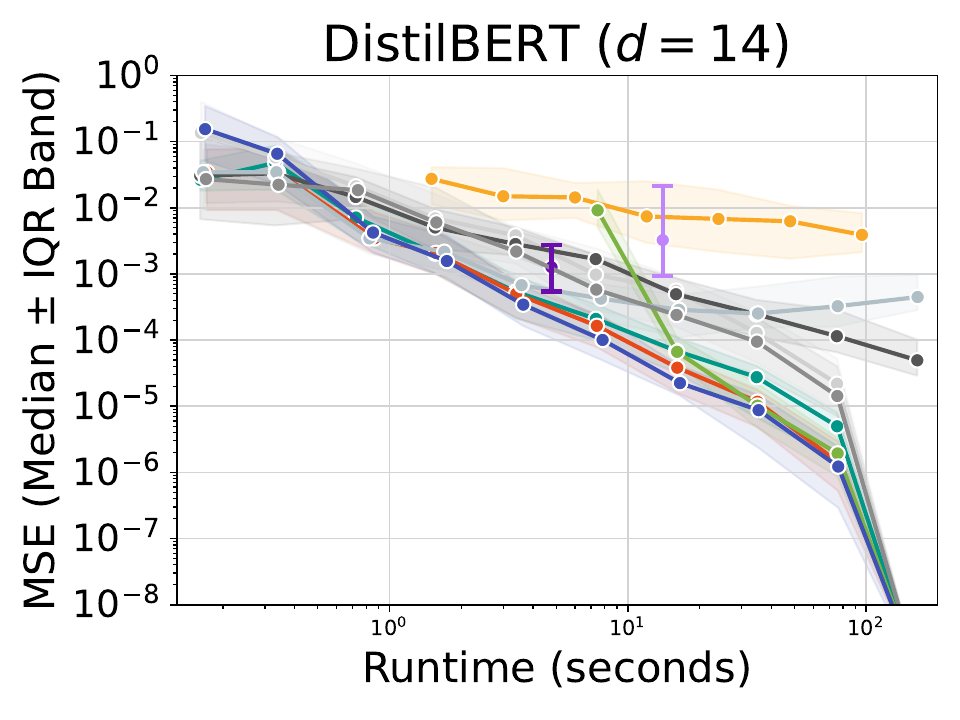}
    \end{minipage}
        \hfill
    \begin{minipage}{0.245\linewidth}
        \includegraphics[width=\linewidth]{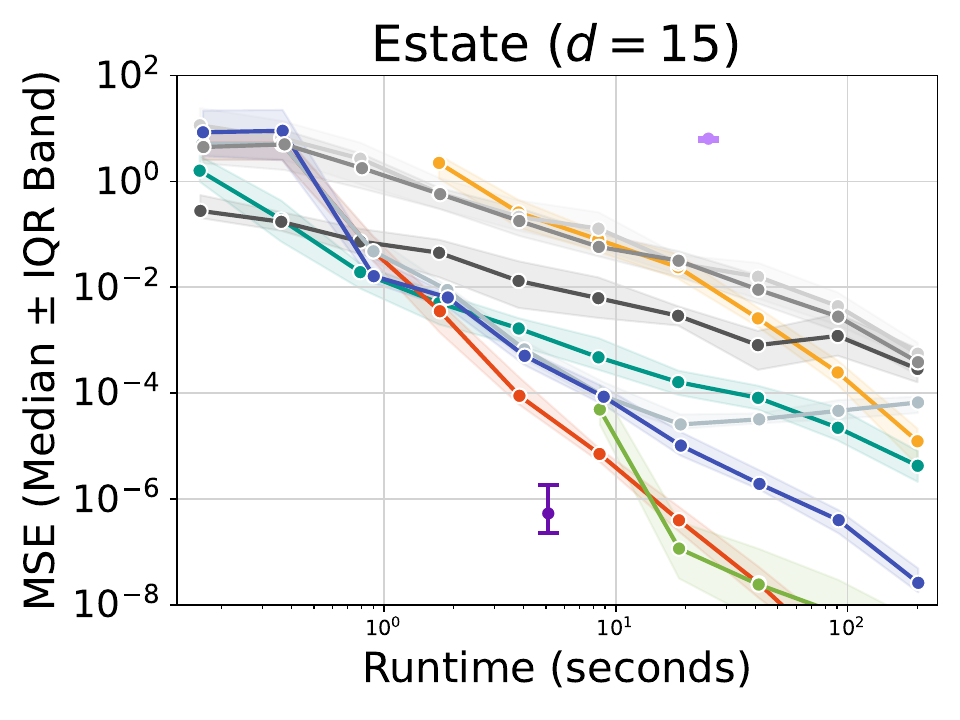}
    \end{minipage}
        \hfill
    \begin{minipage}{0.245\linewidth}
        \includegraphics[width=\linewidth]{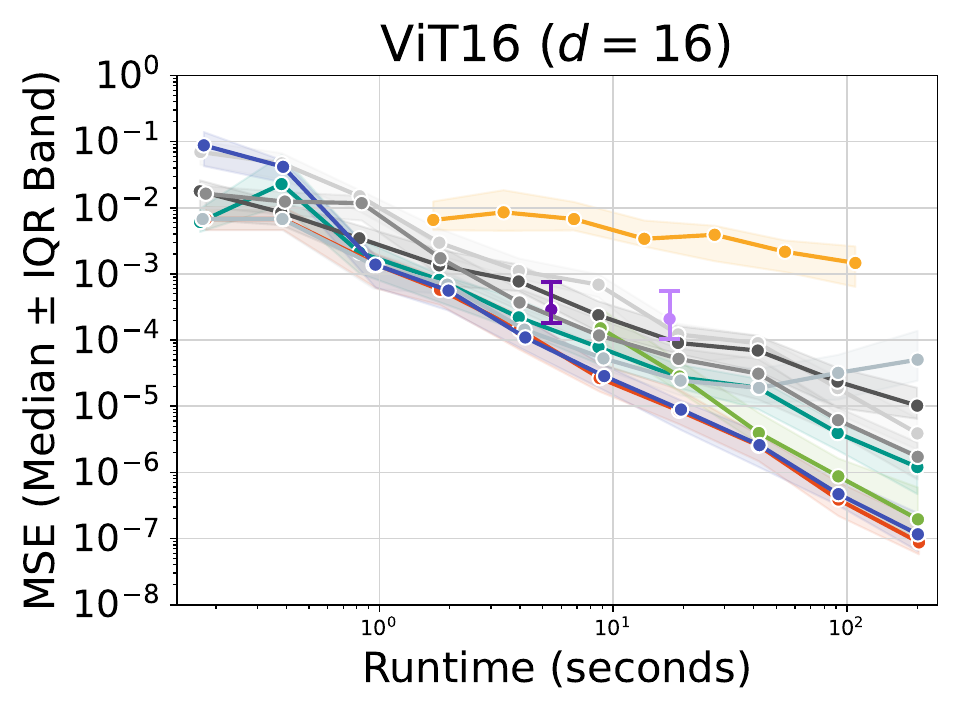}
    \end{minipage}
        \hfill
    \begin{minipage}{0.245\linewidth}
        \includegraphics[width=\linewidth]{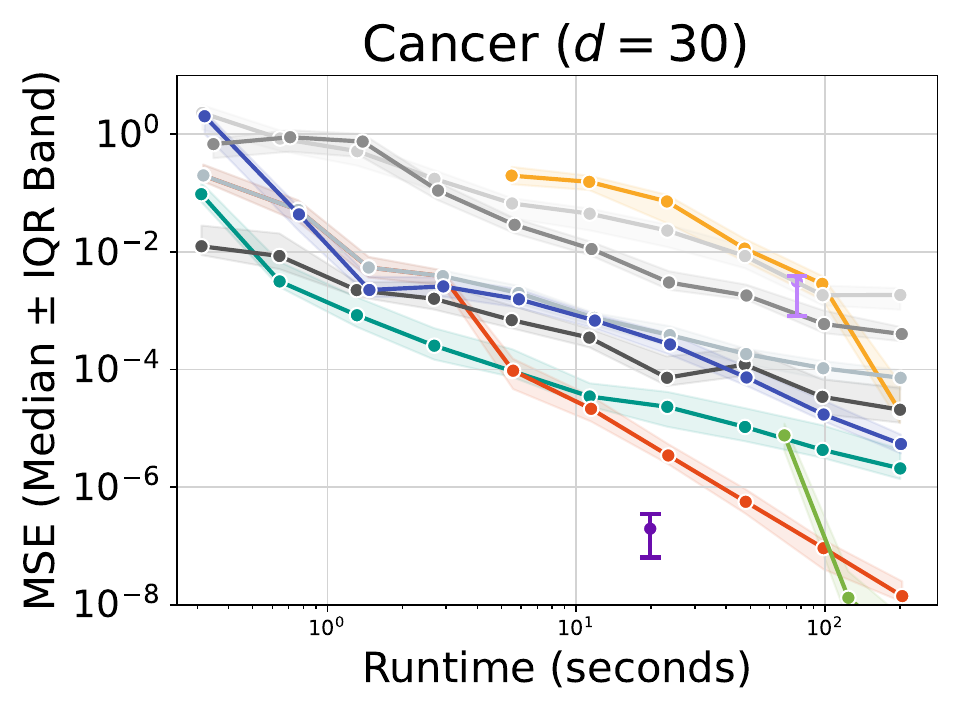}
    \end{minipage}
        \hfill
    \begin{minipage}{0.245\linewidth}
        \includegraphics[width=\linewidth]{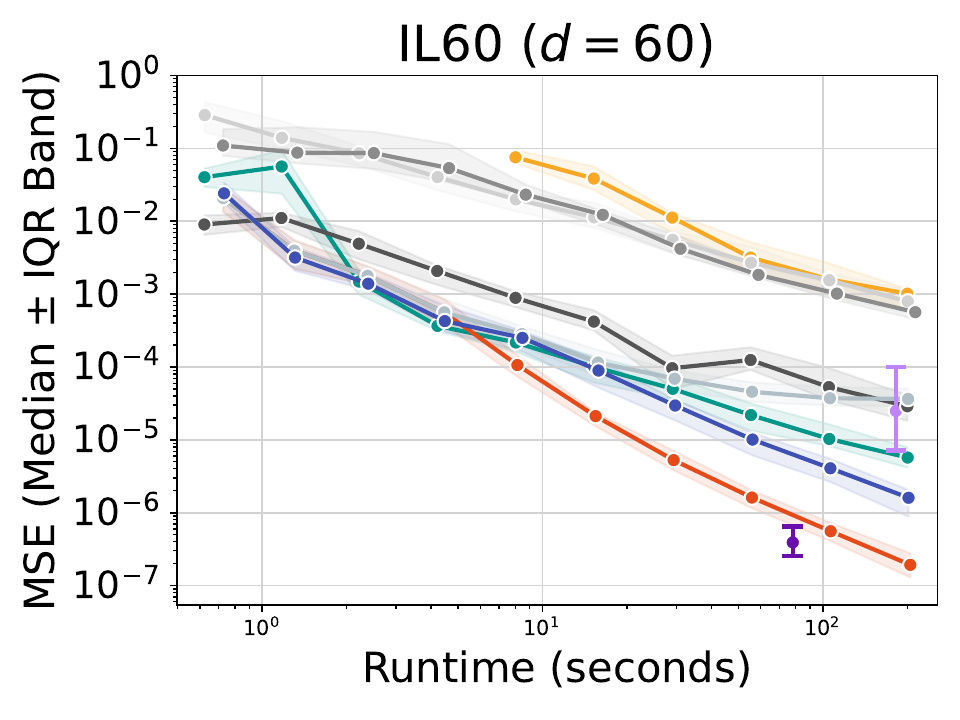}
    \end{minipage}
        \hfill
    \begin{minipage}{0.245\linewidth}
        \includegraphics[width=\linewidth]{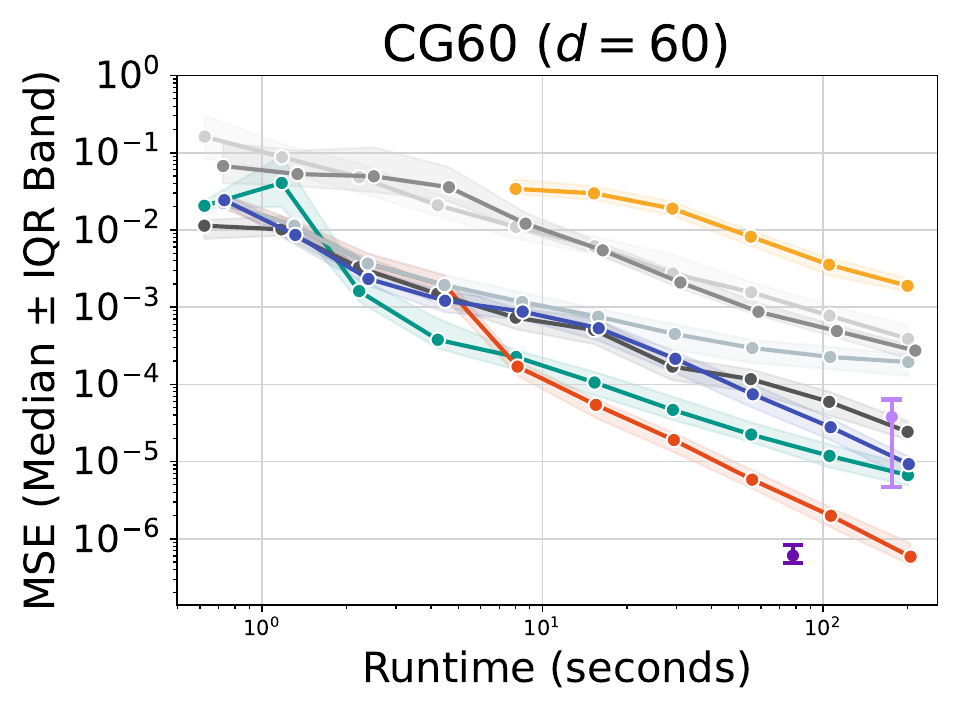}
    \end{minipage}
        \hfill
    \begin{minipage}{0.245\linewidth}
        \includegraphics[width=\linewidth]{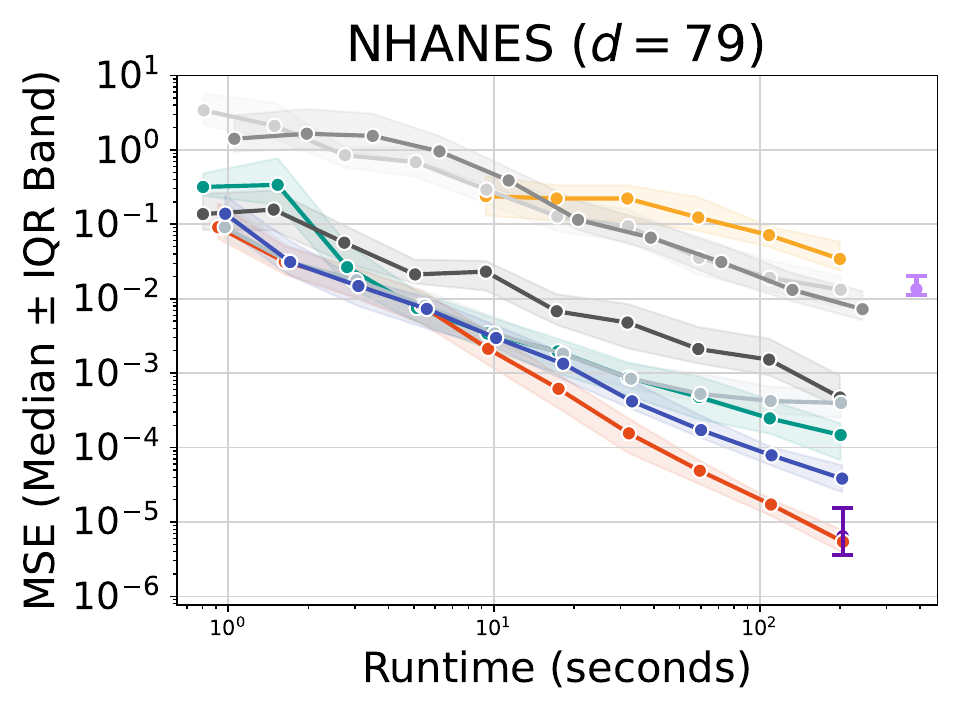}
    \end{minipage}
        \hfill
    \begin{minipage}{0.245\linewidth}
        \includegraphics[width=\linewidth]{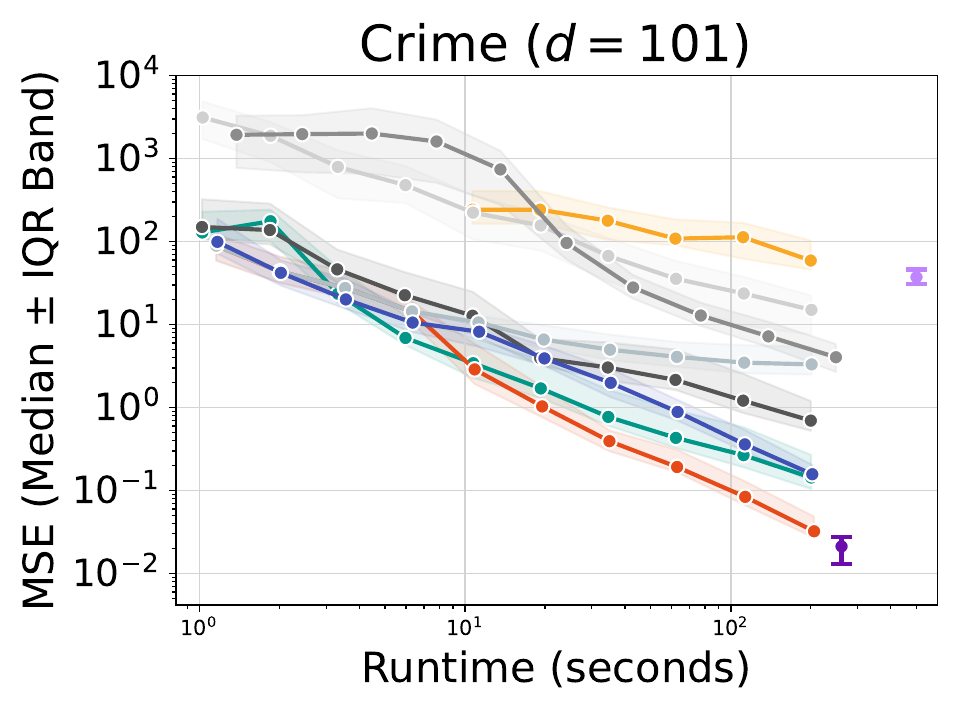}
    \end{minipage}
    \caption{Approximation quality measured by MSE (median and interquartile range (IQR) band) for different runtimes with evaluation cost of $0.01$ seconds.}
    \label{appx_fig_oddshap_runtime_cost0.01}
\end{figure*}

\begin{figure*}
    \centering
    %legend 
    \includegraphics[width=.75\linewidth]{figures/approximation/legend.pdf}
    \\
    \begin{minipage}{0.245\linewidth}
        \includegraphics[width=\linewidth]{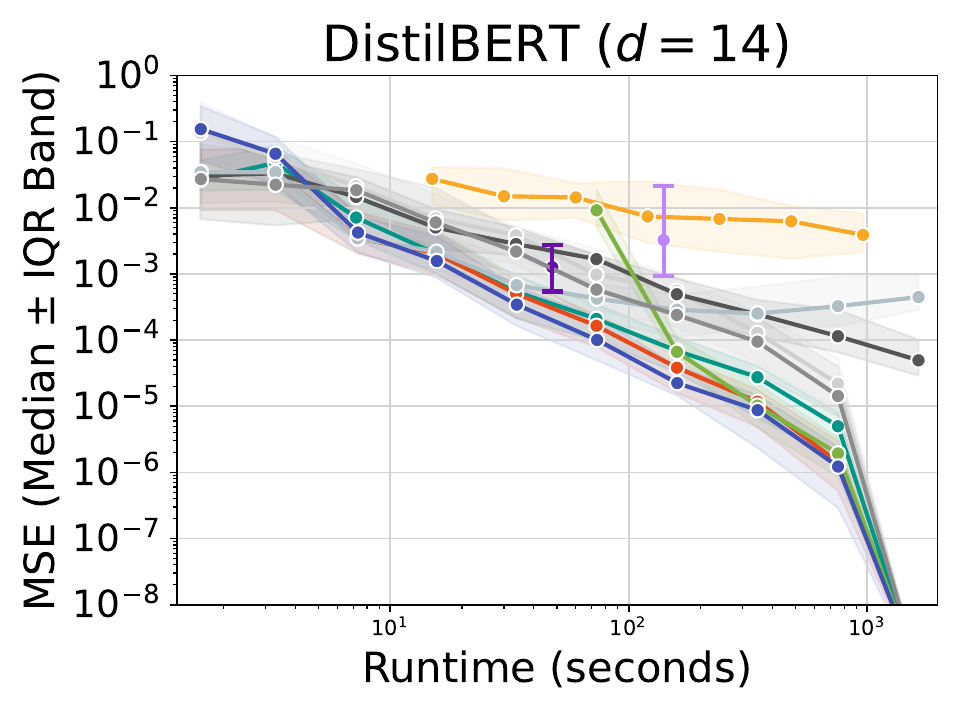}
    \end{minipage}
        \hfill
    \begin{minipage}{0.245\linewidth}
        \includegraphics[width=\linewidth]{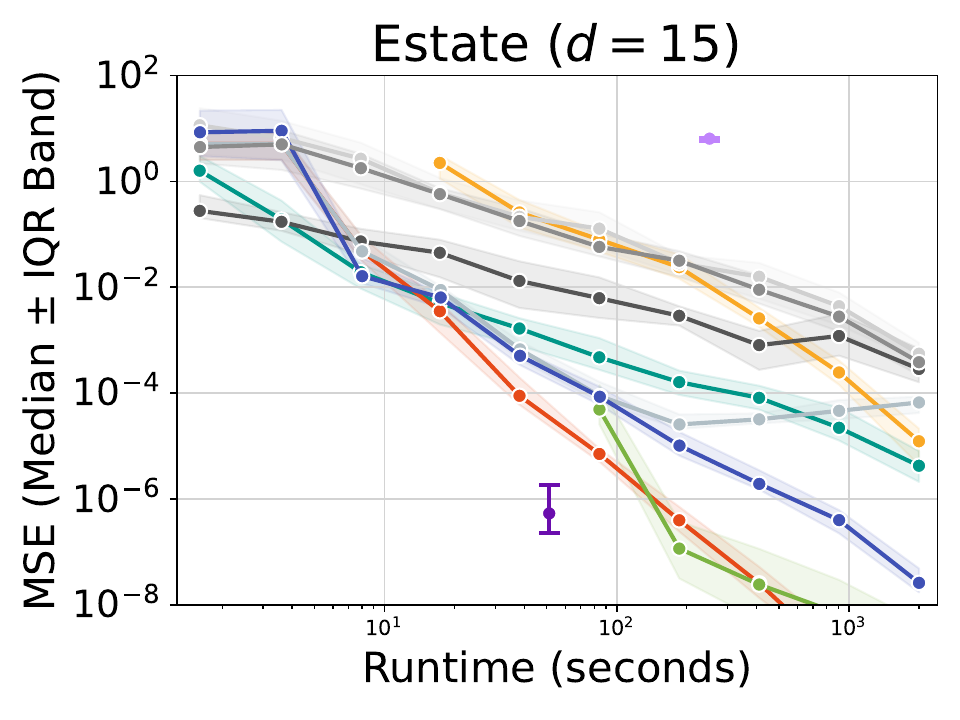}
    \end{minipage}
        \hfill
    \begin{minipage}{0.245\linewidth}
        \includegraphics[width=\linewidth]{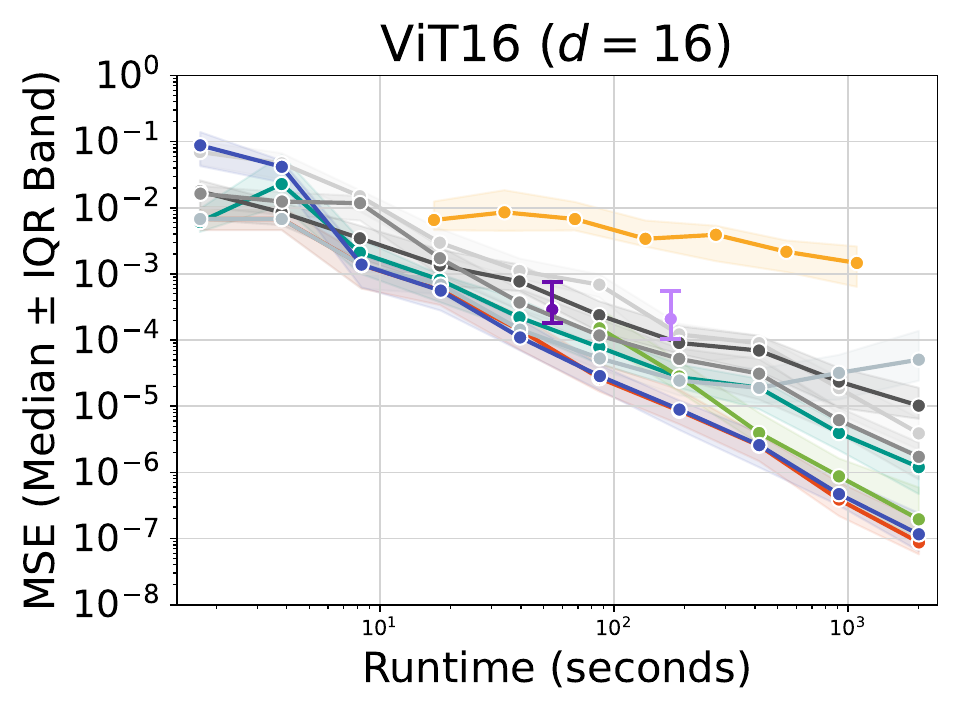}
    \end{minipage}
        \hfill
    \begin{minipage}{0.245\linewidth}
        \includegraphics[width=\linewidth]{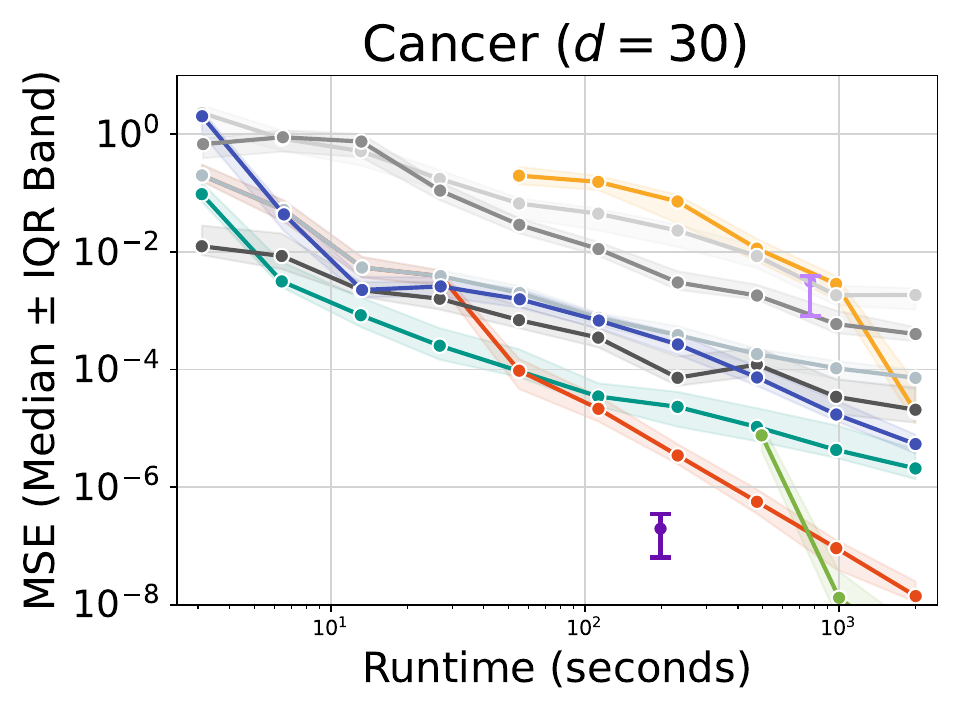}
    \end{minipage}
        \hfill
    \begin{minipage}{0.245\linewidth}
        \includegraphics[width=\linewidth]{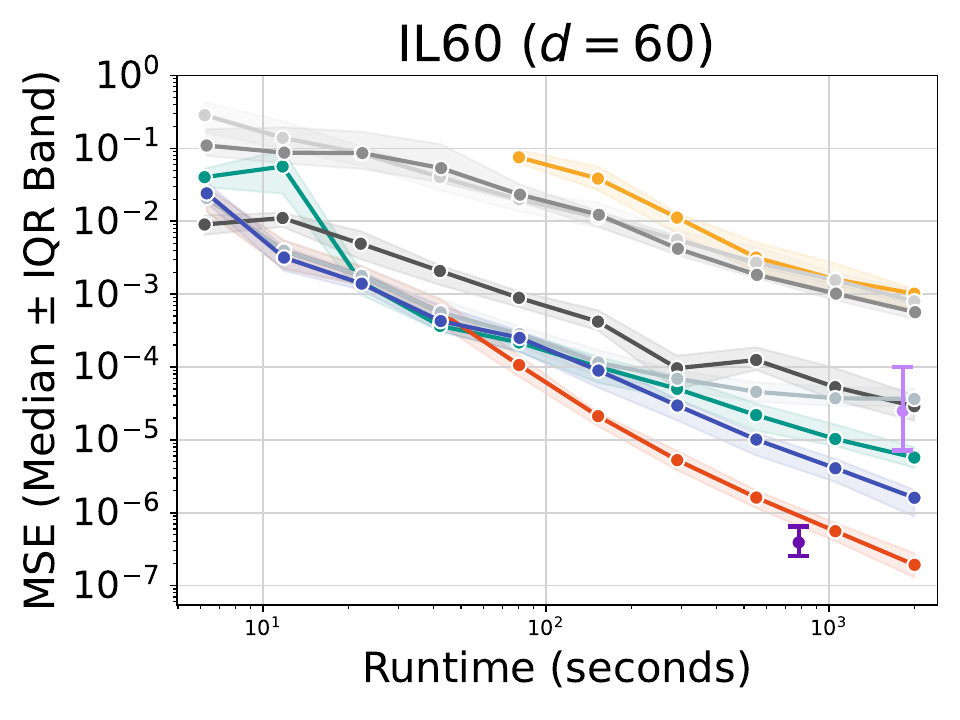}
    \end{minipage}
        \hfill
    \begin{minipage}{0.245\linewidth}
        \includegraphics[width=\linewidth]{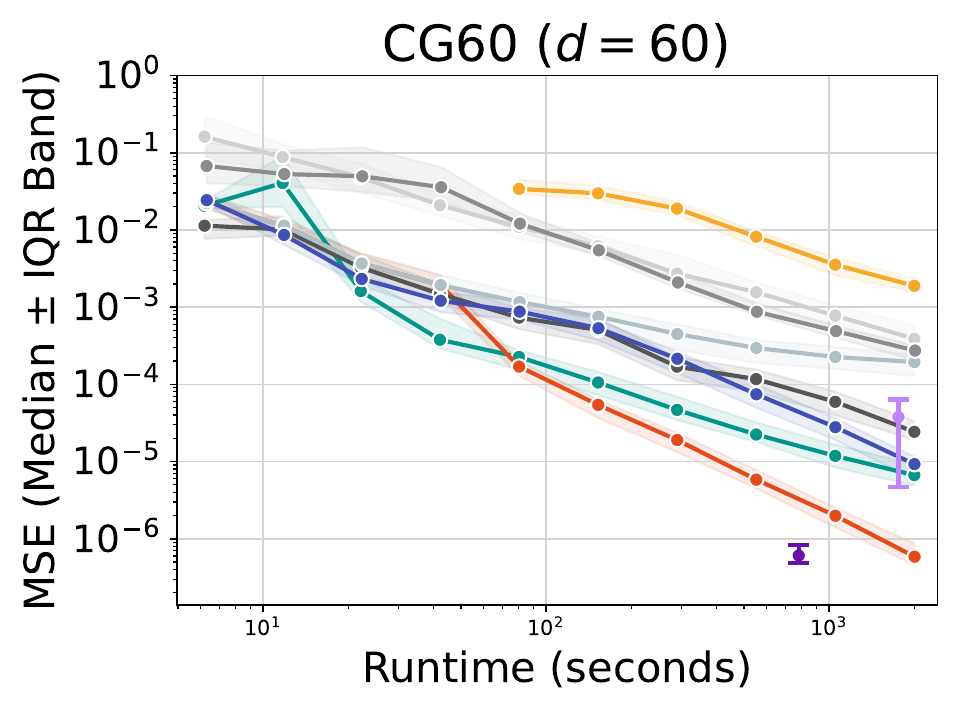}
    \end{minipage}
        \hfill
    \begin{minipage}{0.245\linewidth}
        \includegraphics[width=\linewidth]{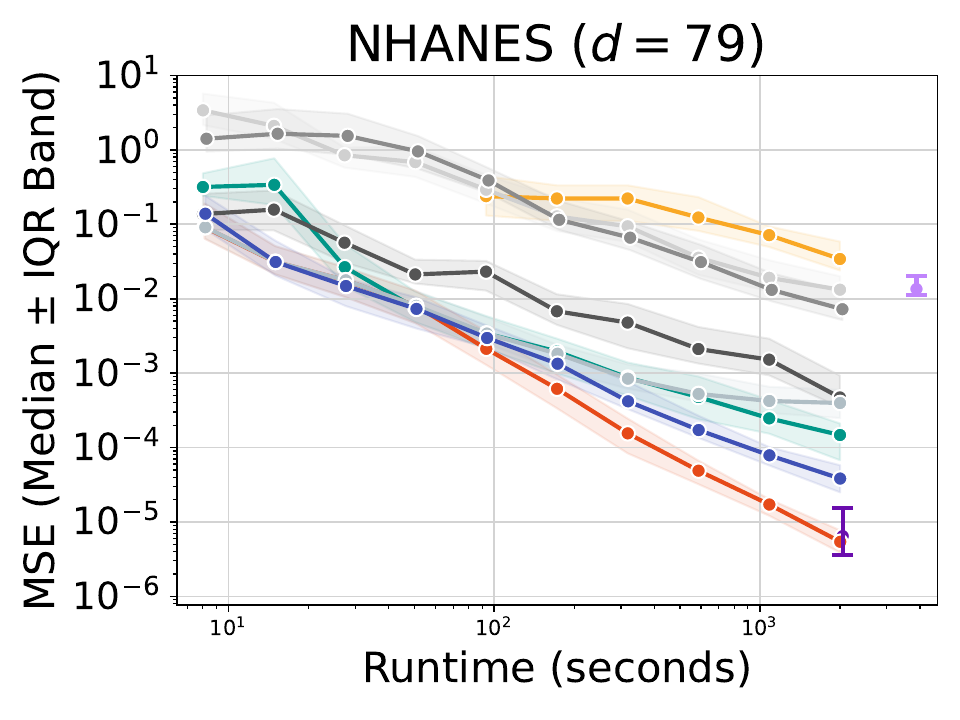}
    \end{minipage}
        \hfill
    \begin{minipage}{0.245\linewidth}
        \includegraphics[width=\linewidth]{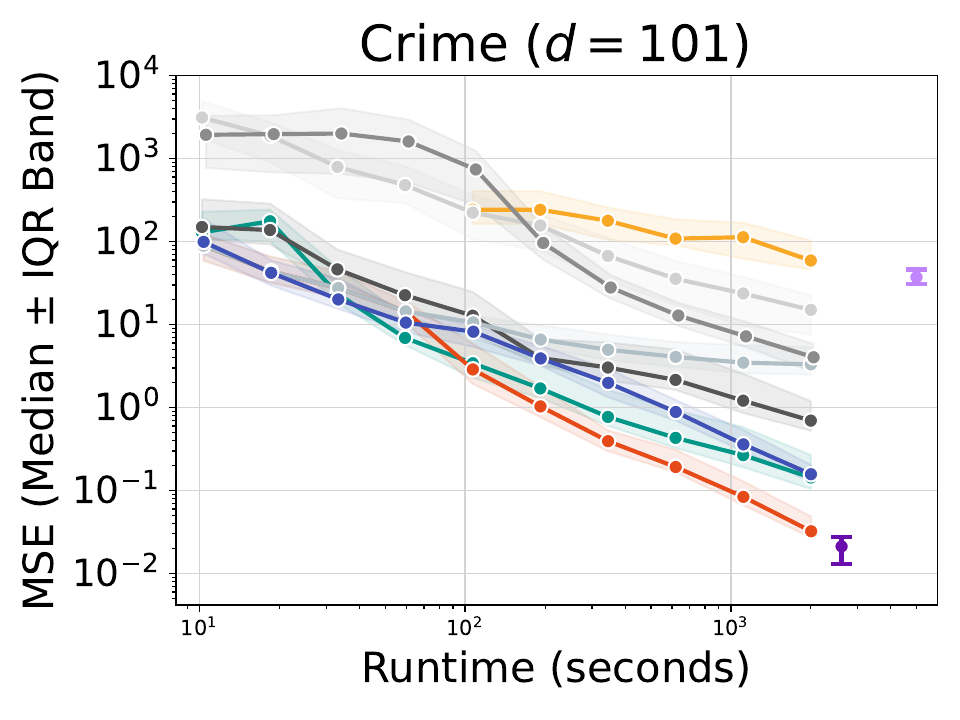}
    \end{minipage}
    \caption{Approximation quality measured by MSE (median and interquartile range (IQR) band) for different runtimes with evaluation cost of $0.1$ seconds.}
    \label{appx_fig_oddshap_runtime_cost0.1}
\end{figure*}

\begin{figure*}
\vspace{-0.5cm}
    \centering
    %legend 
    \includegraphics[width=.75\linewidth]{figures/approximation/legend.pdf}
    \\
    \begin{minipage}{0.245\linewidth}
        \includegraphics[width=\linewidth]{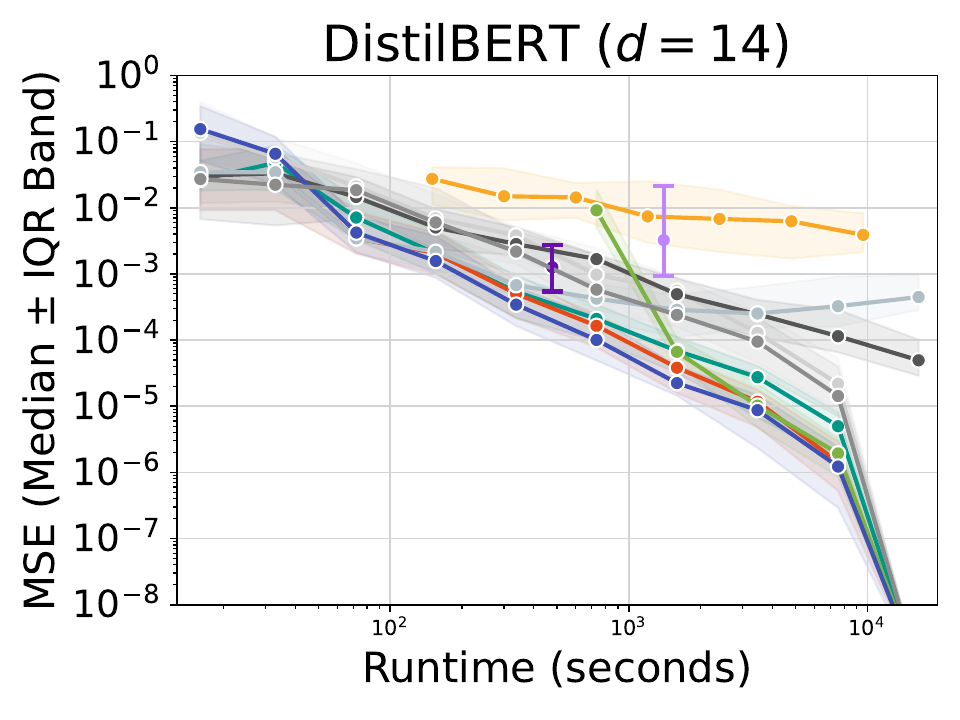}
    \end{minipage}
        \hfill
    \begin{minipage}{0.245\linewidth}
        \includegraphics[width=\linewidth]{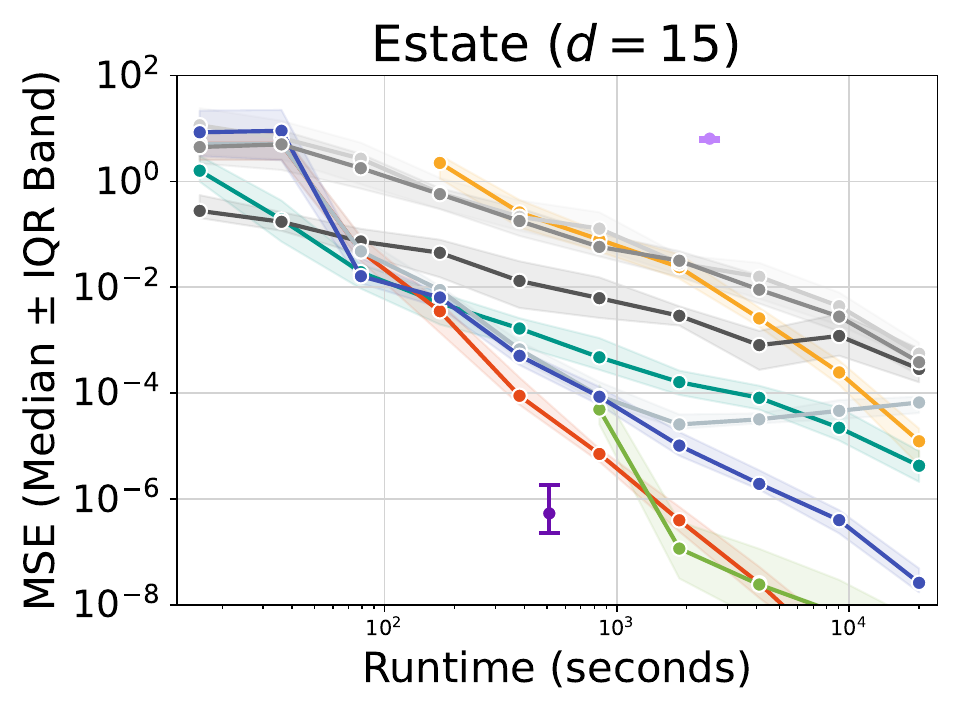}
    \end{minipage}
        \hfill
    \begin{minipage}{0.245\linewidth}
        \includegraphics[width=\linewidth]{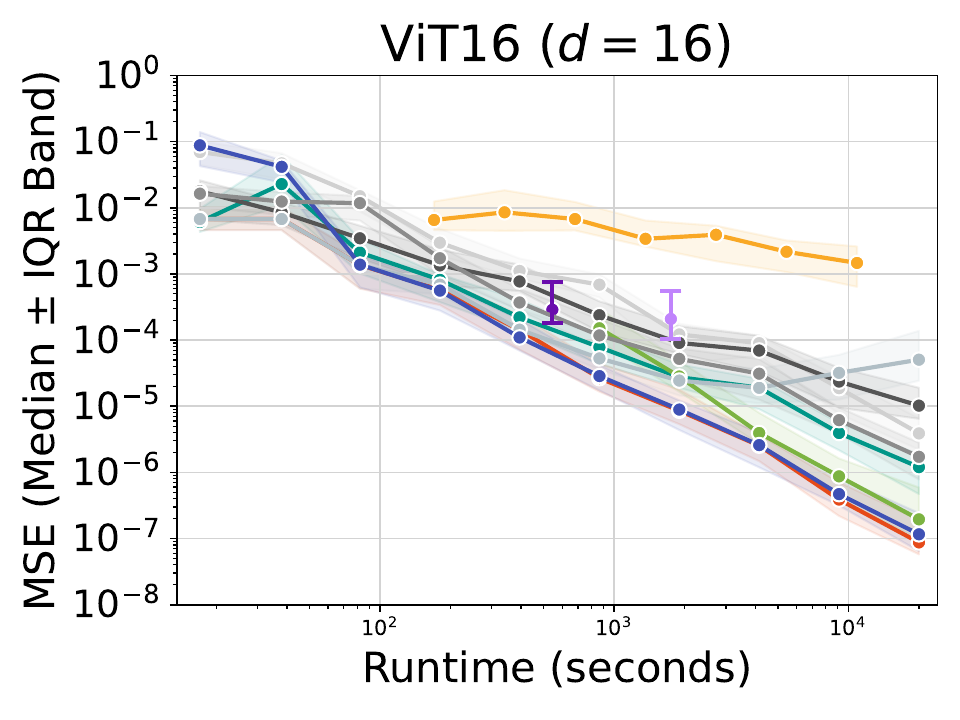}
    \end{minipage}
        \hfill
    \begin{minipage}{0.245\linewidth}
        \includegraphics[width=\linewidth]{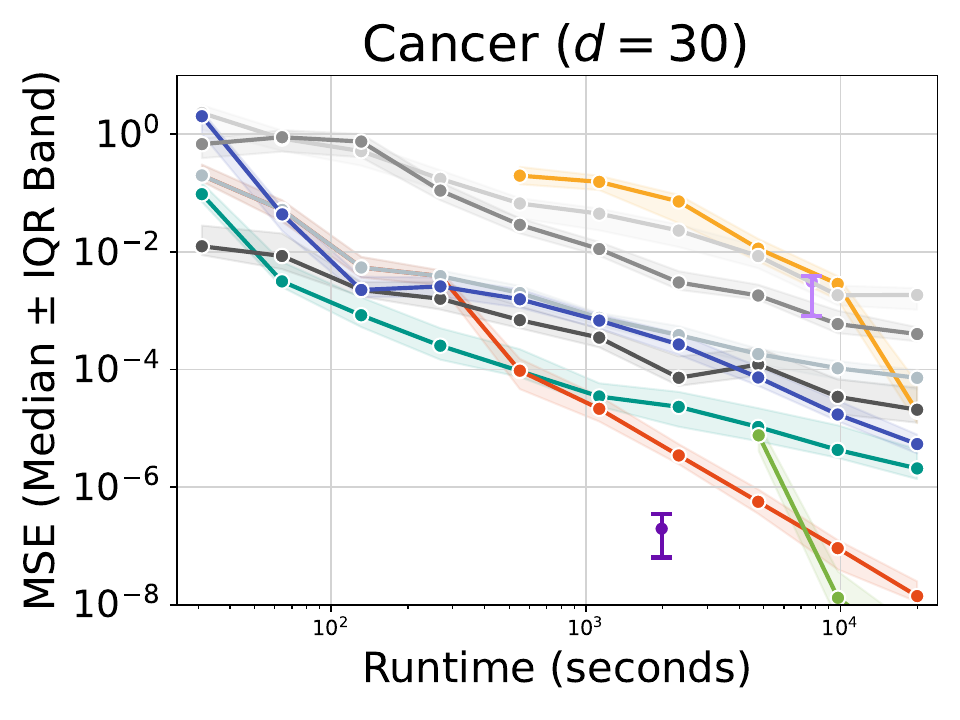}
    \end{minipage}
        \hfill
    \begin{minipage}{0.245\linewidth}
        \includegraphics[width=\linewidth]{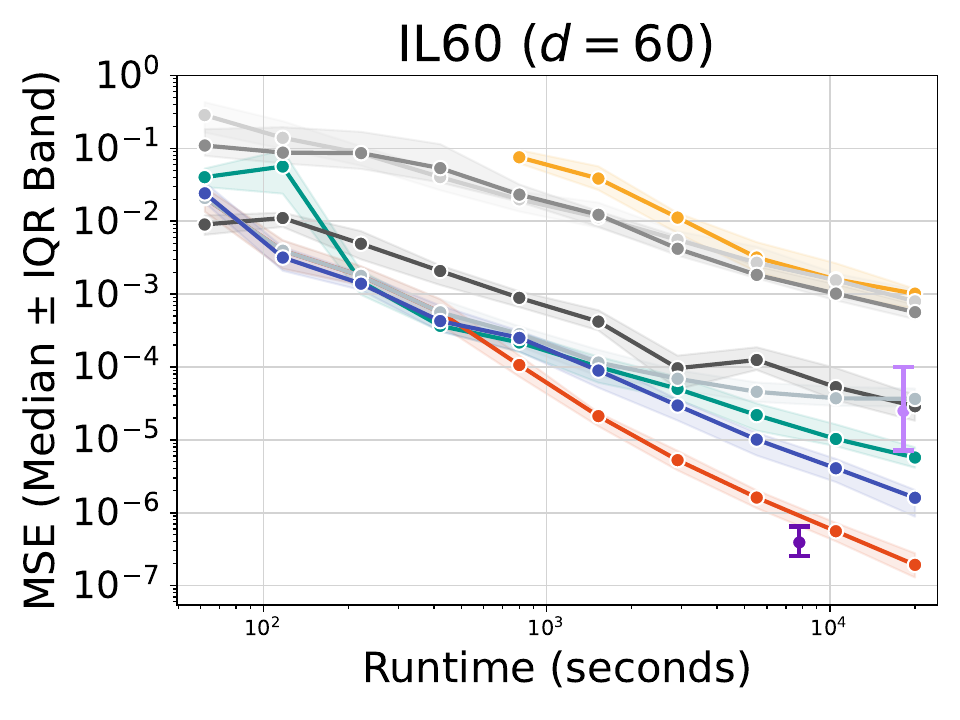}
    \end{minipage}
        \hfill
    \begin{minipage}{0.245\linewidth}
        \includegraphics[width=\linewidth]{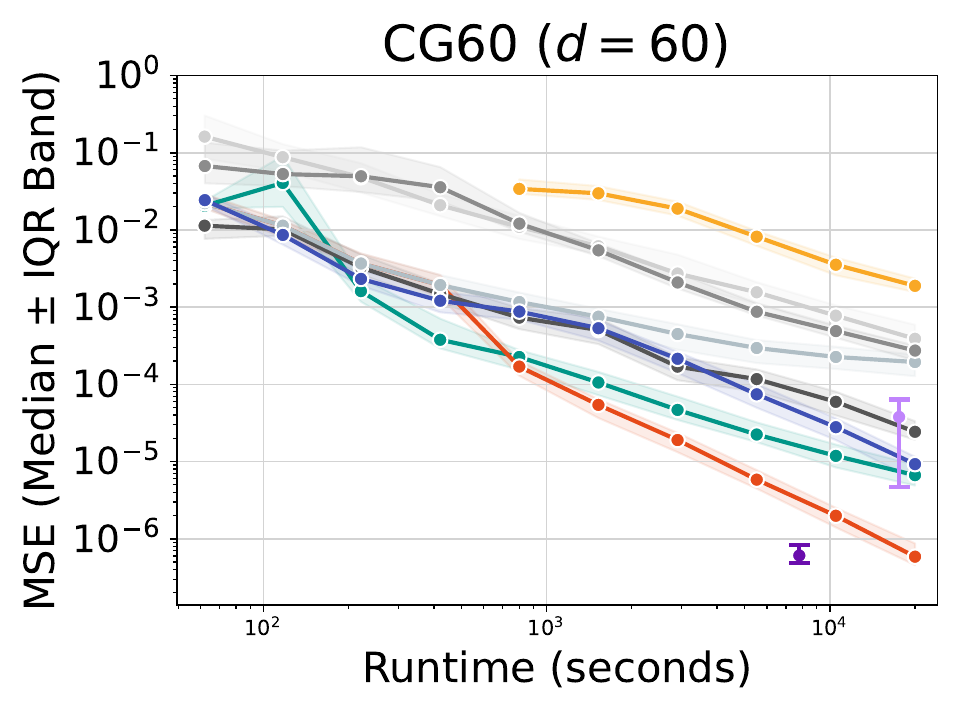}
    \end{minipage}
        \hfill
    \begin{minipage}{0.245\linewidth}
        \includegraphics[width=\linewidth]{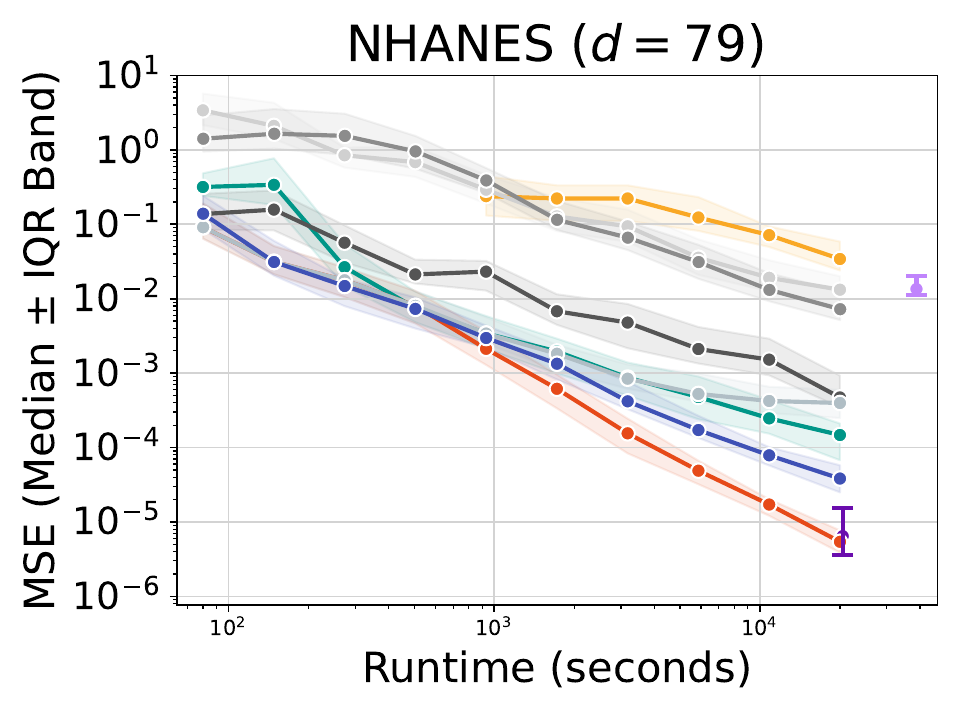}
    \end{minipage}
        \hfill
    \begin{minipage}{0.245\linewidth}
        \includegraphics[width=\linewidth]{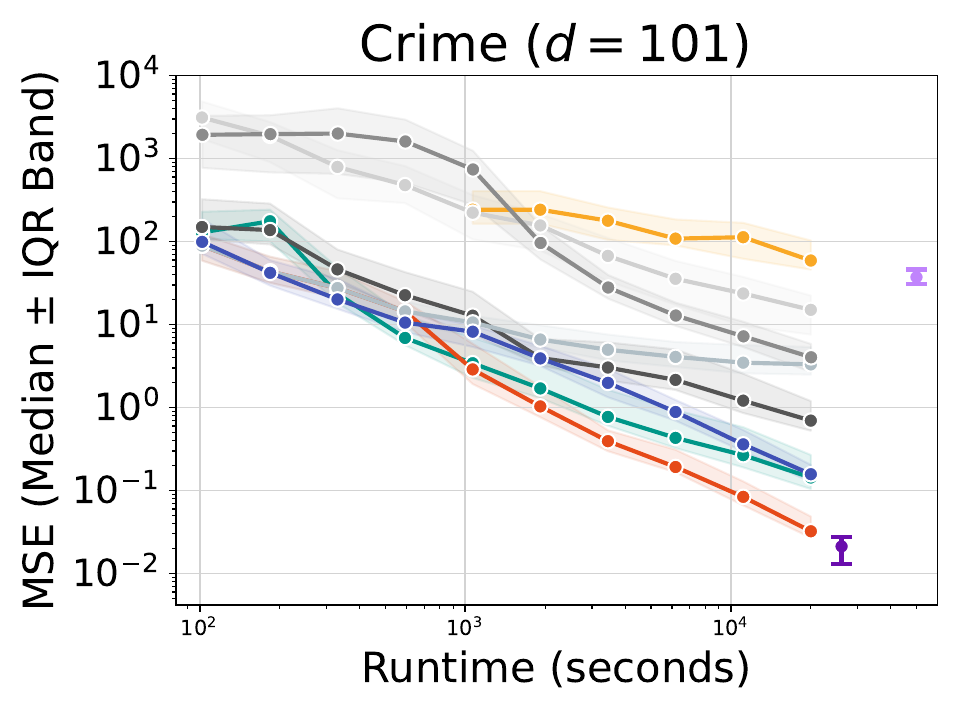}
    \end{minipage}
    \caption{Approximation quality measured by MSE (median and interquartile range (IQR) band) for different runtimes with evaluation cost of $1$ seconds.}
    \label{appx_fig_oddshap_runtime_cost1}
\end{figure*}

\subsection{Ablation Study on Number of Interactions}

In \cref{appx_fig_ablation}, we report the ablation on varying budget $m \in \{5000,10000,20000\}$ samples.
In all cases, particularly for the Estate value function, adding interaction terms yields performance increases compared with LeverageSHAP (OddSHAP without interactions).
If $\eta$ is too small ($\eta=2$), we observe a pattern similar to overfitting, which decreases the performance, since there is not enough budget to accurately fit each interaction term.
Lastly, for larger budgets, we observe stronger performance gains using interactions in OddSHAP.
\begin{figure*}
    \centering
    %legend 
    \includegraphics[width=.75\linewidth]{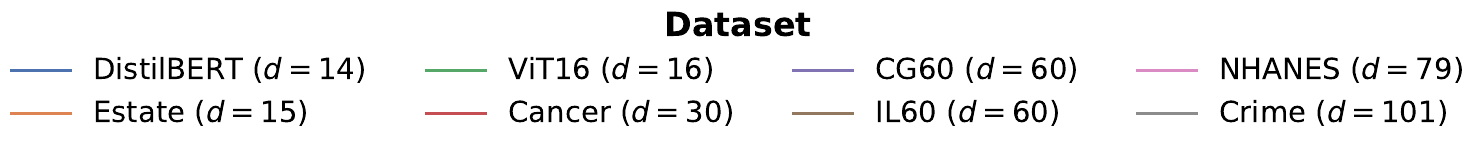}
    \\
    \vspace{-0.2cm}
  \begin{subfigure}{0.31\textwidth}
    \centering
    \includegraphics[width=\linewidth]{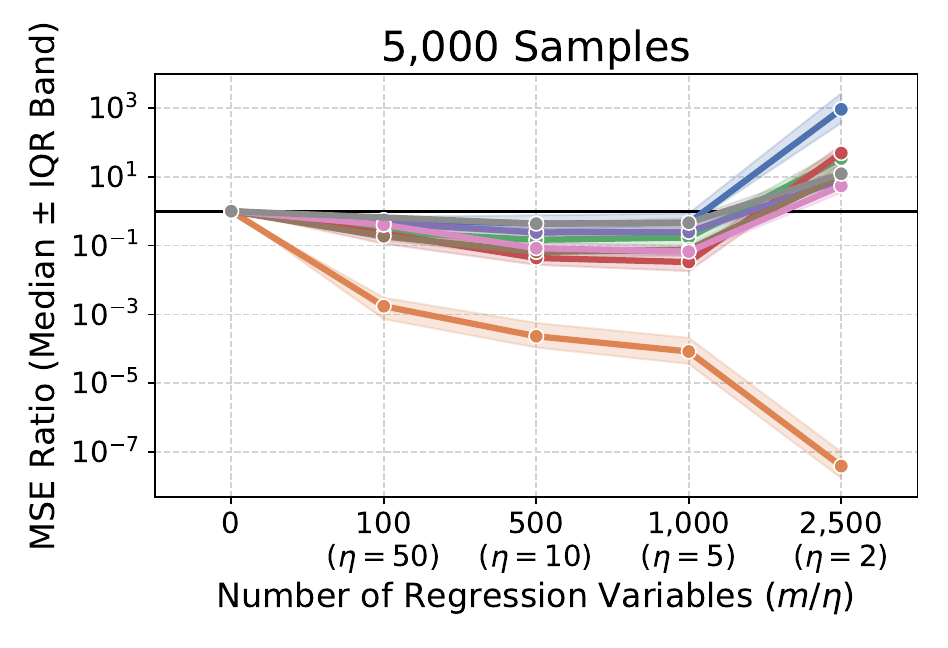}
    \vspace{-15pt}
    \caption{5,000 Samples}
  \end{subfigure}
  \quad % adds some horizontal space
  \begin{subfigure}{0.31\textwidth}
    \centering
    \includegraphics[width=\linewidth]{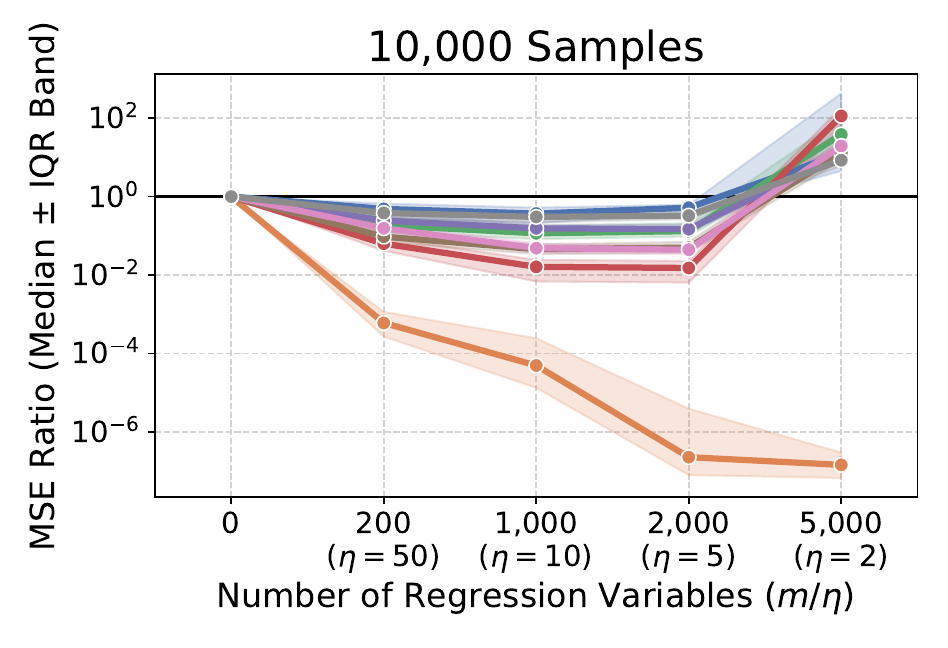}
    \vspace{-15pt}
    \caption{10,000 Samples}
  \end{subfigure}
  \quad % adds some horizontal space
  \begin{subfigure}{0.31\textwidth}
    \centering
    \includegraphics[width=\linewidth]{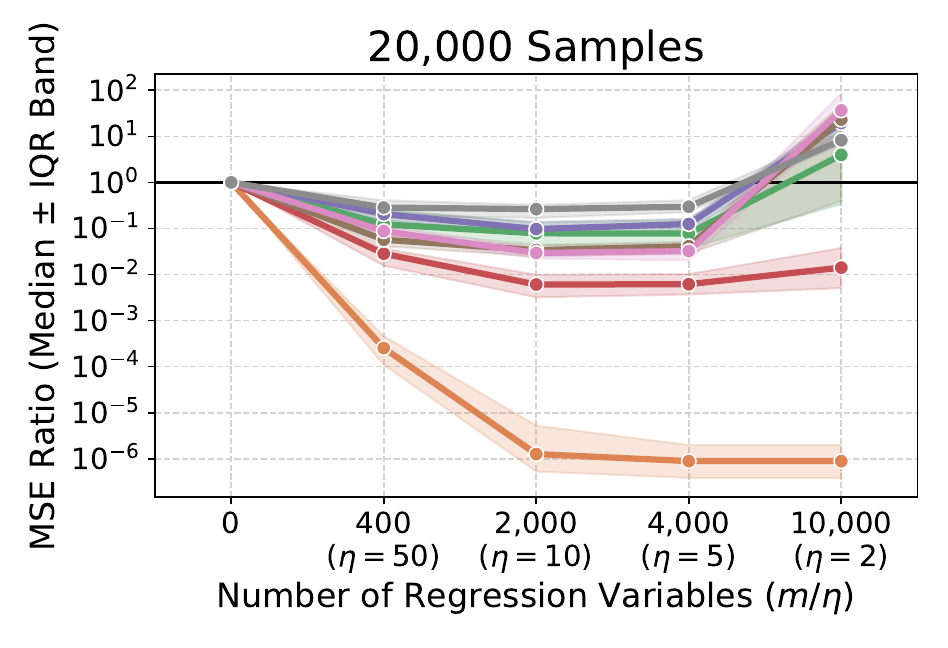}
    \vspace{-15pt}
    \caption{20,000 Samples}
  \end{subfigure}
  \caption{MSE Ratio (Median $\pm$ IQR Band) comparing Shapley MSE to base regression (no interactions) for different choices of $\eta$ under (left) 5,000 samples, (center) 10,000 samples, and (right) 20,000 samples.}
  \label{appx_fig_ablation}
\end{figure*}

\subsection{Ablation Study on Sampling Strategies and Odd Interaction Selection}
In \cref{appx_fig_ablation_sampling}, we present the ablation results across varying sample budgets $m \in \{5000, 10000, 20000\}$. Across all budgets, paired sampling consistently outperforms non-paired sampling. Within the non-paired strategy, retaining only odd interactions (\emph{Odd}) performs substantially worse than including both even and odd interactions (\emph{All}). Conversely, under paired sampling, estimation depends entirely on the selected odd interactions; any included even interactions mathematically cancel out and merely increase computational runtime. Consequently, the paired \emph{All} variant slightly underperforms paired \emph{Odd}. While both configurations would yield identical estimates if given the same set of odd interactions, the \emph{All} variant expends part of its interaction budget on superfluous even terms, thereby reducing the number of active odd interactions and marginally degrading estimation quality.

\begin{figure*}[t]
    \centering
    %legend 
    \includegraphics[width=.75\linewidth]{figures/interactions/ablation_legend.pdf}
    \\
    \vspace{-0.2cm}
  \begin{subfigure}{0.31\textwidth}
    \centering
    \includegraphics[width=\linewidth]{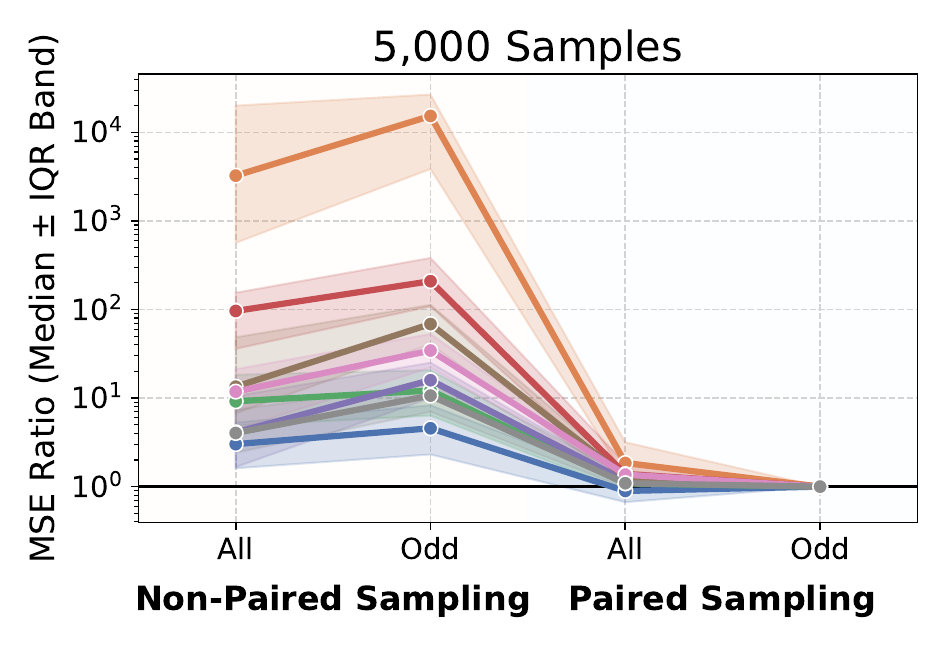}
    \vspace{-15pt}
    \caption{5,000 Samples}
  \end{subfigure}
  \quad % adds some horizontal space
  \begin{subfigure}{0.31\textwidth}
    \centering
    \includegraphics[width=\linewidth]{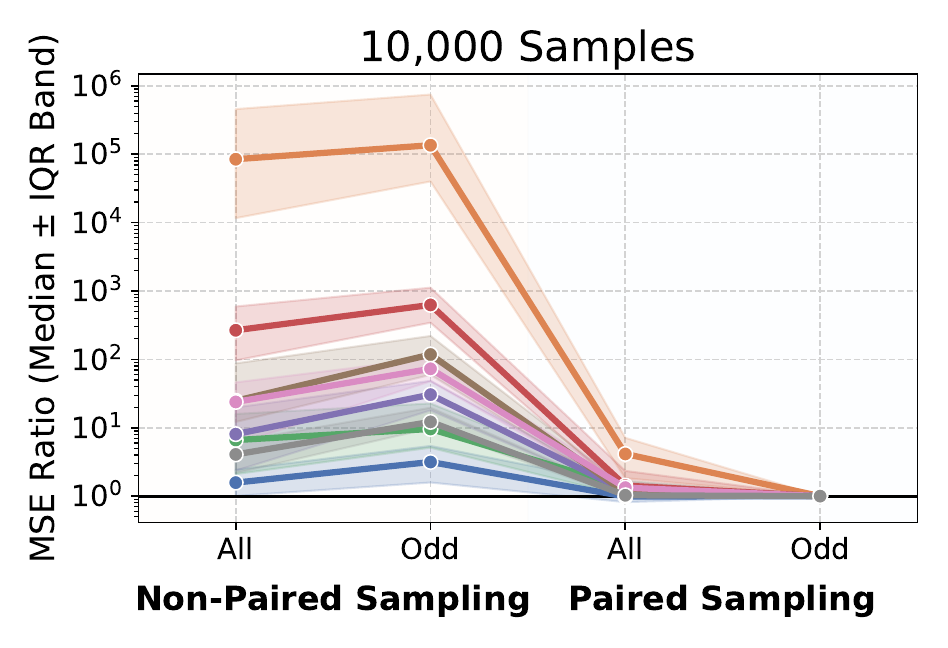}
    \vspace{-15pt}
    \caption{10,000 Samples}
  \end{subfigure}
  \quad % adds some horizontal space
  \begin{subfigure}{0.31\textwidth}
    \centering
    \includegraphics[width=\linewidth]{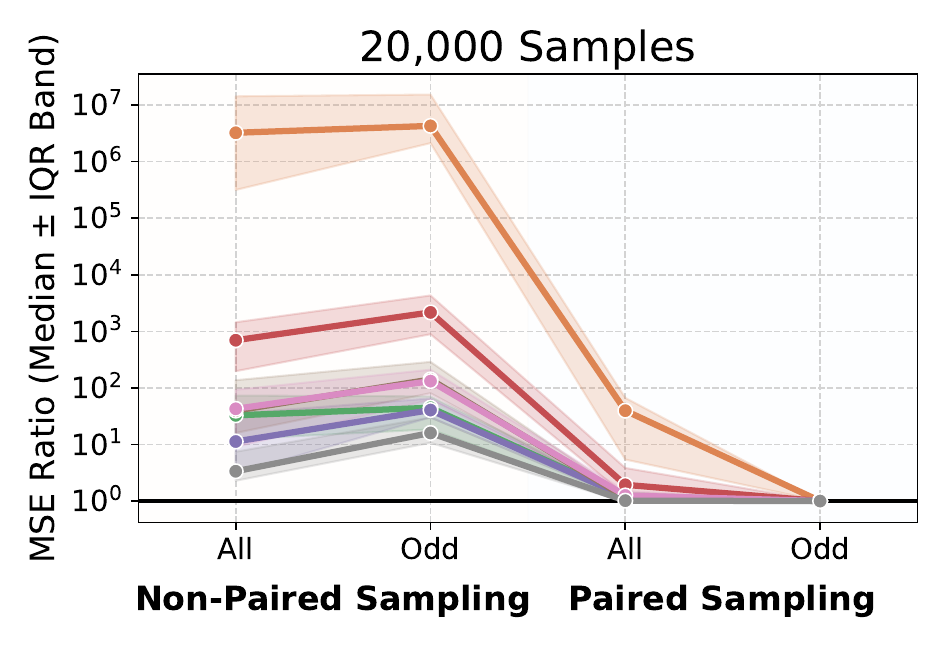}
    \vspace{-15pt}
    \caption{20,000 Samples}
  \end{subfigure}
  \caption{MSE Ratio (Median $\pm$ IQR Band) comparing Shapley MSE to OddSHAP for different choices of sampling (paired, non-paired) and interaction selection (all, odd only) under (left) 5,000 samples, (center) 10,000 samples, and (right) 20,000 samples.}
  \label{appx_fig_ablation_sampling}
\end{figure*}

\subsection{Ablation Study on Tree Parameters}
Since OddSHAP utilizes a Gradient Boosted Tree (GBT) proxy model as a subroutine to screen for high-impact odd interactions, we conduct an ablation study to analyze how sensitive the final estimation accuracy is to the proxy's hyperparameters. Specifically, we vary the primary LightGBM configuration parameters: the maximum tree depth (\texttt{max-depth}) and the number of boosting iterations (\texttt{num-trees}). Across all main paper experiments, the default values are set to $10$ and $100$, respectively.

To evaluate their impact, we systematically sweep these parameters across the following configurations:
\begin{itemize}[leftmargin=*]
    \item \textbf{Maximum Depth (\texttt{max-depth}):} We vary the depth constraint within $\in \{1,2,5,10,20\}$. Altering this parameter controls the maximum interaction order that a single tree can implicitly capture during the screening phase.
    \item \textbf{Number of Trees (\texttt{num-trees}):} We sweep the ensemble size within $\in \{10, 50, 100, 200\}$. This adjusts the expressive capacity and regularizing capability of the proxy model.
\end{itemize}

In \cref{appx_fig_ablation_trees} and \cref{appx_fig_ablation_depth}, we evaluate performance changes across configurations, and show the MSE ratios of the altered models relative to the default parameter baseline. 
Empirically, we observe that OddSHAP is remarkably robust to changes in these tree parameters, provided that the proxy is given sufficient capacity to capture higher-order dynamics (i.e., \texttt{max-depth} $\geq 5$). Restricting the depth too aggressively (e.g., \texttt{max-depth} $<5$) penalizes estimation accuracy. 
Conversely, increasing the parameters beyond the defaults yields diminishing returns. 
Ultimately, these findings validate that the default configuration (\texttt{max-depth} $=10$, \texttt{num-trees} $=100$) is stable and generalizes effectively across all value functions.
This confirms that our default configuration safely avoids overfitting while granting the proxy model sufficient capacity to capture crucial higher-order interaction signatures.  

\begin{figure*}[t]
    \centering
    %legend 
    \includegraphics[width=.75\linewidth]{figures/interactions/ablation_legend.pdf}
    \\
    \vspace{-0.2cm}
  \begin{subfigure}{0.31\textwidth}
    \centering
    \includegraphics[width=\linewidth]{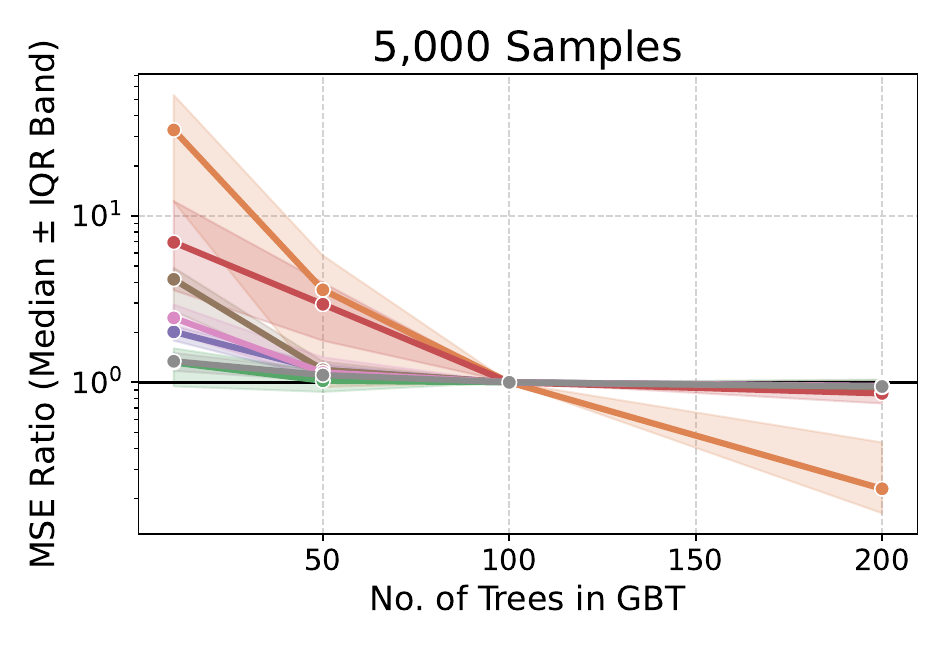}
    \vspace{-15pt}
    \caption{5,000 Samples}
  \end{subfigure}
  \quad % adds some horizontal space
  \begin{subfigure}{0.31\textwidth}
    \centering
    \includegraphics[width=\linewidth]{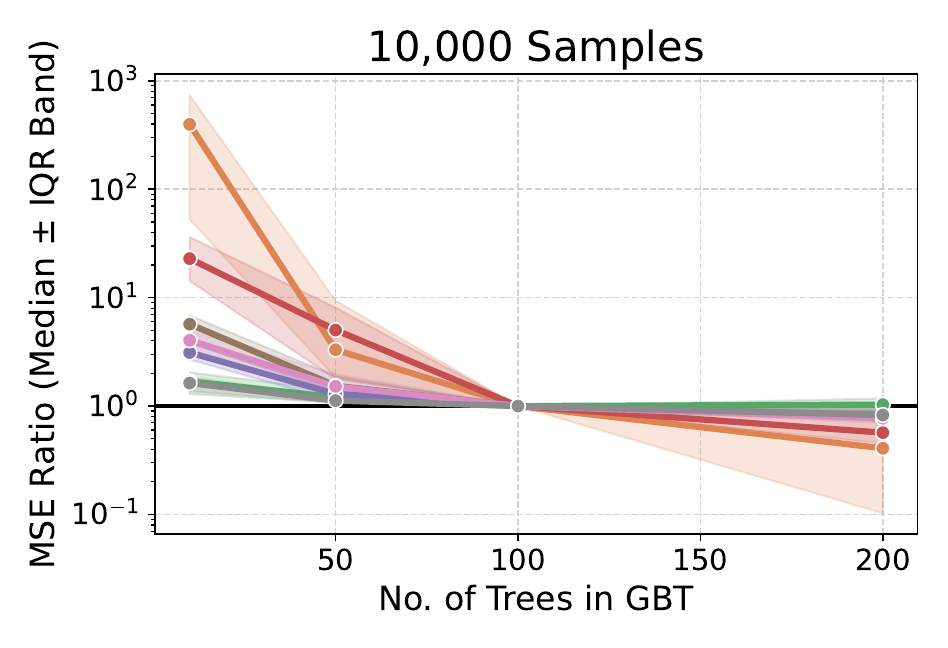}
    \vspace{-15pt}
    \caption{10,000 Samples}
  \end{subfigure}
  \quad % adds some horizontal space
  \begin{subfigure}{0.31\textwidth}
    \centering
    \includegraphics[width=\linewidth]{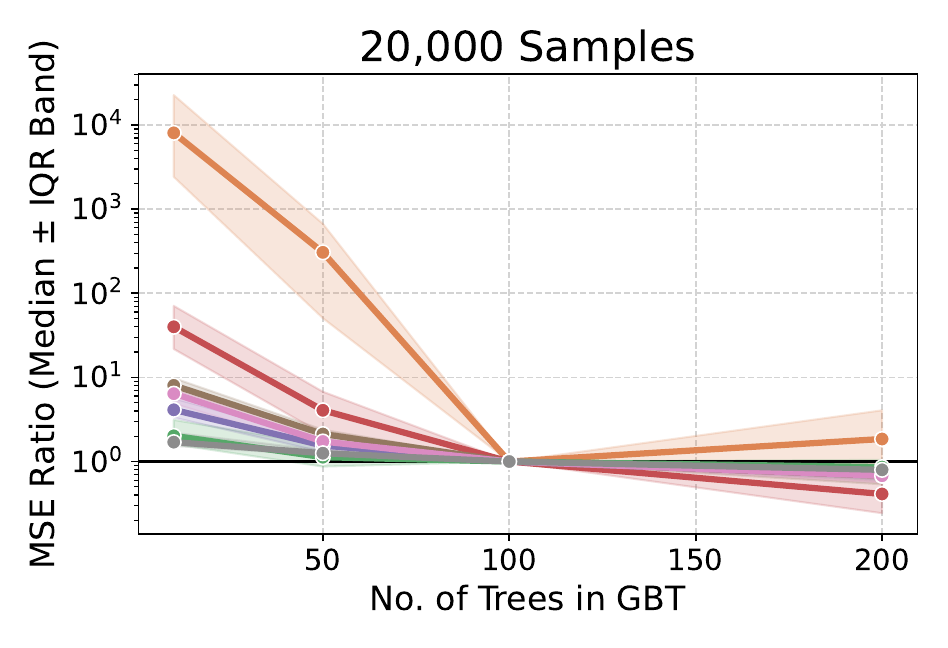}
    \vspace{-15pt}
    \caption{20,000 Samples}
  \end{subfigure}
  \caption{\textbf{Ablation  of  number of trees:} MSE Ratio (Median $\pm$ IQR Band) comparing Shapley MSE to LGBM default (\texttt{num-trees}=100) for different choices of \texttt{num-trees} under (left) 5,000 samples, (center) 10,000 samples, and (right) 20,000 samples. }
  \label{appx_fig_ablation_trees}
\end{figure*}

\begin{figure*}[t]
    \centering
    %legend 
    \includegraphics[width=.75\linewidth]{figures/interactions/ablation_legend.pdf}
    \\
    \vspace{-0.2cm}
  \begin{subfigure}{0.31\textwidth}
    \centering
    \includegraphics[width=\linewidth]{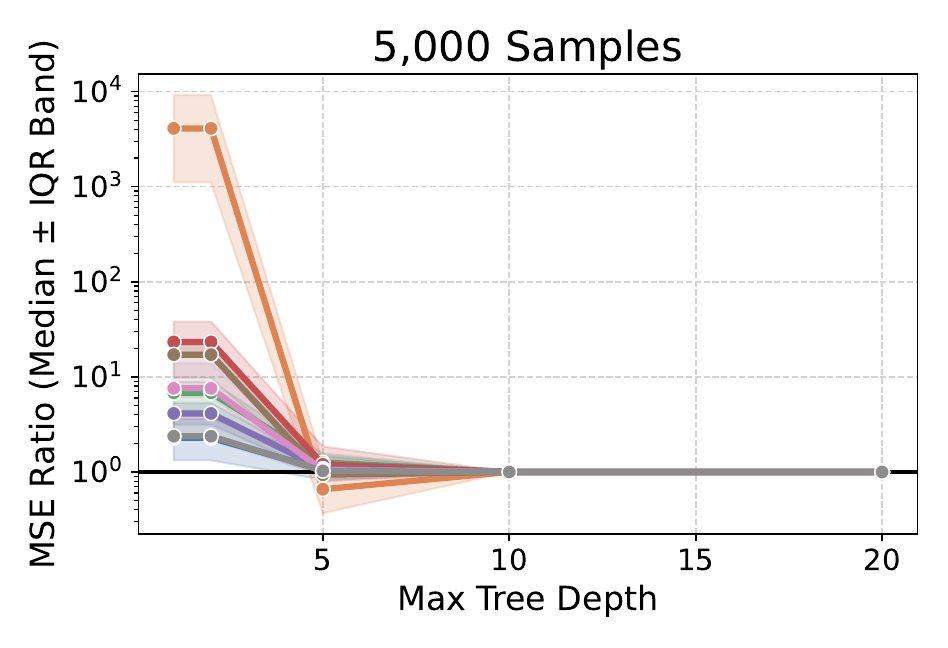}
    \vspace{-15pt}
    \caption{5,000 Samples}
  \end{subfigure}
  \quad % adds some horizontal space
  \begin{subfigure}{0.31\textwidth}
    \centering
    \includegraphics[width=\linewidth]{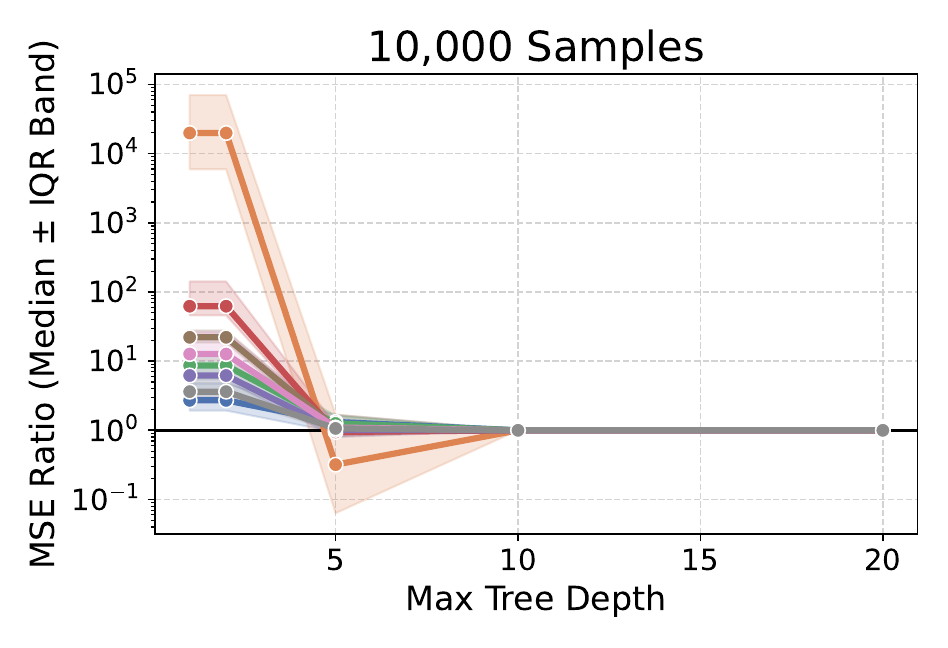}
    \vspace{-15pt}
    \caption{10,000 Samples}
  \end{subfigure}
  \quad % adds some horizontal space
  \begin{subfigure}{0.31\textwidth}
    \centering
    \includegraphics[width=\linewidth]{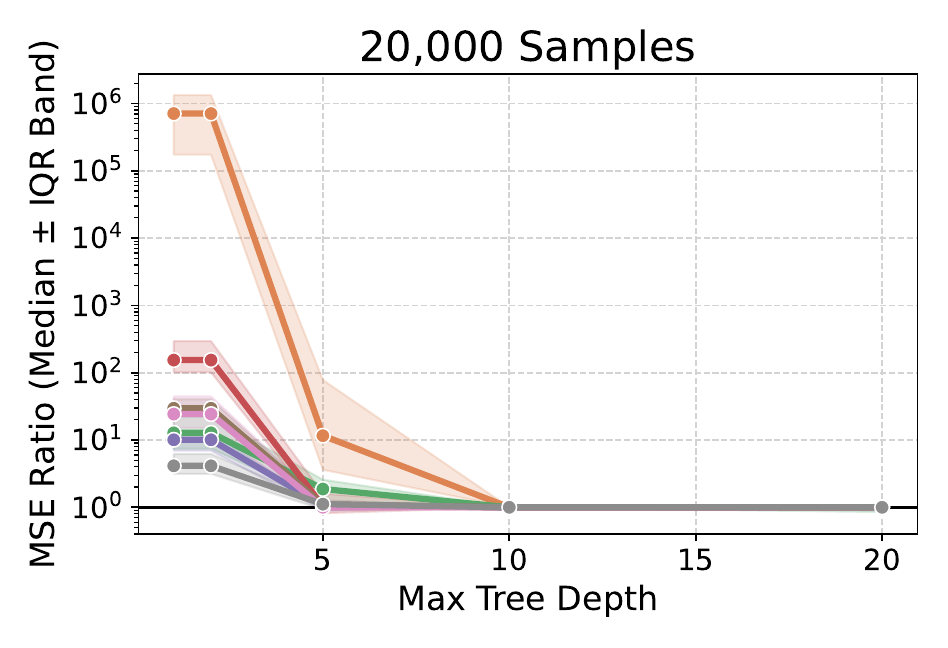}
    \vspace{-15pt}
    \caption{20,000 Samples}
  \end{subfigure}
  \caption{\textbf{Ablation  of  tree  depth:} MSE Ratio (Median $\pm$ IQR Band) comparing Shapley MSE to LGBM default (\texttt{max-depth}=10) for different choices of \texttt{max-depth} under (left) 5,000 samples, (center) 10,000 samples, and (right) 20,000 samples. }
  \label{appx_fig_ablation_depth}
\end{figure*}

\subsection{Comparison with Fractional Factorial Designs (FFD)}
\label{appx_sec_spectral_energy}

To comprehensively contextualize the performance of OddSHAP, we introduce a comparison against the framework by \citet{zhou2025fast}, which proposes a Shapley value estimator based on Fractional Factorial Designs (FFD). Both methods share a fundamental structural philosophy: they leverage the odd-dependence property of Shapley values to eliminate the need to approximate the irrelevant even components of the value function. However, they diverge completely in their approximation mechanics. While FFD applies a non-adaptive, predefined design of experiments that fundamentally relies on a low-degree truncation assumption (positing that all interaction effects beyond a specified order are exactly zero), OddSHAP utilizes a flexible, proxy-guided interaction screening process to capture high-impact interactions dynamically, regardless of their order.

\paragraph{Empirical Performance Comparison}
To empirically test this assumption across our benchmark tasks, we recover the Fourier spectrum for each value function using the sparse recovery algorithm SPEX \citep{kang.2025} under a large sample budget ($m=100,000)$. We then compute the proportion of total spectral energy (defined as the sum of squared Fourier coefficients) grouped explicitly by interaction order. The results are visualized in \cref{fig:spectral_energy}.

Our benchmark evaluations uncover distinct domain-specific trade-offs between the two estimators, heavily driven by their underlying mathematical assumptions:

\begin{itemize}[leftmargin=*]
    \item \textbf{Deep Learning Architectures (ViT and DistilBERT):} On complex neural network architectures, OddSHAP consistently and significantly outperforms the FFD-RD estimator. Because models in computer vision and natural language processing naturally yield dense, long-tailed Fourier representations with substantial higher-order spectral energy, the strict truncation assumption undergirding FFD breaks down. OddSHAP remains highly robust in these regimes, as its adaptive GBT proxy screens for and isolates these crucial higher-order terms rather than ignoring them.
    \item \textbf{Tree-Based Models (Tabular Domain):} Conversely, on tree-based value functions (e.g., Estate, Cancer, IL60, and CG60), FFD-RD outperforms OddSHAP, while achieving comparable accuracy on the high-dimensional NHANES and Crime tasks. As demonstrated by our spectral energy analysis, our tree-based value functions yield a rapid decay in interaction terms, making the strict truncation assumption incredibly accurate for tabular datasets. In this sparse landscape, FFD’s predetermined combinatorial design achieves superior data efficiency. 
    \item \textbf{Impact of Bias Correction:} Notably, \citet{zhou2025fast} propose a bias-corrected variant of their algorithm (FFD-RD with bias correction). Empirically, we find that this bias-correction step generally degrades estimation performance across almost all tasks, with the sole exception of the ViT image benchmark. 
\end{itemize}

Ultimately, these results highlight the exact practical advantage of OddSHAP: it offers an adaptive, domain-agnostic framework that remains robust and state-of-the-art on complex deep learning architectures without suffering cataclysmic failures, while maintaining highly competitive performance in sparse tabular domains where truncation happens to hold true.

\begin{figure}
    \centering
    \includegraphics[width=0.9\linewidth]{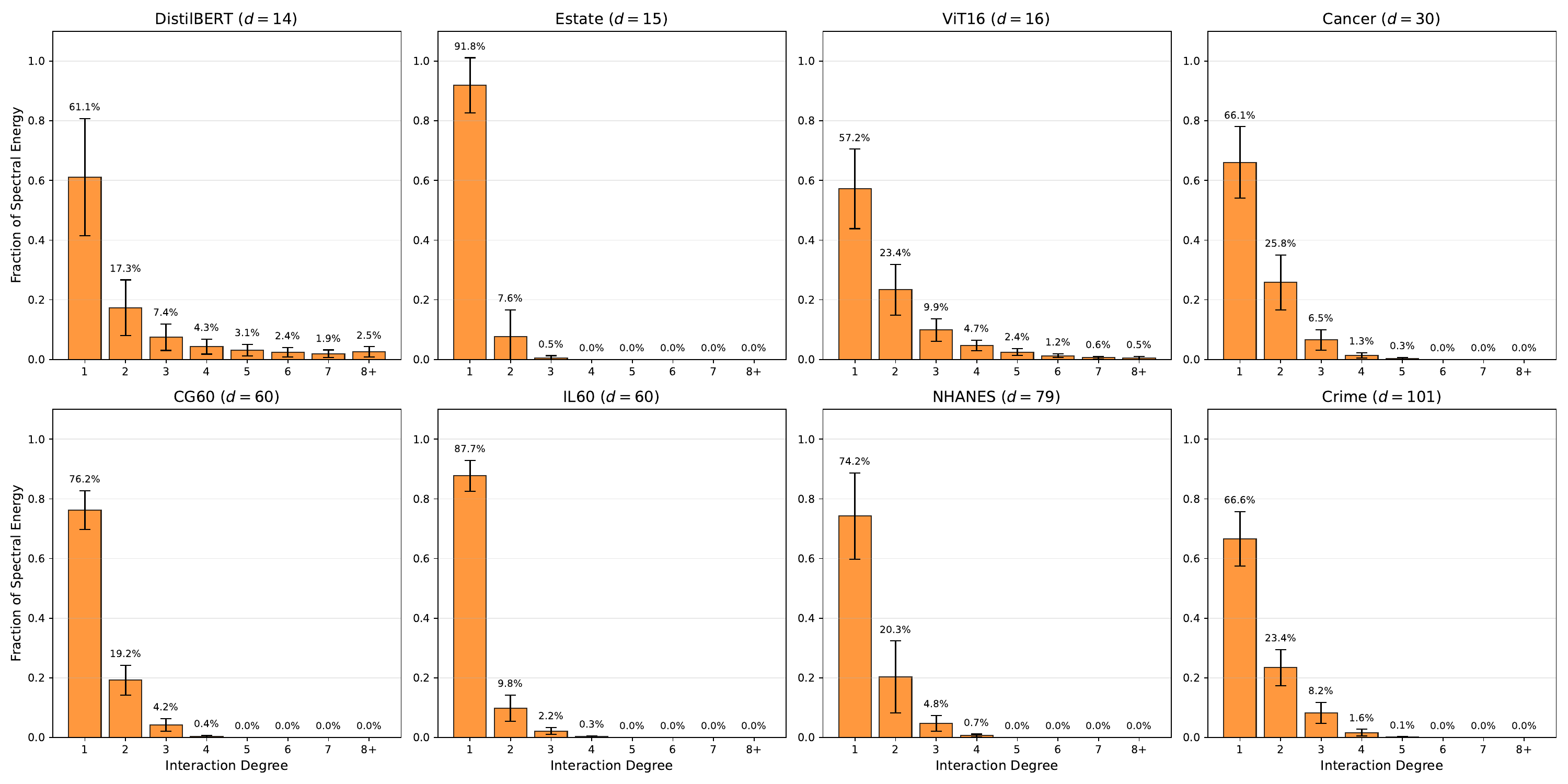}
        \caption{Proportion of spectral energy by interaction order. Tree-based value functions exhibit a rapid decay, with negligible energy beyond the 5th order. In contrast, deep learning architectures (DistilBERT and ViT) maintain significant higher-order spectral energy. This structural difference explains the performance degradation of FFD-RD on non-tabular domains, whereas OddSHAP remains robust across both regimes.}
    \label{fig:spectral_energy}
\end{figure}

\end{document}